\definecolor{eqbg}{gray}{0.95}
\newmdenv[
  backgroundcolor=eqbg,
  linewidth=0pt,
  innerleftmargin=1pt,
  innerrightmargin=1pt,
  innertopmargin=1pt,
  innerbottommargin=1pt,
  skipabove=1pt,
  skipbelow=1pt,
  nobreak=true,            % don’t break across pages
  font=\small              % or \footnotesize if you want even tighter
]{eqbox}
\newcommand{\ie}{i.e., }
\newcommand{\lip}[1]{\left\lVert {#1} \right\rVert_{\text{Lip}}}
\newcommand{\norm}[1]{\left\lVert {#1} \right\rVert}
\theoremstyle{plain}
\newtheorem{theorem}{Theorem}[section]
\newtheorem{proposition}[theorem]{Proposition}
\newtheorem{lemma}[theorem]{Lemma}
\theoremstyle{definition}
\newtheorem{definition}[theorem]{Definition}
\theoremstyle{remark}
\title{%\llarge 
On Vanishing Gradients, Over-Smoothing, and Over-Squashing in GNNs:\\ Bridging Recurrent and Graph Learning}
\author{Álvaro Arroyo$^{1,}$\thanks{Equal contribution. Correspondance to alvaro.arroyo@univ.ox.ac.uk}\quad Alessio Gravina$^{2,*}$ \quad Benjamin Gutteridge $^{1}$ \quad Federico Barbero $^{1}$  \\ \quad \textbf{Claudio Gallicchio}$^{2}$  \quad $\textbf{Xiaowen Dong}^{1}$ \quad $\textbf{Michael Bronstein}^{1,3}$ 
\quad $\textbf{Pierre Vandergheynst}^{4}$ 
\vspace{2mm} \\
$^{1}$University of Oxford \quad
$^{2}$University of Pisa \quad 
$^{3}$AITHYRA \quad
$^{4}$EPFL 
}
\begin{document}

\maketitle

\begin{abstract}    
    Graph Neural Networks (GNNs) are models that leverage the graph structure to transmit information between nodes, typically through the message-passing operation. While widely successful, this approach is well-known to suffer from representational collapse as the number of layers increases and insensitivity to the information contained at distant and poorly connected nodes. In this paper, we present a unified view of the appearance of these issues through the lens of \textit{vanishing gradients}, using ideas from linear control theory for our analysis. We propose an interpretation of GNNs as recurrent models and empirically demonstrate that a simple state-space formulation of a GNN effectively alleviates these issues \emph{at no extra trainable parameter cost}. Further, we show theoretically and empirically that (i) Traditional GNNs are by design prone to extreme gradient vanishing even after a few layers; (ii) Feature collapse is directly related to the mechanism causing vanishing gradients; (iii) Long-range modeling is most easily achieved by a combination of graph rewiring and vanishing gradient mitigation. We believe our work will help bridge the gap between the recurrent and graph learning literature and unlock the design of new effective models that benefit from both worlds. 
\end{abstract}

\section{Introduction}
Graph Neural Networks (GNNs) \cite{sperduti1993encoding, gori2005new, scarselli2008graph, NN4G, bruna2014spectral, defferrard2017convolutional} have become a widely used architecture for processing information on graph domains. Most GNNs operate via \textit{message passing}, where information is exchanged between neighboring nodes, giving rise to Message-Passing Neural Networks (MPNNs). Some of the most popular instances of this type of architecture include GCN \cite{kipf2017semisupervised}, GAT \cite{Velickovic2018GraphAN}, GIN \cite{xu2018powerful}, and GraphSAGE \cite{hamilton2017inductive}.

Despite its widespread use, this paradigm also suffers from some fundamental limitations. Most importantly, we highlight the issue of feature collapse (sometimes termed as \textit{over-smoothing}) \cite{nt2019revisiting,cai2020note}, where feature representations become exponentially similar as the number of layers increases,  and \textit{over-squashing} \cite{Alon2021OnTB,topping2021understanding,di2023over}, which describes the difficulty of propagating information across faraway nodes, as the exponential growth in a node's receptive field results in many messages being compressed into fixed-size vectors. Although these two issues have been studied extensively, and there exists evidence that they are trade-offs of each other \cite{giraldo2023trade}, 
there is no unified theoretical framework that explains \textit{why architectures that solve these problems work} and whether there exists a common \textit{underlying cause} that governs these problems.

In this work, we analyze these issues arising from GNN depth from the lens of \textbf{vanishing gradients}. In particular, we ask several questions about the appearance and consequences of this phenomenon in GNNs: (i) How prone are GNNs to gradient vanishing? (ii) What is the effect of gradient vanishing on node feature evolution? (iii) Can preventing vanishing gradients effectively mitigate over-squashing and enable long-range modelling? (iv) Can methods used in the non-linear \cite{hochreiter1997long, pascanu2013difficulty,beck2024xlstm} 
 and more recently, linear \cite{gu2023mamba,orvieto2023resurrecting} recurrent neural network (RNN) literature be effective at dealing with feature collapse and long-range modeling? We answer these questions by providing a novel perspective that bridges the gap between recurrent and graph learning.
 
\begin{wrapfigure}{r}{0.4\linewidth}  % ‘r’ for “right” wrap; width ≈ 45% of text width
  \vspace{-10pt}                       % tweak vertical placement if needed
  \centering
  \includegraphics[width=\linewidth]{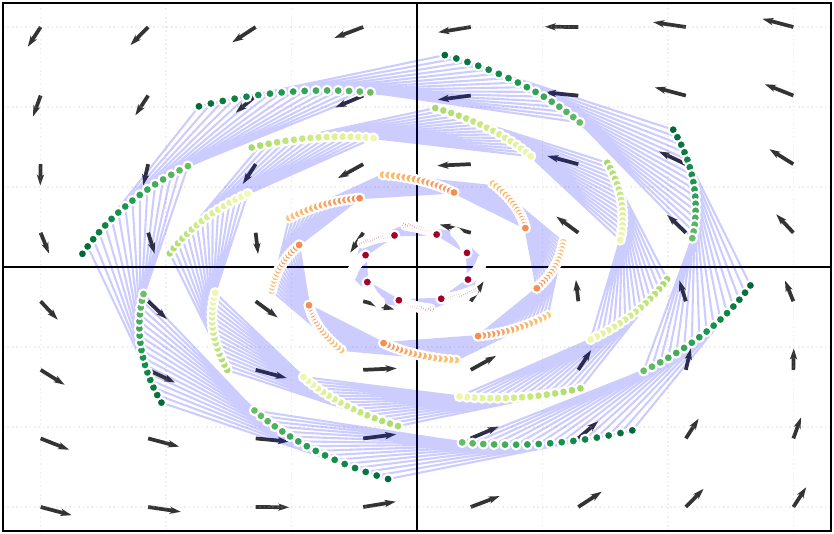}
  \caption{Latent evolution of 2-dimensional node features when passing through layers of a GNN-SSM with $\mathrm{eig}(\Lambda)\approx 1$. Node states evolve in a norm-preserving manner, without collapsing or contracting. The blue lines indicate how each node feature evolves across layers, \ie as more layers are added. Each circle corresponds to a node’s 2D feature. Circles connected by a blue line represent the same node across successive layers.The color of each circle encodes the norm of the node feature, and the vector field indicates direction.}

  \label{fig:illustration}
  \vspace{-15pt}                       % more fine‐tuning of space below
\end{wrapfigure}

\paragraph{Contributions and outline.} In summary, the contributions of this work are the following:
\begin{itemize}[itemsep=0.5pt,topsep=0.5pt]
    \item In Section \ref{sec: method}, we explore the connection between GNNs and recurrent models and demonstrate how classical GNNs are susceptible to a phenomenon we term \textit{extreme gradient vanishing}. We propose GNN-SSM, a GNN model that is written as a state-space model, allowing for better control of the spectrum of the Jacobian. 
    \item In Section \ref{sec:over-smoothing}, we show how vanishing gradients contribute to node feature evolution, providing a more precise explanation for why GNNs struggle with depth and how node feature collapse emerges via spectral analysis of the layer-wise Jacobians. We show that GNN-SSMs are able to \textit{exactly} control the rate of collapse of deep representations.
    \item In Section \ref{sec:over-squashing}, we show how vanishing gradients are related to over-squashing. We argue that over-squashing should therefore be tackled by approaches that both perform graph rewiring \emph{and} mitigate vanishing gradients. 
\end{itemize}
Overall, we believe that our work provides a new and interesting perspective on well-known problems that occur in GNNs, from the point of view of sequence models. We believe this to be an important observation connecting two very wide -- yet surprisingly disjoint -- bodies of literature. 
\vspace{-0.3cm}

\section{Background and Related Work}
\label{sec: Prelims}
\vspace{-3pt}
We start by providing the required background on graph and sequence models. We further discuss the existing literature on over-smoothing and over-squashing in GNNs and vanishing gradients in recurrent sequence models. 
\vspace{-0.2cm}
\subsection{Message Passing Neural Networks}

Let a graph $G$ be a tuple $(V, E)$ where $V$ is the set of nodes and $ E$ is the set of edges. We denote edge from node $u\in V$ to node $v\in V$ with $(u,v)\in E$. The connectivity structure of the graph is encoded through an \textit{adjacency matrix} defined as $\mathbf{A} \in \mathbb{R}^{n\times n}$ where $n$ is the number of nodes in the graph. We assume that $G$ is an undirected graph and that there is a set of feature vectors $\{\mathbf{h}_{v}\}_{v\in V} \in \mathbb{R}^d$, with each feature vector being associated with a node in the graph. Graph Neural Networks (GNNs) are functions $f_{\boldsymbol{\theta}}: (G, \{\mathbf{h}_{v}\}) \mapsto \mathbf{y}$, with parameters $\boldsymbol{\theta}$ trained via gradient descent and $\mathbf{y}$ being a node-level or graph level prediction label. These models typically take the form of Message Passing Neural Networks (MPNNs), which compute latent representation by composing $K$ layers of the following node-wise operation:
\begin{equation}
    \mathbf{h}_{u}^{(k)} = \phi^{(k)} ( \mathbf{h}_{u}^{(k-1)}, \psi^{(k)} ( \{ \mathbf{h}_{v}^{(k-1)} : (u,v)\in E \} ) ),
\end{equation}
for $k=\{1,\hdots, K\}$, where $\psi^{(k)}$ is a \textit{permutation-invariant aggregation function} and $\phi^{(k)}$ \textit{combines} the incoming messages from one's neighbors with the previous embedding of oneself to produce an updated representation. The most commonly used aggregation function takes the form 
\begin{equation}
    \psi^{(k)} ( \{ \mathbf{h}_{v}^{(k-1)} : (u,v)\in E \} )
    = \sum_{u}\Tilde{\mathbf{A}}_{uv}\mathbf{h}_{v}^{(k-1)}
\end{equation}
where $\Tilde{\mathbf{A}} = \mathbf{D}^{-\frac{1}{2}}\mathbf{A}\mathbf{D}^{-\frac{1}{2}}$, and $\mathbf{D}\in\mathbb{R}^{n\times n}$ is a diagonal matrix where $\mathbf{D}_{ii}=\sum_j\mathbf{A}_{ij}$. One can also consider a matrix representation of the features $\mathbf{H}^{(k)}\in\mathbb{R}^{n\times d_k}$. Throughout the paper, we will use the terms GNN and MPNN interchangeably, and will generally consider the most widely used instance of GNNs, which are Graph Convolutional Networks (GCNs)~\citep{kipf2017semisupervised} whose matrix update equation is given by:
\begin{equation}
\mathbf{H}^{(k)}=\sigma\big(\Hat{\mathbf{A}}\mathbf{H}^{(k-1)}\mathbf{W}^{(k-1)}\big),
    \label{eq:gcn}
\end{equation}
where $\Hat{\mathbf{A}}=\left(\mathbf{D}+\mathbf{I}\right)^{-1/2} \left(\mathbf{A} + \mathbf{I}\right)\left(\mathbf{D}+\mathbf{I}\right)^{-1/2}$ is the adjacency matrix with added self connections through the identity matrix $\mathbf{I}$, and $\sigma(\cdot)$ is a nonlinearity. Our analysis also applies to Graph Attention Networks (GATs) \cite{Velickovic2018GraphAN}, where the fixed normalized adjacency is replaced by a learned adjacency matrix which dynamically modulates connectivity while preserving the key spectral properties used in our analysis.\vspace{-0.3cm}

\subsection{Recurrent Neural Networks}\vspace{-0.2cm}

A Recurrent Neural Network (RNN) is a function $g_{\boldsymbol{\theta}}: \mathbf{x} \mapsto \mathbf{y}$, where  $\mathbf{x} = (\mathbf{x}^{(1)}, \mathbf{x}^{(2)}, \ldots, \mathbf{x}^{(K)})$ and $\mathbf{y} = (\mathbf{y}^{(1)}, \mathbf{y}^{(2)}, \ldots, \mathbf{y}^{(K)})$, where $\mathbf{x}^{(k)} \in \mathbb{R}^d$ is the input vector at time step $k$ and $\mathbf{y}^{(k)} \in \mathbb{R}^m$ is the output vector at time step $k$, and $\boldsymbol{\theta}$ are learnable parameters. RNNs are designed to handle sequential data by maintaining a hidden state $\mathbf{h}^{(k)}\in\mathbb{R}^{d_h}$ that captures information from previous time steps. This hidden state\footnote{Note that we purposefully maintain the same notation for the hidden state as the one in the previous subsection for node features.} allows the network to model sequential dependencies in the data. The update equations for the hidden state of the RNN are as follows:
\begin{equation}
    \mathbf{h}^{(k)} = \sigma(\mathbf{W}_h \mathbf{h}^{(k-1)} + \mathbf{W}_x \mathbf{x}^{(k)}).
    \label{eq:rnn}
\end{equation}

This type of approach has deep connections with ideas from dynamical systems \cite{strogatz2018nonlinear} and chaotic systems \cite{engelken2023lyapunov}. These ideas have become more relevant in recent work \cite{Gu2019LearningMR, orvieto2023resurrecting}, where the nonlinearity in \eqref{eq:rnn} is removed in the interest of parallelization and the ability to directly control the dynamics of the system through the eigenvalues of state transition matrix $\mathbf{W}_h$. We note that these types of approaches are also popular in the \textit{reservoir computing} literature \cite{jaeger2001echo}, where the state transition matrix is left untrained and more emphasis is placed on the dynamics of the model.
\vspace{-0.2cm}

\subsection{The Vanishing and Exploding Gradient Problem}

Both RNNs and GNNs are trained using the chain rule. One can backpropagate gradients w.r.t. the weights at $i^{\text{th}}$ layer of a $K$-layer GNN or RNN as \vspace{-0.05cm}
\begin{equation}
	\frac{\partial\mathcal{L}}{\partial\mathbf{\boldsymbol{\theta}}^{(i)}}
	=
	\frac{\partial\mathcal{L}}{\partial\mathbf{H}^{(K)}}
	\Bigg (\prod_{k=i+1}^{K}
	\frac{\partial\mathbf{H}^{(k)}}{\partial\mathbf{H}^{(k-1)}} \Bigg )
	\frac{\partial\mathbf{H}^{(i)}} {\partial\mathbf{\boldsymbol{\theta}}^{(i)}},
\end{equation}\vspace{-0.05cm}
where matrix $\mathbf{H}^{(k)}$ in an RNN will contain a single state vector. As identified by \citep{pascanu2013difficulty}, a major issue in training this type of models arises from the \textit{product Jacobian}, given by:  
\begin{align}
	\mathbf{J}
	&=
	\prod_{k=i+1}^{K}
	\frac{\partial\mathbf{H}^{(k)}}{\partial\mathbf{H}^{(k-1)}} = \prod_{k=i+1}^{K}\mathbf{J}_k.
\end{align}
In general, we have that if $||\mathbf{J}_k||_2 \approx \lambda$ for all layers then $||\mathbf{J}||_2 \le \lambda^{K-i}$. This means that we require $\lambda \approx 1$ for gradients to neither explode nor vanish, a condition also known as \textit{edge of chaos}. 
\vspace{-0.2cm}

\subsection{Over-smoothing, Over-squashing, and Vanishing Gradients in GNNs}
\vspace{-0.1cm}
\paragraph{Node Feature Collapse and Over-smoothing.} GNNs are known to not perform well at large depths \cite{li2019deepgcns}. This issue has been heavily linked with the issue of \textit{over-smoothing} \cite{cai2020note, oono2020graph}, which describes the tendency of GNNs to produce \emph{smoother} representations as more and more layers are added. In Section \ref{sec:over-smoothing}, we study this issue from the lens of vanishing gradients and show that \textbf{the performance degradation of GNNs has a much more simple explanation}: it occurs due to the norm-contracting nature of GNN updates, which is also intimately related to some notions of smoothing presented in the literature \cite{rusch2023survey}.\footnote{We elaborate on the definition we use in the paper around over-smoothing as it related to the broader literature in Appendix \ref{app:oversmoothing_explanation}.}
\vspace{-0.15cm}

\paragraph{Over-squashing.} Over-squashing \cite{Alon2021OnTB, topping2021understanding, di2023over, barbero2024transformers} was originally introduced as a \textit{bottleneck} resulting from `squashing' into node representations amounts of information that are growing potentially exponentially quickly due to the topology of the graph. It is often characterized by the quantity $\left \lVert \partial \mathbf{h}_u^{(K)} / \partial \mathbf{h}_v^{(0)} \right \rVert$ being low, implying that the final representation of node $u$ is not very sensitive to the initial representation at some other node $v$. While the relationship between over-squashing and vanishing gradients was hinted at by \citep{di2023over}, in Section \ref{sec:over-squashing} we explore this relationship in detail by showing that \textbf{techniques aimed to mitigate vanishing gradients in sequence models help to mitigate over-squashing in GNNs}.

\paragraph{Vanishing gradients.}
Vanishing gradients have been extensively studied in RNNs \cite{bengio1994learning, hochreiter1997long, pascanu2013difficulty}, while this problem has been surprisingly mostly overlooked in the GNN community. For a detailed discussion on the relevant literature, we point the reader to the Appendix~\ref{app:supplementary_related_work}. We simply highlight that there are works that have seen success in taking ideas from sequence modelling \cite{rusch2022graph, gravina2022anti, wang2024mamba,behrouz2024graphmamba, kiani2024unitary} or signal propagation \cite{epping2024graph, scholkemper2024residual} and bridging them to GNNs, but they rarely have a detailed discussion on vanishing gradients. In Section \ref{sec:over-squashing}, we show that \textbf{vanishing gradient mitigation techniques from RNNs seem to be very effective towards the mitigation of feature collapse and over-squashing in GNNs} and argue that the two communities have very aligned problems and goals.

\vspace{-0.2cm}

\section{Connecting Sequence and Graph Learning through State-Space Models}
\label{sec: method}

\vspace{-0.2cm}

In this section, we study GNNs from a sequence model perspective. We show that the most common classes of GNNs are more prone to vanishing gradients than feedforward or recurrent networks due to the spectral contractive nature of the normalized adjacency matrix. We then propose GNN-SSMs, a state-space-model-inspired construction of a GNN that allows more direct control of the spectrum.
\vspace{-0.3cm}
\subsection{Similarities and differences between learning on sequences and graphs}
\vspace{-0.1cm}
The GNN architectures that first popularized deep learning on graphs \citep{bruna2014spectral, defferrard2017convolutional} were initially presented as a generalization of Convolutional Neural Networks (CNNs) to irregular domains. GCNs \citep{kipf2017semisupervised} subsequently restricted the architecture in \citep{defferrard2017convolutional} to a one-hop neighborhood. While this is still termed “convolutional” (due to weight sharing across nodes), the iterative process of aggregating information from each node’s neighborhood can also be viewed as \textit{recurrent-like} state updates.

If we consider an RNN unrolled over time, it forms a directed path graph feeding into a state node with a self-connection, making it a special case of a GNN. Conversely, node representations in GNNs can be stacked using matrix vectorization, allowing us to interpret GNN layer operations as iterative state updates. This connection suggests that the main difficulty faced by RNNs, namely the vanishing and exploding gradients problem \citep{pascanu2013difficulty}, may likewise hinder the learning ability of GNNs. We note, however, that one key difference between RNNs and GNNs is that RNN memory \textit{only} depends on how much information is dissipated by the model during the hidden state update, whereas GNNs normalize messages by the inverse node degree, which introduces an additional information dissipation step that we will explore in more detail in Section \ref{sec:over-squashing}.
\vspace{-0.25cm}
\subsection{Graph convolutional and attentional models are prone to extreme gradient vanishing} \label{subsec:extreme-gv}
\vspace{-0.1cm}
Based on the previously introduced notion of stacking node representations using the matrix vectorization operation, we now analyze the gradient dynamics of GNN. In particular, we focus on the gradient propagation capabilities of graph convolutional and attentional models at initialization, given their widespread use in the literature. Specifically, we demonstrate that the singular values of the layer-wise Jacobian in these models form a highly contractive mapping, which prevents effective information propagation beyond a few layers. We formalize this claim in Lemma~\ref{lem:JacobianSpectrum} and Theorem~\ref{thm:JacobianDistribution}, and we refer the reader to Appendix~\ref{app:proofs_jac} for the corresponding proofs.
\vspace{0.1cm}
\begin{eqbox}
\begin{lemma}[Spectrum of layer-wise Jacobian's singular values]
\label{lem:JacobianSpectrum}
Let $\mathbf{H}^{(k)} \;=\; \tilde{\mathbf{A}}\;\mathbf{H}^{(k-1)}\;\mathbf{W}$  be a linear GCN layer, where $\tilde{\mathbf{A}}$ has eigenvalues $\{\lambda_1,\ldots,\lambda_n\}$ and $\mathbf{W}\,\mathbf{W}^T$ has eigenvalues $\{\mu_1,\ldots,\mu_{d_k}\}$. Consider the layer-wise Jacobian $\mathbf{J} = \partial\,\mathrm{vec}\bigl(\mathbf{H}^{(k)}\bigr) / \partial\,\mathrm{vec}\bigl(\mathbf{H}^{(k-1)}\bigr)$, Then the squared singular values of $\mathbf{J}$ are given by the set 
\[
  \bigl\{\,\lambda_i^2 \,\mu_j \;\bigm|\;
    i=1,\ldots,n,\;\; j=1,\ldots,d_k\bigr\}.\\
\]
\end{lemma}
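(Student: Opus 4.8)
The plan is to reduce the layer-wise Jacobian to a Kronecker product and then read off its singular values from the spectra of the two factors.

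First I would apply the standard vectorization identity $\mathrm{vec}(\mathbf{P}\mathbf{X}\mathbf{Q}) = (\mathbf{Q}^{T} \otimes \mathbf{P})\,\mathrm{vec}(\mathbf{X})$ to the layer update $\mathbf{H}^{(k)} = \tilde{\mathbf{A}}\,\mathbf{H}^{(k-1)}\,\mathbf{W}$. Since the map $\mathrm{vec}(\mathbf{H}^{(k-1)}) \mapsto \mathrm{vec}(\mathbf{H}^{(k)})$ is linear, its Jacobian is exactly the matrix implementing it, so
\[
\mathbf{J} \;=\; \mathbf{W}^{T} \otimes \tilde{\mathbf{A}}.
\]
(The precise placement of the transpose depends on the column- versus row-stacking convention for $\mathrm{vec}$, but this does not affect the resulting singular values.)

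Next I would compute $\mathbf{J}^{T}\mathbf{J}$ using the mixed-product property $(\mathbf{A}\otimes\mathbf{B})(\mathbf{C}\otimes\mathbf{D}) = (\mathbf{A}\mathbf{C})\otimes(\mathbf{B}\mathbf{D})$:
\[
\mathbf{J}^{T} \mathbf{J} \;=\; \bigl(\mathbf{W}\otimes\tilde{\mathbf{A}}^{T}\bigr)\bigl(\mathbf{W}^{T}\otimes\tilde{\mathbf{A}}\bigr) \;=\; \bigl(\mathbf{W}\mathbf{W}^{T}\bigr)\otimes\bigl(\tilde{\mathbf{A}}^{T}\tilde{\mathbf{A}}\bigr).
\]
Here I would invoke the fact that $G$ is undirected, so $\tilde{\mathbf{A}} = \mathbf{D}^{-1/2}\mathbf{A}\mathbf{D}^{-1/2}$ is symmetric, hence $\tilde{\mathbf{A}}^{T}\tilde{\mathbf{A}} = \tilde{\mathbf{A}}^{2}$ has eigenvalues $\{\lambda_i^2\}_{i=1,\ldots,n}$. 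Since the eigenvalues of a Kronecker product are exactly the pairwise products of the eigenvalues of the factors, the spectrum of $\mathbf{J}^{T}\mathbf{J}$ is $\{\mu_j \lambda_i^2 : i=1,\ldots,n,\; j=1,\ldots,d_k\}$; as these are precisely the squared singular values of $\mathbf{J}$, this yields the claim.

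I do not expect a genuine obstacle here — the argument is essentially a bookkeeping exercise. The one point that requires care is making the symmetry of $\tilde{\mathbf{A}}$ explicit, so that its singular values coincide with $|\lambda_i|$, and noting that $\mathbf{W}$ need \emph{not} be symmetric, which is why the statement is phrased in terms of the eigenvalues $\{\mu_j\}$ of $\mathbf{W}\mathbf{W}^{T}$ (the squared singular values of $\mathbf{W}$) rather than of $\mathbf{W}$ itself. A secondary subtlety is dimensional consistency: the statement implicitly takes the hidden width to be constant across the layer so that $\mathbf{W}\mathbf{W}^{T}$ is $d_k \times d_k$; in the rectangular case one instead obtains $\min(d_{k-1},d_k)\cdot n$ nonzero products (and $\mathbf{W}\mathbf{W}^{T}$ accordingly has at most that many nonzero eigenvalues), with the argument otherwise unchanged.
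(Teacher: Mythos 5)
Your proposal is correct and follows essentially the same route as the paper: vectorize the linear update to get $\mathbf{J} = \mathbf{W}^{T} \otimes \tilde{\mathbf{A}}$, apply the mixed-product property to the Gram matrix, and read off the spectrum of the Kronecker product. Your explicit remarks on the symmetry of $\tilde{\mathbf{A}}$ and on the rectangular case are slightly more careful than the paper's proof (which writes $\mathbf{W}^{T}\mathbf{W}$ where the statement uses $\mathbf{W}\mathbf{W}^{T}$, harmless since the nonzero spectra coincide), but the argument is the same.
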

\end{eqbox}
\vspace{0.2cm}
\begin{eqbox}
\begin{theorem}[Jacobian singular-value distribution]
\label{thm:JacobianDistribution}
Assume the setting of Lemma~\ref{lem:JacobianSpectrum}, and let 
$\mathbf{W}\in\mathbb{R}^{d_{k-1}\times d_k}$ be initialized with i.i.d.\ 
$\mathcal{N}(0,\sigma^2)$ entries. Denote the squared singular values of the 
Jacobian by $\gamma_{i,j}$. Then, for sufficiently large $d_k$ the empirical eigenvalue distribution of $\mathbf{W}\mathbf{W}^T$ converges to the 
Marchenko-Pastur distribution. Then, 
the mean and variance of each $\gamma_{i,j}$ are
\begin{align}
 \vspace{-0.6cm}
  \mathbb{E}\bigl[\gamma_{i,j}\bigr]
  &= 
  \lambda_i^2 \,\sigma^2,
  \label{eq:exp} \\[2pt]
  \mathrm{Var}\bigl[\gamma_{i,j}\bigr]
  &=
  \lambda_i^4 \,\sigma^4\,\frac{d_k}{d_{k-1}}.
  \label{eq:var}
\end{align}
\end{theorem}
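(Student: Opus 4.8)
Lemma~\ref{lem:JacobianSpectrum} does the structural work: writing $\gamma_{i,j}$ for the squared singular values of $\mathbf{J}$, it gives $\gamma_{i,j} = \lambda_i^2\,\mu_j$, with the $\lambda_i$ the fixed eigenvalues of $\tilde{\mathbf{A}}$ and the $\mu_j$ the eigenvalues of the Gram matrix $\mathbf{W}\mathbf{W}^\top$. Since the graph (hence $\tilde{\mathbf{A}}$, hence the $\lambda_i$) is deterministic, all randomness lives in the $\mu_j$, so $\mathbb{E}[\gamma_{i,j}] = \lambda_i^2\,\mathbb{E}[\mu_j]$ and $\mathrm{Var}[\gamma_{i,j}] = \lambda_i^4\,\mathrm{Var}[\mu_j]$. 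The whole statement thus reduces to computing the first two moments of a typical eigenvalue of $\mathbf{W}\mathbf{W}^\top$.

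For that I would invoke the Marchenko--Pastur theorem. Viewing $\mathbf{W}\mathbf{W}^\top$ as a sum of $d_{k-1}$ rank-one terms $\mathbf{w}_j\mathbf{w}_j^\top$ with $\mathbf{w}_j\in\mathbb{R}^{d_k}$ having i.i.d.\ $\mathcal{N}(0,\sigma^2)$ coordinates, the relevant aspect ratio is $c = d_k/d_{k-1}$, and (in the standard $1/d$-normalized scaling, so the spectrum stays $O(1)$) the empirical spectral distribution converges as $d_k\to\infty$ to the Marchenko--Pastur law with shape parameter $c$ and scale $\sigma^2$. It remains only to read off its moments: the first moment is $\sigma^2$ and the second moment is $\sigma^4(1+c)$, so the variance is $\sigma^4(1+c)-\sigma^4 = \sigma^4 c = \sigma^4\,d_k/d_{k-1}$. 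Substituting $\mathbb{E}[\mu_j]=\sigma^2$ and $\mathrm{Var}[\mu_j]=\sigma^4\,d_k/d_{k-1}$ into the two displays above yields \eqref{eq:exp} and \eqref{eq:var}.

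I would add that \eqref{eq:exp} in fact holds non-asymptotically, because $\tfrac{1}{d_k}\mathbb{E}\operatorname{tr}(\mathbf{W}\mathbf{W}^\top)$ (suitably normalized) is just an average of the per-entry variances $\mathbb{E}[W_{ab}^2]=\sigma^2$; only \eqref{eq:var} genuinely needs the limit, equivalently the leading-order estimate $\tfrac{1}{d_k}\mathbb{E}\operatorname{tr}\!\big((\mathbf{W}\mathbf{W}^\top)^2\big) = \sigma^4(1+d_k/d_{k-1}) + o(1)$. I expect the only delicate point to be bookkeeping rather than analysis: pinning down which of $\mathbf{W}\mathbf{W}^\top$, $\mathbf{W}^\top\mathbf{W}$ carries the $d_k$ eigenvalues and with what normalization, so the aspect ratio enters as $d_k/d_{k-1}$ (not its reciprocal) and the surviving scale is $\sigma^2$ (not $d\,\sigma^2$); the Marchenko--Pastur moment identities $m_1=1$, $m_2=1+c$ themselves are classical and I would simply cite them.
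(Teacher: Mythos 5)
Your proposal follows essentially the same route as the paper: reduce via Lemma~\ref{lem:JacobianSpectrum} to $\gamma_{i,j}=\lambda_i^2\mu_j$ with deterministic $\lambda_i$, then read off $\mathbb{E}[\mu_j]=\sigma^2$ and $\mathrm{Var}[\mu_j]=\sigma^4\,d_k/d_{k-1}$ from the Wishart/Marchenko--Pastur spectrum. If anything, you are more explicit than the paper (which simply cites ``standard results on the moments of Wishart eigenvalues'') in deriving the variance from the MP moment identities and in flagging the $\mathbf{W}\mathbf{W}^\top$ versus $\mathbf{W}^\top\mathbf{W}$ and normalization bookkeeping that the paper glosses over.
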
 
\end{eqbox}

Theorem~\ref{thm:JacobianDistribution} shows that the singular-value spectrum of the Jacobian is modulated by the squared spectrum of the normalized adjacency. Since $|\lambda_i|\le 1$ for all eigenvalues of the normalized adjacency, the ability of GCNs to propagate gradients is in expectation worse than that of RNNs or MLPs.
In particular, iterating these operations causes the majority of the spectrum to shrink to zero more quickly than in classical deep linear~\cite{saxe2013exact} or nonlinear~\cite{pennington2017resurrecting} networks. Moreover, using sigmoidal activations and orthogonal weights will not push the singular-value spectrum to the edge of chaos as in \citep{pennington2017resurrecting}, due to the additional contraction from the adjacency. 
%\begin{wrapfigure}{l}{0.54\textwidth}
\begin{wrapfigure}{l}{0.74\textwidth}
%\begin{figure}[ht]
\centering
  \vspace{-1pt}     
  % Three panels, each 30% of text width, separated by \hfill
  \includegraphics[width=0.41\linewidth]{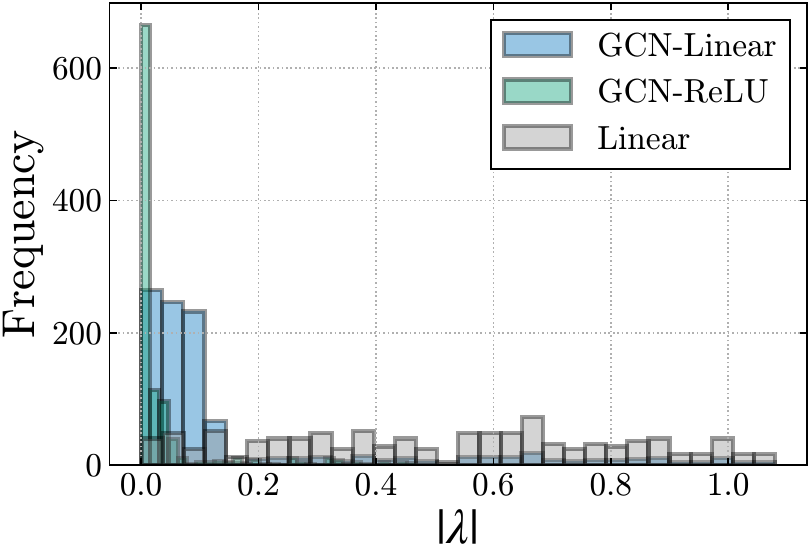}
  \includegraphics[width=0.28\linewidth]{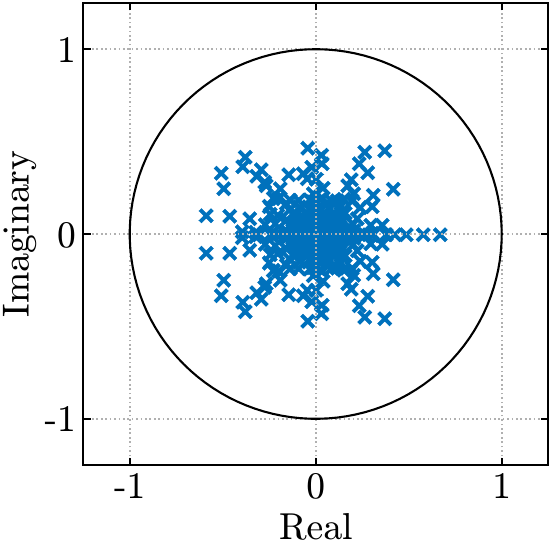}
  \includegraphics[width=0.28\linewidth]{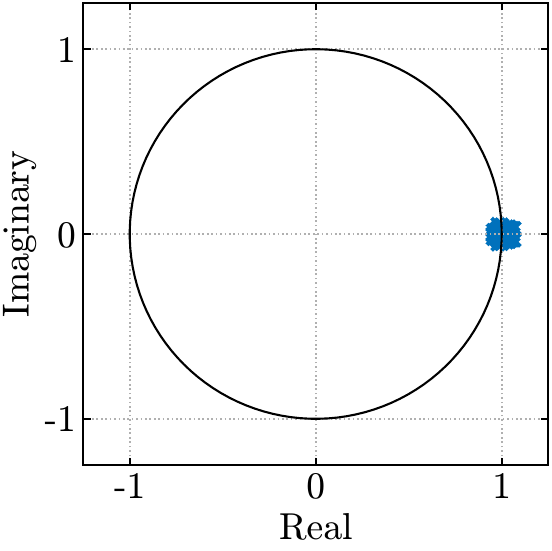}
  \caption{%
    \textbf{Left:} Histogram of eigenvalue modulus of the Jacobian for linear, linear convolutional, and nonlinear convolutional layers.
    \quad
    \textbf{Middle:} Vectorized Jacobian for GCN.
    \quad
    \textbf{Right:} Vectorized Jacobian for GCN-SSM with $\mathrm{eig}(\Lambda)\approx1$, $\mathrm{eig}(B)\approx0.1$.
  }
  \vspace{-10pt}     
  \label{fig:eigenvalues-jac}
\end{wrapfigure}
%\end{figure}
The effect of each operation on the layer-wise Jacobian is empirically demonstrated in Figure \ref{fig:eigenvalues-jac}, which also showcases the contraction effect of the normalized adjacency. The figure reveals that even a single layer’s Jacobian exhibits a long tail of squared singular values near zero. This spectral structure leads to ill-conditioned gradient propagation and non-isometric (not norm-preserving) signal dynamics. The same results hold for GATs, as the adjacency still exhibits a contractive spectral structure despite being learned during training.

Note that to overcome this contraction without altering the architecture, one would have to both set $\sigma^2$ in a way that precisely compensates for the normalized adjacency (which can be computationally expensive to estimate) and choose the nonlinearity carefully. In the next subsection, we present a general, simple, and computationally efficient method to place the Jacobian at the edge of chaos at initialization by writing feature updates in a state-space representation.

\vspace{-0.3cm}
\subsection{GNN-SSM: Improving the training dynamics of GNNs through state-space models}

To allow direct control of the signal propagation dynamics of any GNN, we can rewrite its layer-to-layer update as a state-space model. Concretely, we express the update as
\begin{eqbox}
\begin{align}
  \mathbf{H}^{(k+1)}{}^{\mathsf{T}} &= \mathbf{\Lambda}\,\mathbf{H}^{(k)}{}^{\mathsf{T}}
    + \mathbf{B}\,\mathbf{X}^{(k)}{}^{\mathsf{T}} \notag\\
  &= \mathbf{\Lambda}\,\mathbf{H}^{(k)}{}^{\mathsf{T}}
    + \mathbf{B}\,\mathbf{F}_{\boldsymbol{\theta}}\bigl(\mathbf{H}^{(k)},\,k\bigr)^{\mathsf{T}}\label{eq:ssm}
\end{align}
\end{eqbox}
where we refer to $\mathbf{\Lambda}$ as the \textit{state transition matrix} and $\mathbf{B}$ as the \textit{input matrix},\footnote{Here, we deviate from the traditional state-space formalism, which uses $\mathbf{A}$ as the state transition matrix, since we use this notation for the adjacency. Further, we employ transposes to increase the resemblance to the traditional SSM updates.} and $\mathbf{F}_{\boldsymbol{\theta}}(\mathbf{H}^{(k)}, k)$ as a time-varying \textit{coupling function} which connects each node to some neighborhood. We refer to the model defined in Equation~\eqref{eq:ssm} as \texttt{GNN-SSM}. From an RNN perspective, \(\mathbf{\Lambda}\) plays the role of the ``memory'', in charge of recalling all the representations at each layer at the readout layer, while the neighborhood aggregation  plays the role of an
input injected into the state via~\(\mathbf{B}\). In traditional GNNs, this
recurrent memory mechanism is absent, so these models act in a \emph{memoryless} way:
features at one layer do not explicitly store or retrieve past information in the way a stateful model would.

In the state-space view, the eigenvalues of \(\mathbf{\Lambda}\) determine the \emph{memory
dynamics}: large eigenvalues can preserve signals (or, if above unity, cause exploding modes),
whereas small eigenvalues quickly attenuate them. Meanwhile, \(\mathbf{B}\) controls which aspects
of the node features get injected into the hidden state at each step. Because this
framework is agnostic to the exact coupling function, any MPNN layer can serve as \(\mathbf{F}_{\boldsymbol{\theta}}\). We showcase the effect of these matrices on the layer-wise Jacobian in Proposition \ref{prop:ssm-jac}.

\begin{proposition}[Effect of state-space matrices]
Consider the setting in \eqref{eq:ssm} and $\Gamma = \partial\ \mathrm{vec}(\mathbf{F}_{\boldsymbol{\theta}}(\mathbf{H}^{(k)}))/\partial\ \mathrm{vec}(\mathbf{H}^{(k)})$. Let $\otimes$ denote the Kronecker product.  Then, the norm of the vectorized Jacobian $\mathbf{J}$ is bounded as:
\begin{align}
\|\mathbf{J}\|_2 &\leq \|I_{d_k} \otimes \mathbf{\Lambda}\|_2 + \|I_{d_k} \otimes \mathbf{B}\|_2 \|\Gamma\|_2 \nonumber \\
&= \|\mathbf{\Lambda}\|_2 + \|\mathbf{B}\|_2 \|\Gamma\|_2,
\end{align}
\vspace{-0.8cm}
\label{prop:ssm-jac}
\end{proposition}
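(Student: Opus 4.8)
The plan is to vectorize the layer update~\eqref{eq:ssm}, differentiate it with respect to $\mathrm{vec}(\mathbf{H}^{(k)})$ using the chain rule, and then bound the resulting two-term expression factor by factor with the triangle inequality, submultiplicativity, and the known spectrum of Kronecker products. First I would fix the column-stacking convention for $\mathrm{vec}(\cdot)$, for which $\mathrm{vec}(\mathbf{M}\mathbf{N}) = (I\otimes\mathbf{M})\,\mathrm{vec}(\mathbf{N})$ whenever the product is defined. Since $\mathbf{X}^{(k)} = \mathbf{F}_{\boldsymbol{\theta}}(\mathbf{H}^{(k)},k)$, applying this identity to $\mathbf{H}^{(k+1)} = \mathbf{\Lambda}\mathbf{H}^{(k)} + \mathbf{B}\mathbf{F}_{\boldsymbol{\theta}}(\mathbf{H}^{(k)},k)$ gives
\[
\mathrm{vec}\bigl(\mathbf{H}^{(k+1)}\bigr) = (I_{d_k}\otimes\mathbf{\Lambda})\,\mathrm{vec}\bigl(\mathbf{H}^{(k)}\bigr) + (I_{d_k}\otimes\mathbf{B})\,\mathrm{vec}\bigl(\mathbf{F}_{\boldsymbol{\theta}}(\mathbf{H}^{(k)},k)\bigr).
\]

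Next, because $\mathbf{\Lambda}$ and $\mathbf{B}$ are constant matrices (they do not depend on $\mathbf{H}^{(k)}$) and the explicit layer index $k$ is held fixed when forming the partial derivative, the chain rule yields
\[
\mathbf{J} = \frac{\partial\,\mathrm{vec}(\mathbf{H}^{(k+1)})}{\partial\,\mathrm{vec}(\mathbf{H}^{(k)})} = (I_{d_k}\otimes\mathbf{\Lambda}) + (I_{d_k}\otimes\mathbf{B})\,\Gamma,
\]
with $\Gamma = \partial\,\mathrm{vec}(\mathbf{F}_{\boldsymbol{\theta}}(\mathbf{H}^{(k)}))/\partial\,\mathrm{vec}(\mathbf{H}^{(k)})$ as in the statement. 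The triangle inequality and submultiplicativity of the spectral norm then give $\|\mathbf{J}\|_2 \le \|I_{d_k}\otimes\mathbf{\Lambda}\|_2 + \|I_{d_k}\otimes\mathbf{B}\|_2\,\|\Gamma\|_2$. To obtain the stated equality, I would invoke the fact that the nonzero singular values of a Kronecker product $\mathbf{P}\otimes\mathbf{Q}$ are the pairwise products of the singular values of $\mathbf{P}$ and $\mathbf{Q}$; taking $\mathbf{P}=I_{d_k}$ (whose singular values are all $1$) shows $\|I_{d_k}\otimes\mathbf{M}\|_2 = \|\mathbf{M}\|_2$ for any $\mathbf{M}$, so that $\|I_{d_k}\otimes\mathbf{\Lambda}\|_2 = \|\mathbf{\Lambda}\|_2$ and $\|I_{d_k}\otimes\mathbf{B}\|_2 = \|\mathbf{B}\|_2$, completing the argument.

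I do not expect a genuine obstacle here: the result is essentially bookkeeping once the vectorization identity and the Kronecker-product spectral fact are in hand. The only points requiring care are (i) being explicit about the $\mathrm{vec}$ convention so that left-multiplication by $\mathbf{\Lambda}$ and $\mathbf{B}$ produces $I_{d_k}\otimes(\cdot)$ rather than $(\cdot)\otimes I_{d_k}$ — though either choice gives the same spectral norm, so the bound is convention-independent — and (ii) confirming that $\Gamma$ is evaluated at $\mathbf{H}^{(k)}$ so the second summand is exactly $(I_{d_k}\otimes\mathbf{B})\Gamma$. It is worth remarking that the bound is generally loose, since $\mathbf{\Lambda}$ and the coupling term act on complementary tensor factors; the point of the proposition is the design consequence, namely that with a (typically contractive) MPNN coupling the term $\|\mathbf{\Lambda}\|_2$ is the lever that places $\mathbf{J}$ near the edge of chaos, and choosing $\mathbf{\Lambda}$ (e.g.\ diagonal with $|\lambda_i|\approx 1$) directly controls it.
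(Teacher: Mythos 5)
Your proposal is correct and follows essentially the same route as the paper: both write $\mathbf{J} = (I_{d_k}\otimes\mathbf{\Lambda}) + (I_{d_k}\otimes\mathbf{B})\,\Gamma$, apply the triangle inequality and submultiplicativity of the spectral norm, and conclude with $\|I_{d_k}\otimes\mathbf{M}\|_2 = \|\mathbf{M}\|_2$. Your version is if anything slightly more explicit than the paper's, since you derive the Jacobian decomposition from the vectorization identity and justify the Kronecker-norm fact via singular values rather than asserting them.
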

The result above shows that the spectrum of the Jacobian is controlled through the eigenvalues of $\mathbf{\Lambda}$. If we have that the spectrum of $\Gamma$ is around zero, it suffices to have $\text{eig}(\Lambda)\approx 1$ to bring the vectorized Jacobian to the edge of chaos. We empirically validate this in Figure \ref{fig:eigenvalues-jac}.

For simplicity and clarity of conclusions, we consider \(\mathbf{\Lambda}\) and \(\mathbf{B}\) to be \emph{shared across
layers} and \emph{fixed} (i.e., not trained by gradient descent) to guarantee the desired properties.  To construct \(\mathbf{\Lambda}\), we first generate a random unitary matrix, which has all eigenvalues on the unit circle, and then scale it to control the spectral radius\footnote{An alternative strategy consists in computing the matrix eigendecomposition, manually place the eigenvalues where desired, and reconstruct the matrix.}, allowing us to design 
 with precise control over the system dynamics. Only the coupling function
\(\mathbf{F}_{\boldsymbol{\theta}}\) is optimized. Empirically, we observe that this simpler scheme actually improves downstream performance in some settings. We highlight, however, that this is the most simple instance of a more general framework that aims to incorporate ideas from recurrent processing into GNNs without losing permutation-equivariance. One could easily extend this state-space idea to include more complex gating
\cite{hochreiter1997long} or other constraints on the state transition matrix \cite{henaff2016}.

\begin{tcolorbox}[boxsep=0mm,left=2.5mm,right=2.5mm]
\textbf{Message of the Section:} {\em } \textit{GCNs and GATs experience a phenomenon we term ``extreme gradient vanishing" due to the use of a normalized adjacency for the message passing operation. Reinterpreting any GNN as a recurrent model enables direct control of the Jacobian spectrum, which mitigates this issue while preserving architectural generality.}  
\end{tcolorbox}

\vspace{-0.4cm}
\section{How does Extreme Gradient Vanishing affect Over-smoothing?}\label{sec:over-smoothing}
\vspace{-0.25cm}

In this section, we study the practical implications of the mechanism causing vanishing gradients in GNNs in relation to node feature evolution. We show empirically and theoretically how GNN layers acting as \textit{contractions} make
node features collapse to a fixed point. We experimentally validate our points by analyzing Dirichlet energy, node feature norms, and node classification performance for increasing numbers of layers. Overall, we believe this section provides a more \textit{practical and general} understanding of the consequences of extreme vanishing in GNNs by analyzing them from the point of view of their layer-wise Jacobians.
\vspace{-0.25cm}

\subsection{A contractive GNN leads to node feature collapse} \label{subsec:feature_collapse}
\vspace{-0.2cm}

We consider in our analysis GNN layers as in Equation \ref{eq:gcn}. We view a GNN layer as a map $f_{k}: \mathbb{R}^{nd} \to \mathbb{R}^{nd}$ and construct a deep GNN $f$ via composition of $K$ layers, i.e. $f = f_K \circ \dots \circ f_1$.  Let $\mathbf{J}_{f} \in \mathbb{R}^{nd \times nd}$ denote the layer-wise Jacobian of a GNN $f$. \footnote{In our analysis, it is important that the input to the GNN is a vector in $\mathbb{R}^{nd}$ rather than a matrix in $\mathbb{R}^{n\times d}$, as the Jacobians and norms are different for the two cases. For this reason, it is important to take care in the definitions of these objects.} The supremum of the Jacobian (if well-defined) of $f$ over a convex set $U$ corresponds to the Lipschitz constant $\lip{f}$ \cite{hassan2002nonlinear}, i.e. $\lip{f} = \sup_{\mathbf{H} \in U} \left \lVert \mathbf{J}_{f}(\mathbf{H}) \right \rVert$, where by submultiplicativity of Lipschitz constants we have that $\lip{f} \leq \prod_{k=1}^K \lip{f_k}$.  We point to the Appendix~\ref{app:smoothing-results} for a more detailed explanation of the objects in question. A Lipschitz function $f$ is \emph{contractive} if $\lip{f} < 1$. We now assume that $\lip{f_k} < 1$ for all $k$, meaning that each layer is a \emph{contraction mapping}.\footnote{Note that the analysis holds for any submultiplicative matrix norm.}

\begin{figure*}
    \centering
    % Left figure
    \includegraphics[width=0.3\textwidth]{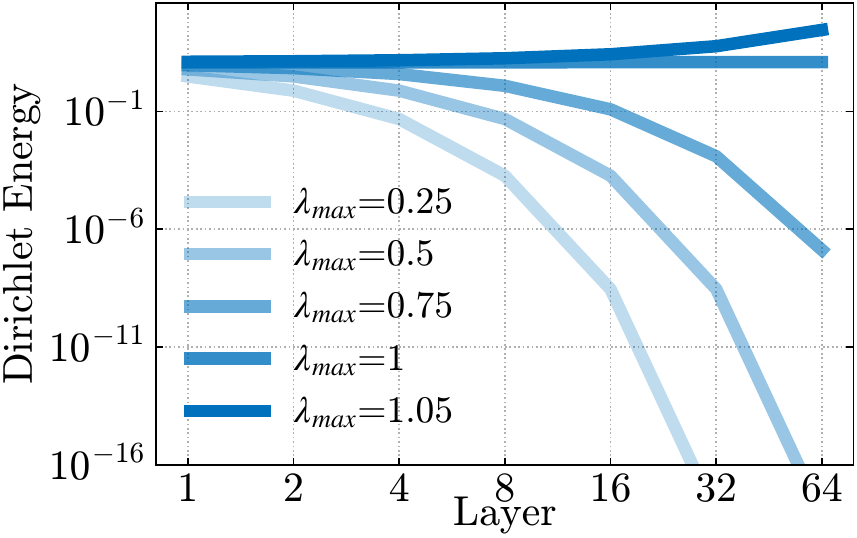}%
    \quad
    % Middle figure (pushed upward)
    \raisebox{0.35cm}{ % Adjust the value (e.g., -1cm, -2cm) to move the image up
        \includegraphics[width=0.27\textwidth]{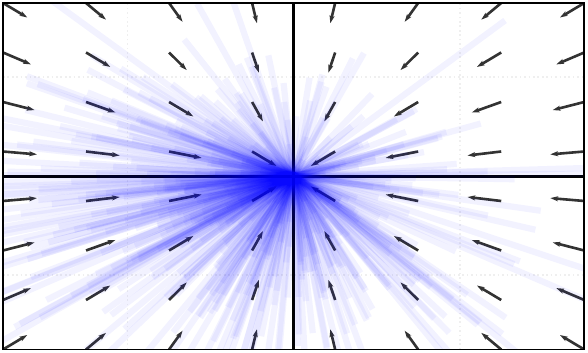}%
    }%
    \quad
    % Right figure
    \includegraphics[width=0.291\textwidth]{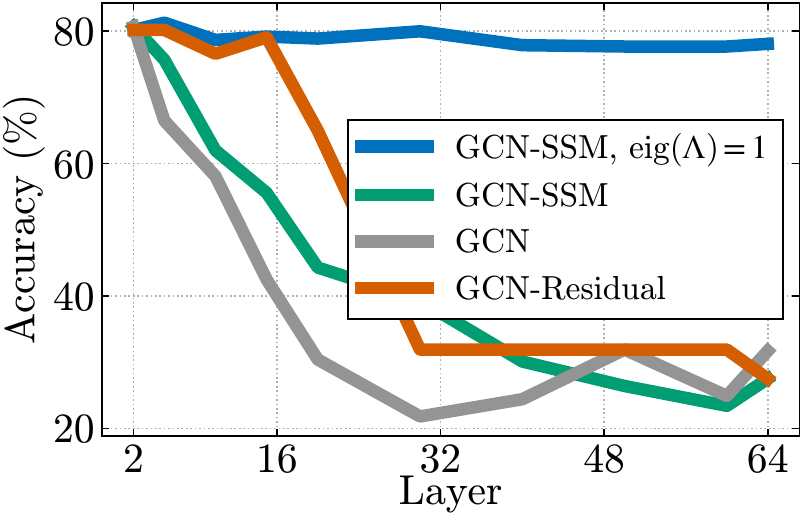}%
    \vspace{-0.2cm}
    \caption{Experimental evaluation on Cora for an increasing number of layers. \textbf{Left:} Dirichlet Energy evolution for different $||\Lambda||_2$. \textbf{Middle:} 2-Dimensional random feature projection evolution with a fixed point at zero. \textbf{Right:} Node classification performance.}
    \label{fig:over-smoothing-results}
    \vspace{-0.4cm}
\end{figure*}

\begin{lemma}[Banach Fixed Point Theorem\,\cite{banach1922}]
\label{lem:banach}
Let 
\(f\) be an operator with Lipschitz constant \(\lip{f} < 1\). Then for any starting point \(\mathbf{x}_0\), the fixed‐point iteration
$\mathbf{x}_{n+1}=f(\mathbf{x}_n)$ converges to the unique fixed point of \(f\) at a linear rate $O\bigl(1/\lip{f}^n\bigr)$
\end{lemma}
From Lemma \ref{lem:banach} above and the extreme gradient vanishing results presented in Section \ref{subsec:extreme-gv}, we can distinguish two cases for contractive GNN layers: (1) With shared layers (as in \cite{di2022graph%,gravina2022anti
}), repeated applications of the GNN will result in convergence to a unique fixed point; (2) with repeated application of non-shared but highly contracting layers, the overall GNN function will converge to, or very close to, a unique fixed point in a single forward pass due to the low Lipschitz constant of the overall GNN. Both cases result in convergence towards a fixed point as all nodes evolve using common transformations.  In practice, we observe that node representations tend to collapse to a zero norm node state, see Figure \ref{fig:over-smoothing-results}. To study this further, we consider a GNN update, under the following assumption, which is consistent with the setup in \eqref{eq:gcn}:

\begin{lemma}
\label{lemma:fixed-point-gcn}
    Consider a GNN layer $f_K$ as in Equation \ref{eq:gcn}, with non-linearity $\sigma$ such that $\sigma(0) = 0$ (e.g. $\text{ReLU}$ or $\tanh$). Then, $f(\mathbf{0}) = \mathbf{0}$, i.e. $\mathbf{0}$ is a fixed point of $f$.
\end{lemma}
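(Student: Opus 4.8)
The plan is to verify the claim directly by unwinding the composition $f = f_K \circ \dots \circ f_1$ one layer at a time. First I would check that a single GCN layer of the form in Equation~\eqref{eq:gcn} fixes the zero signal: since $\hat{\mathbf{A}}\,\mathbf{0}\,\mathbf{W}^{(k-1)} = \mathbf{0}$ (any matrix product with a zero matrix in the middle is zero, independently of the particular weights $\mathbf{W}^{(k-1)}$), and since $\sigma$ is applied entrywise with $\sigma(0)=0$, we obtain $f_k(\mathbf{0}) = \sigma(\mathbf{0}) = \mathbf{0}$. This is exactly where the hypothesis $\sigma(0)=0$ (satisfied by $\mathrm{ReLU}$ and $\tanh$) enters, and it is the only property of $\sigma$ that is needed.

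Next I would chain this over the $K$ layers by a one-line induction. Setting $\mathbf{H}^{(0)} = \mathbf{0}$ as the input, the previous step gives $\mathbf{H}^{(k)} = f_k(\mathbf{H}^{(k-1)}) = f_k(\mathbf{0}) = \mathbf{0}$ for every $k = 1,\dots,K$, so in particular $f(\mathbf{0}) = \mathbf{H}^{(K)} = \mathbf{0}$, which is the assertion.

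The only point requiring a little care — and the reason for the footnote in the lemma statement — is the identification between the matrix representation $\mathbf{H}^{(k)} \in \mathbb{R}^{n\times d}$ and its vectorization $\mathrm{vec}(\mathbf{H}^{(k)}) \in \mathbb{R}^{nd}$, since $f_k$ is formally a map $\mathbb{R}^{nd}\to\mathbb{R}^{nd}$. Because $\mathrm{vec}$ is a linear bijection that sends the zero matrix to the zero vector (and conversely), the computation above transfers verbatim: $\mathbf{0}\in\mathbb{R}^{nd}$ is fixed by each $f_k$ and hence by $f$. I do not expect any genuine obstacle here; the statement is a direct consequence of $\sigma(0)=0$ together with the linearity of the aggregation and channel-mixing steps, and it is included mainly to set up the contraction analysis around this fixed point in the remainder of the section.
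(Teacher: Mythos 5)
Your proof is correct and follows essentially the same argument as the paper's: a single layer satisfies $f_k(\mathbf{0}) = \sigma(\hat{\mathbf{A}}\,\mathbf{0}\,\mathbf{W}) = \sigma(\mathbf{0}) = \mathbf{0}$ by $\sigma(0)=0$. Your additional remarks on chaining over the $K$ layers and on the vectorization identification are harmless elaborations of steps the paper leaves implicit.
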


Then, we have that node representations will evolve as in Proposition \ref{prop:convergence-to-fixed-point} presented next.

\begin{eqbox}
\begin{proposition}[Convergence to a unique fixed point.]
\label{prop:convergence-to-fixed-point}
    Let $\lip{f_k} < 1 - \epsilon$ for some $\epsilon > 0$ for all $k=1\dots L$. Then, for $\mathbf{H} \in U \subseteq \mathbb{R}^{nd}$, we have that:

    \begin{equation}
        \left \lVert f(\mathbf{H}) \right \rVert < (1 - \epsilon)^K \norm{\mathbf{H}} < \norm{\mathbf{H}}.
    \end{equation}

    In particular, as $K \to \infty$, $f(\mathbf{H}) \to \mathbf{0}$.
\end{proposition}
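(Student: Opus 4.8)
The plan is to combine the fixed-point fact from Lemma~\ref{lemma:fixed-point-gcn} with the submultiplicativity of Lipschitz constants already recorded in the text, which is essentially the contraction-mapping estimate underlying the Banach fixed-point theorem. First I would recall that, because $\sigma(0)=0$ and $\hat{\mathbf{A}}\,\mathbf{0}\,\mathbf{W}=\mathbf{0}$, each individual layer satisfies $f_k(\mathbf{0})=\mathbf{0}$, and hence $f(\mathbf{0})=\mathbf{0}$ — this is exactly Lemma~\ref{lemma:fixed-point-gcn}. Then, from $\lip{f_k} < 1-\epsilon$ for all $k$ together with $\lip{f} \le \prod_{k=1}^{K}\lip{f_k}$, I obtain $\lip{f} < (1-\epsilon)^K$.

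The main estimate is then a one-line application of the definition of the Lipschitz constant to the points $\mathbf{H}$ and $\mathbf{0}$:
\[
\norm{f(\mathbf{H})} = \norm{f(\mathbf{H}) - f(\mathbf{0})} \le \lip{f}\,\norm{\mathbf{H}-\mathbf{0}} < (1-\epsilon)^K\,\norm{\mathbf{H}},
\]
where the first equality uses $f(\mathbf{0})=\mathbf{0}$, the strict inequality holds whenever $\mathbf{H}\neq\mathbf{0}$ (the case $\mathbf{H}=\mathbf{0}$ being trivial), and since $\epsilon\in(0,1)$ we have $0<(1-\epsilon)^K<1$, which yields the full chain $\norm{f(\mathbf{H})} < (1-\epsilon)^K\norm{\mathbf{H}} < \norm{\mathbf{H}}$. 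Finally, $(1-\epsilon)^K\to 0$ as $K\to\infty$, so $\norm{f(\mathbf{H})}\to 0$ and therefore $f(\mathbf{H})\to\mathbf{0}$.

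The step requiring the most care is not the algebra but the justification that the supremum-of-Jacobian quantity $\lip{f_k}$ genuinely controls $\norm{f_k(x)-f_k(y)}$ at the points we use: this needs $U$ to be convex and to contain both $\mathbf{0}$ and $\mathbf{H}$ (so the segment joining them lies in $U$ and the mean-value inequality applies), and, when chaining layers, that each layer maps into the region where the next layer's bound is assumed — most cleanly handled by taking $U=\mathbb{R}^{nd}$, or a forward-invariant convex neighbourhood of $\mathbf{0}$, so that all iterates remain where the contraction estimates hold. A secondary technicality is the non-differentiability of $\mathrm{ReLU}$ at $\mathbf{0}$, handled (as the main text notes) via the Clarke generalized Jacobian; the contraction bound is unaffected because the generalized directional derivatives obey the same norm estimate. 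I would also note the harmless index clash in the statement ($k=1,\dots,L$ versus the exponent $K$): $L$ is the depth $K$, and it is the uniformity of the per-layer bound over all depths that licenses the limit $K\to\infty$.
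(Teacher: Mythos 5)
Your proposal is correct and follows essentially the same route as the paper's proof: apply the Lipschitz bound to the pair $(\mathbf{H},\mathbf{0})$, use $f(\mathbf{0})=\mathbf{0}$ from Lemma~\ref{lemma:fixed-point-gcn}, and invoke submultiplicativity of the layer-wise Lipschitz constants to get the $(1-\epsilon)^K$ factor. If anything, your treatment is slightly more careful than the paper's (explicit handling of the domain/convexity and forward-invariance issue, the trivial case $\mathbf{H}=\mathbf{0}$, and deducing the limit directly from $(1-\epsilon)^K\to 0$ rather than citing the Banach fixed point theorem), but the underlying argument is identical.
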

\end{eqbox}

In other words, in the setting we have considered, if layers $f_k$ are contractive, their repeated application will monotonically converge to the \emph{unique} fixed point $\mathbf{0}$, by Lemmas \ref{lemma:fixed-point-gcn} and \ref{lem:banach}, which serves to explain the behavior observed empirically. We note that this is similar to the analysis in \cite{roth2024rank}, but we highlight that our analysis is much more broad, as it applied to a number of models beyond GCNs, and is it only required knowledge of behavior of each layer through the Lipschitz constant. Furthermore, we emphasize the important connection between the Lipschitz constant and the vanishing gradients problem, which are linked through the Jacobian.

The collapse of node features to a single point has been typically described as \textit{over-smoothing}. In particular, over-smoothing describes the tendency of node features to become too similar to each other as more layers are added in GNNs \cite{cai2020note}. A common way of measuring over-smoothing in GNNs, see \cite{rusch2022graph,rusch2023survey}, is via the \emph{unnormalized Dirichlet energy} $ \mathcal{E}(\mathbf{H})$.\footnote{We offer a more precise characterization oh how this relates to the normalized Dirichlet energy of node features in Appendix \ref{app:oversmoothing_explanation}.} Given a feature matrix $\mathbf{H} \in \mathbb{R}^{n\times d}$ on an unweighted graph $G$, $\mathcal{E}(\mathbf{H})$ takes the form:
\vspace{-0.05cm}
\begin{equation}
 \mathcal{E}(\mathbf{H}) = \text{tr} \left(\mathbf{H}^\top \boldsymbol{\Delta}\mathbf{H}\right) = \sum_{(u,v) \in E} \left \lVert \mathbf{h}_u - \mathbf{h}_v \right \rVert^2,
\vspace{-1mm}
\end{equation}
where $\boldsymbol{\Delta}$ is the unnormalized graph Laplacian \cite{chung1997spectral}. The Dirichlet energy measures the \textit{smoothness} of a signal over a graph and will be minimized when the signal is constant over each node -- at least when using the unnormalized Laplacian. In Proposition \ref{prop:dir-energy-to-0}, we show how the layer-wise Jacobians relate to the unnormalized Dirichlet energy.
\vspace{0.1cm}
\begin{eqbox}
\begin{proposition}[Contractions decrease Dirichlet energy.]
\label{prop:dir-energy-to-0}
    Let $f$ be a GNN, $|E|$ be the number of edges in $G$, and $\mathbf{H} \in \mathbb{R}^{nd}$. We have the following bound:
    \begin{equation}
        \mathcal{E}(f(\mathbf{H})) \leq  2 \lvert E \rvert \prod_{k=1}^K \lip{f_k}^2 \norm{\mathbf{H}}^2.
    \end{equation}
    In particular, if $\lip{f_k} < 1 - \epsilon$ for some $\epsilon > 0$ for all $k=1\dots K$, then as $K \to \infty$ we have that $\mathcal{E}(f(\mathbf{H})) \to 0$.
\end{proposition}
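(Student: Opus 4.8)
The plan is to split the statement into a purely deterministic inequality bounding Dirichlet energy by the squared Euclidean norm of the feature vector, and then a Lipschitz/contraction argument — essentially the same mechanism already used in Proposition~\ref{prop:convergence-to-fixed-point} — to control that norm under the GNN map $f$.

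First I would establish the combinatorial inequality $\mathcal{E}(\mathbf{H}) \le 2\lvert E\rvert\,\norm{\mathbf{H}}^2$ for \emph{any} $\mathbf{H}\in\mathbb{R}^{nd}$, where $\norm{\cdot}$ is the Euclidean norm on the stacked vector (equivalently the Frobenius norm of the matrix form). Starting from $\mathcal{E}(\mathbf{H}) = \sum_{(u,v)\in E}\norm{\mathbf{h}_u - \mathbf{h}_v}^2$, apply the elementary bound $\norm{a-b}^2 \le 2\norm{a}^2 + 2\norm{b}^2$ edge by edge. Since $G$ has no self-loops, for each edge $(u,v)$ we have $u\neq v$ and hence $\norm{\mathbf{h}_u}^2 + \norm{\mathbf{h}_v}^2 \le \sum_{w\in V}\norm{\mathbf{h}_w}^2 = \norm{\mathbf{H}}^2$; summing over the $\lvert E\rvert$ edges gives the claim. (An equivalent route uses $\mathcal{E}(\mathbf{H}) = \mathrm{tr}(\mathbf{H}^\top\boldsymbol{\Delta}\mathbf{H}) \le \norm{\boldsymbol{\Delta}}_2\norm{\mathbf{H}}^2$ together with $\norm{\boldsymbol{\Delta}}_2 \le 2 d_{\max} \le 2\lvert E\rvert$, since a vertex of degree $d$ has $d$ distinct incident edges.)

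Next I would apply this inequality with $\mathbf{H}$ replaced by $f(\mathbf{H})$, giving $\mathcal{E}(f(\mathbf{H})) \le 2\lvert E\rvert\,\norm{f(\mathbf{H})}^2$, and then bound $\norm{f(\mathbf{H})}$. Using Lemma~\ref{lemma:fixed-point-gcn}, $f(\mathbf{0}) = \mathbf{0}$, so $\norm{f(\mathbf{H})} = \norm{f(\mathbf{H}) - f(\mathbf{0})} \le \lip{f}\,\norm{\mathbf{H}}$, and by submultiplicativity of Lipschitz constants $\lip{f} \le \prod_{k=1}^K \lip{f_k}$. Squaring and substituting yields $\mathcal{E}(f(\mathbf{H})) \le 2\lvert E\rvert\,\prod_{k=1}^K \lip{f_k}^2\,\norm{\mathbf{H}}^2$, which is exactly the stated bound.

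Finally, for the limit: if $\lip{f_k} < 1-\epsilon$ for all $k$, then $\prod_{k=1}^K \lip{f_k}^2 < (1-\epsilon)^{2K}$, so the right-hand side tends to $0$ as $K\to\infty$; since $\mathcal{E}(\cdot)\ge 0$, the squeeze theorem gives $\mathcal{E}(f(\mathbf{H}))\to 0$. I do not expect a real obstacle here — the proof is short and reuses earlier results. The only points needing care are (i) keeping the norm bookkeeping consistent (the vectorized Euclidean norm on $\mathbb{R}^{nd}$, not a per-node norm), (ii) the step $f(\mathbf{0})=\mathbf{0}$, which is precisely where the assumption $\sigma(0)=0$ enters, and (iii) pinning down the combinatorial constant as $2\lvert E\rvert$ rather than a degree-dependent quantity, which the no-self-loop observation handles cleanly.
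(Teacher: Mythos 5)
Your proposal is correct and follows essentially the same route as the paper: bound each edge term of the Dirichlet energy by squared node-feature norms, control $\norm{f(\mathbf{H})}$ via the zero fixed point $f(\mathbf{0})=\mathbf{0}$ and submultiplicativity of Lipschitz constants, and conclude by squeezing. If anything, your per-edge bookkeeping is cleaner than the paper's: the paper writes $\norm{a-b}^2 \le \norm{a}^2+\norm{b}^2$ (which needs a factor of $2$ in general) and recovers the constant by bounding each node-block norm by the full stacked norm, whereas your route $\norm{a-b}^2 \le 2\norm{a}^2+2\norm{b}^2 \le 2\norm{\mathbf{H}}^2$ for distinct endpoints reaches the same $2\lvert E\rvert$ constant with every step valid.
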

\end{eqbox}
\vspace{-2mm}
This result shows that the energy is directly controlled by the norm of the input signal $\mathbf{H}$ and by the contracting effect of the layers $f_k$. The repeated application of contractive layers results in the unnormalized Dirichlet energy being artificially lowered as signals are gradually reduced in norm.

\vspace{-0.25cm}
\paragraph{Important consequences of our theoretical results.} \textbf{1)} The most important takeaway of the analysis above is that \textit{vanishing gradients are directly connected to over-smoothing through the Lipschitz constant}. In particular, the same mechanism that causes gradient vanishing issues, is responsible for the collapse of all features to a \textbf{unique zero fixed point} (i.e. zero feature collapse) where the Dirichlet energy is minimized. The quick collapse of traditional graph convolutional and attentional models can also be understood from the extreme gradient vanishing result introduced in Section \ref{sec: method}. As such, while much of the literature, starting with~\cite{oono2020graph,cai2020note}, has attributed over-smoothing to iterative aggregation and convergence to a rank-one subspace, our analysis shows that this phenomenon is instead an artifact of representations collapsing to zero. As later noted by~\cite{arnaiz2025oversmoothing} (without any formal argument), over-smoothing is not necessarily the cause of performance degradation. \textbf{Here, we provide a theoretical explanation for the poor performance of MPNNs at large depths, linking it to inherent trainability limitations arising from vanishing (and in some cases exploding) gradients.}

\textbf{2)} This result also provides a connection between the study of GNNs and the study of signal propagation (or dynamical isometry) in feedforward networks \cite{saxe2013exact,poole2016exponential,pennington2017resurrecting} and recurrent neural networks \cite{hochreiter1997long, arjovsky2016unitary, orvieto2023resurrecting}. In the dynamical isometry literature, the primary interest is to improve the learning times of deep feedforward networks, whereas the recurrent neural network literature is interested in memory and long-range information retrieval. We highlight that \textit{connecting ideas from these fields of study will enable the design of new models that benefit from both worlds}, even though these techniques were originally developed with other objectives in mind. This also serves as an explanation of why simple modifications such as \textbf{residual connections or normalization} worked in practice to mitigate over-smoothing, given their links to dynamical isometry \cite{yang2019mean,metereztowards}. 

\textbf{3)} Finally, we highlight that this result provides an objective \textit{evaluation metric} to gauge whether a GNN will over-smooth or not. We hope that the eigenanalysis of the Jacobian will become a widespread empirical test used for this purpose. 

\vspace{-0.25cm}

\subsection{Empirical validation of theoretical results}
\vspace{-0.2cm}

To validate the theory above, we perform a series of empirical tests. In particular, we check the evolution of the Dirichlet energy, latent vector norms, and node classification accuracy on the Cora dataset as the number of layers of different models is increased. The results are presented in Figure \ref{fig:over-smoothing-results}. Further, we present additional experiments for different graph structures and models in Appendix \ref{app:additional_over-smoothing}, and showcase how several models with edge-of-chaos Jacobians result in no over-smoothing, reinforcing the generality of this result beyond classical MPNNs.

\begin{figure*}[t!]
    \centering
    % Left figure
    \includegraphics[width=0.3\textwidth, height=2.32cm]{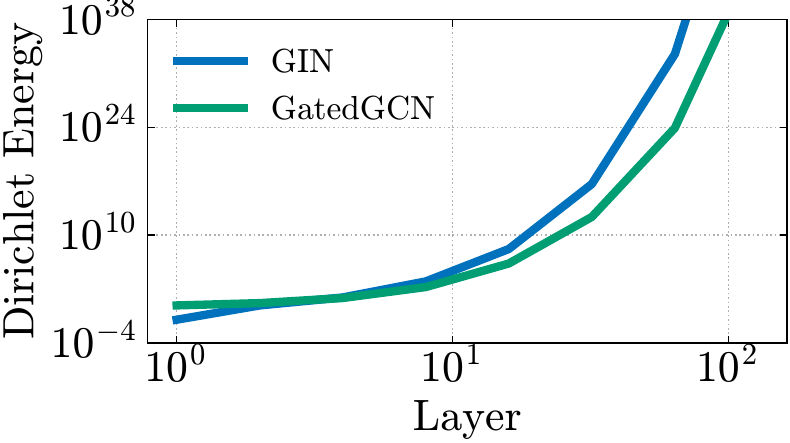}%
    \quad
    % Right figure (cropped right half)
    \includegraphics[width=0.5\textwidth,trim=529.5 0 0 0,clip, height=2.5cm]{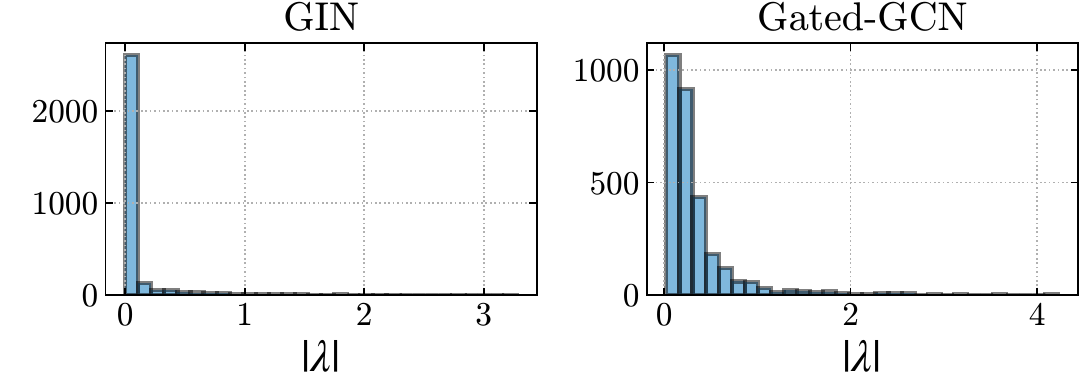}%
    \vspace{-0.2cm}
    \caption{\textbf{Left:} Evolution of Dirichlet Energy on the Cora dataset for GIN and Gated-GCN. \textbf{Right:} Histograms of eigenvalue spectra of layer-to-layer Jacobians for GIN and Gated-GCN.}
    \label{fig:jacobian_spectra}
    \vspace{-0.3cm}
\end{figure*}

From Figure \ref{fig:over-smoothing-results}, we see that one can exactly control the evolution of the Dirichlet energy of the system through the spectrum of the Jacobian, which can, in turn, be modified through the spectrum of $\mathbf{\Lambda}$ in the GNN-SSM model. Furthermore, this shrinks faster the lower the norm of the Jacobian is, which validates Proposition \ref{prop:dir-energy-to-0}. Beyond a Dirichlet energy analysis of the system, notice that node classification performance does not deteriorate when $\text{eig}(\mathbf{\Lambda})\approx 1$, and improves over simply applying an SSM layer with no modulation of the hyperparameters or a residual connection without gating. The dynamics of the GNN in this setting are shown in Figure \ref{fig:illustration}.\footnote{Note that in the case of GCN, it is sufficient to scale weight norms to prevent the Dirichlet energy from collapsing to zero with additional layers, as the Dirichlet energy only depends on the maximum eigenvalue of the Jacobian.  However, obtaining good performance on node-classification tasks required to transport the whole spectrum to be centred around the edge of chaos to obtain good performance. We elaborate on this in App. \ref{app:oversmoothing_explanation}}

\textbf{Results for Additional MPNNs.} In this section, we have focused on GCN and GAT because their layer operators admit clean  characterizations, which makes the analysis mathematically tractable. However, the arguments extend to \emph{any} GNN, where it suffices to examine the eigenvalues of the \emph{layer-wise Jacobians}. To make this concrete, in Figure \ref{fig:jacobian_spectra} we plot (i) histograms of the Jacobian eigenvalues for GIN~\cite{xu2018powerful}, and Gated-GCN~\cite{bresson2017residual}, and (ii) the evolution of each model’s Dirichlet energy as the number of layers increases. We observe that GIN exhibits occasional outliers with $|\lambda|>1$ and Gated-GCN displays a sizeable mass above the unit circle, which indicates unstable dynamics. This perspective helps explain several empirical findings reported in~\cite{arnaiz2025oversmoothing}: the divergence of Dirichlet energy in some architectures follows directly from norm expansion driven by unstable Jacobians, whereas the zero feature collapse seen in GCN and GAT reflects their contraction. Consequently, while \emph{not all} GNNs oversmooth (some instead become unstable and blow up) \textbf{the generally poor performance of many GNNs at large depths can be understood through their learning dynamics: collapse under contraction (as in GCNs and GATs, especially with ReLUs) or divergence under instability.}

\begin{tcolorbox}[boxsep=0mm,left=2.5mm,right=2.5mm]
\textbf{Message of the Section:} {\em \textit{There exists a direct link between the over-smoothing phenomenon in GNNs and the appearance of vanishing gradients. In particular, for contractive layerwise Jacobians and certain nonlinearities, over-smoothing is an artifact of node features collapsing to \textbf{a zero fixed point} which minimizes the Dirichlet energy.  Certain GNNs will experience node feature divergence due to the opposite effect of exploding gradients. In both cases, GNNs fail to train at large depth due to issues of \textbf{trainability}. Hence, analyzing the spectrum of the layer-wise Jacobians will reveal if a GNN will over-smooth and train at large depth. Furthermore, borrowing techniques from RNNs to design new GNNs is expected to be an effective strategy to prevent over-smoothing and train deep GNNs.}}
\end{tcolorbox}

\vspace{-0.2cm}

\section{The Impact of Vanishing Gradients on Over-squashing}
\label{sec:over-squashing}
\vspace{-0.3cm}

In this section, we study the connection between vanishing gradients and over-squashing in GNNs, which fills in the gaps and open questions left in the analysis of \citep{di2023over}. \footnote{In this part, we mostly focus on long-range interactions, which has been in many cases treated synonymously with over-squashing in the literature. For a more precise characterization of the relationship between over-squashing and long range, we point the reader to \cite{arnaiz2025oversmoothing}. We highlight that we believe the distinction between computational and topological bottlenecks is not particularly relevant in our setting.}
\vspace{-0.4cm}

\subsection{Mitigating over-squashing by combining increased  connectivity and non-dissipativity}
\vspace{-0.2cm}
Over-squashing is typically measured via the sensitivity of a node embedding after $k$ layers with respect to the input of another node using the node-wise Jacobian.
\begin{theorem}[Sensitivity bounds, \citep{di2023over}]\label{theo:sensitivity_digiovanni}
    Consider a standard MPNN with $k$ layers, where $c_\sigma$ is the Lipschitz constant of the activation $\sigma$, $w$ is the maximal entry-value over all weight matrices, and $d$ is the embedding dimension. For $u,v\in V$ we have
    \begin{equation}\label{eq:mpnn_over-squashing}
    \left\|\frac{\partial\mathbf{h}_v^{(k)}}{\partial\mathbf{h}_u^{(0)}}\right\| \leq \underbrace{(c_\sigma w d)^k}_{model}\underbrace{(\mathbf{O}^k)_{vu}}_{topology},
\end{equation}    
where $\mathbf{O}=c_r\mathbf{I}+c_a\mathbf{A}\in\mathbb{R}^{n\times n}$ is the message passing matrix adopted by the MPNN, and where $c_r$ and $c_a$ are the contributions of the self-connection and aggregation term.
\end{theorem}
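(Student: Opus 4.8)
The plan is to prove the bound by induction on the number of layers $k$, reducing it to a single-layer sensitivity estimate that is then chained across layers via the multivariate chain rule. Throughout I use that all entries of $\mathbf{O}=c_r\mathbf{I}+c_a\mathbf{A}$ are non-negative, since $c_r,c_a\ge 0$ and $\mathbf{A}\ge 0$.

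\textbf{Step 1: the single-layer Jacobian.} First I would write the generic ``standard MPNN'' layer as $\mathbf{h}_v^{(k)}=\sigma\big(\mathbf{W}_r^{(k)}\mathbf{h}_v^{(k-1)}+\mathbf{W}_a^{(k)}\sum_{z\in V}\mathbf{A}_{vz}\mathbf{h}_z^{(k-1)}\big)$, i.e.\ affine message and update maps composed with the pointwise nonlinearity $\sigma$, where $c_r$ and $c_a$ record the self-connection and aggregation contributions. Differentiating through $\sigma$, the block $\partial\mathbf{h}_v^{(k)}/\partial\mathbf{h}_z^{(k-1)}$ is a diagonal matrix of values of $\sigma'$ multiplied by a combination of weight matrices, weighted by $c_r$ when $z=v$ and by $c_a\mathbf{A}_{vz}$ when $z$ is a neighbour of $v$. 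Taking the operator norm: $\|\mathrm{diag}(\sigma')\|\le c_\sigma$ by the Lipschitz assumption; each $d\times d$ weight matrix has entries bounded by $w$, so $\|\mathbf{W}\|_2\le\|\mathbf{W}\|_F\le wd$ (equivalently $\|\mathbf{W}\|_2\le\sqrt{\|\mathbf{W}\|_1\|\mathbf{W}\|_\infty}\le wd$). Combining these gives the key estimate
\[
\left\|\frac{\partial\mathbf{h}_v^{(k)}}{\partial\mathbf{h}_z^{(k-1)}}\right\|\le c_\sigma\,w\,d\,\mathbf{O}_{vz}.
\]

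\textbf{Step 2: induction on $k$.} The base case $k=1$ is exactly Step 1 with $\mathbf{O}^1=\mathbf{O}$. For the inductive step, expand the dependence through the intermediate layer,
\[
\frac{\partial\mathbf{h}_v^{(k)}}{\partial\mathbf{h}_u^{(0)}}=\sum_{z\in V}\frac{\partial\mathbf{h}_v^{(k)}}{\partial\mathbf{h}_z^{(k-1)}}\frac{\partial\mathbf{h}_z^{(k-1)}}{\partial\mathbf{h}_u^{(0)}},
\]
then apply the triangle inequality, submultiplicativity of the operator norm, the Step-1 estimate, and the inductive hypothesis $\|\partial\mathbf{h}_z^{(k-1)}/\partial\mathbf{h}_u^{(0)}\|\le(c_\sigma wd)^{k-1}(\mathbf{O}^{k-1})_{zu}$, to obtain
\[
\left\|\frac{\partial\mathbf{h}_v^{(k)}}{\partial\mathbf{h}_u^{(0)}}\right\|\le(c_\sigma wd)^k\sum_{z\in V}\mathbf{O}_{vz}(\mathbf{O}^{k-1})_{zu}=(c_\sigma wd)^k(\mathbf{O}^k)_{vu},
\]
the last equality being the definition of matrix multiplication. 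It is here that non-negativity of $\mathbf{O}$ is used: without it, the sum of the per-term upper bounds need not coincide with an entry of $\mathbf{O}^k$.

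\textbf{Main obstacle.} The only non-routine part is Step 1 --- committing to a precise enough notion of ``standard MPNN'' that the layer Jacobian factors cleanly into (pointwise derivative)\,$\times$\,(weights)\,$\times$\,(message-passing structure), and bounding the weight contribution uniformly by $wd$. Once that factorization and bound are in place, the remainder is a standard induction over a non-negative matrix. If one wants genuinely non-linear message functions rather than affine ones, Step 1 instead needs Lipschitz bounds on those functions, with $c_r,c_a$ replaced by the corresponding Lipschitz constants; the structure of the argument is unchanged.
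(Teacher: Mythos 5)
This theorem is imported from \citet{di2023over} and the paper you are reading gives no proof of it, so the only meaningful comparison is with the cited source. Your argument --- a single-layer Jacobian estimate $\bigl\lVert \partial\mathbf{h}_v^{(k)}/\partial\mathbf{h}_z^{(k-1)} \bigr\rVert \le c_\sigma w d\,\mathbf{O}_{vz}$ chained by induction through the non-negative matrix $\mathbf{O}$ --- is correct and is essentially the standard proof given in that reference, including the caveat you note that for general (non-affine) message/update maps the constants $c_r, c_a$ must be read as Lipschitz-type bounds on those maps.
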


Theorem~\ref{theo:sensitivity_digiovanni} shows that the sensitivity of the node embedding is a combination of (i) a term based on the graph topology and (ii) a term dependent on the model dynamics, with over-squashing occurring when the right-hand side of Equation \eqref{eq:mpnn_over-squashing} becomes too small. We highlight that this differs from the standard product Jacobian, which arises in RNNs. This is because in MPNNs, messages are scaled by the inverse node degree, incurring an extra information dissipation step. Consequently, while recurrent architectures only need to adjust their dynamics to ensure long memory, MPNNs must \textit{simultaneously} enhance graph connectivity and modify their dynamics to mitigate vanishing and exploding gradients.

\begin{wraptable}{r}{0.35\textwidth}
  \centering
  \vspace{-15pt}     
  \scriptsize                       % make font really small
  \setlength{\tabcolsep}{2pt}       % tighten column padding
  \caption{Ablation on LRGB datasets. $d\uparrow$ adds latent dims; $-$ removes; $+$ adds.}
  \label{tab:accuracy}
  \vspace{2pt}
  \begin{tabular}{lcc}
    \toprule
    \multirow{2}{*}{\textbf{Model}} & \texttt{Pept-func} & \texttt{Pept-struct} \\
                           & {\scriptsize AP$\uparrow$} & {\scriptsize MAE$\downarrow$ ($\times10^{-2}$)} \\
    \midrule
    GCN                    & $60.93_{\pm0.138}$ & $33.41_{\pm0.041}$ \\
    \midrule
    kGCN-SSM               & $\underline{69.02}_{\pm0.218}$ & $28.98_{\pm0.324}$ \\
    $\,+d\uparrow$         & $\mathbf{72.12}_{\pm0.268}$    & $27.01_{\pm0.071}$ \\
    $\,-\mathrm{eig}(\Lambda)\approx1$ 
                           & $61.41_{\pm0.724}$              & $\mathbf{25.81}_{\pm0.032}$ \\
    $\,-\mathrm{SSM}$      & $57.76_{\pm1.971}$             & $\underline{26.02}_{\pm0.213}$ \\
    $\,-\mathrm{khop}$     & $60.93_{\pm0.138}$             & $33.41_{\pm0.041}$ \\
    %kGCN-SSM     &             &  \\
    \midrule
    DRew-GCN               & $68.04_{\pm1.442}$             & $27.66_{\pm0.187}$ \\
    $\,+d\uparrow$         & $68.05_{\pm0.626}$             & $27.64_{\pm0.067}$ \\
    $\,-\mathrm{Delay}$    & $49.02_{\pm2.512}$             & $27.08_{\pm0.041}$ \\
    \bottomrule
  \end{tabular}
  \vspace{-10pt} 
    \label{tab:LRGB}
\end{wraptable}

Even though the sensitivity bound in Theorem~\ref{theo:sensitivity_digiovanni} is controlled by two components, the majority of the literature has typically focused on addressing only the topological term via \textit{graph rewiring} \cite{diffusion_improves, topping2021understanding, karhadkar2022fosr, barbero2023locality, finkelshteincooperative2024}, with some methods also targeting the model dynamics \cite{gravina2022anti, gravina_swan, heilig2024injecting}. In fact, \citep{di2023over} explicitly discourages increasing the model term in Theorem~\ref{theo:sensitivity_digiovanni} and claims that doing so could lead to over-fitting and poorer generalization. However, we argue that increasing the model term, directly linked to vanishing gradients as discussed in Section~\ref{sec:over-smoothing}, is essential to mitigate over-squashing. Rather than harming performance, boosting this term helps prevent over-smoothing, since even in a well-connected graph where information can be reached in fewer hops, unaddressed vanishing gradients due to the model term will cause the target node's features to collapse during message passing. Frameworks combining these strategies include \citep{gutteridge2023drew}, which integrates graph rewiring with a delay term, and \citep{ding2024recurrent}, which merges multi-hop aggregation with ideas from SSMs.\footnote{Further links between the delay term and vanishing gradients are discussed in Appendix \ref{app:drew}. Further, we show that models tend to converge to the edge of chaos during training in Appendix \ref{app:additional_gpp}.} These approaches have generally led to state-of-the-art results, significantly improving performance over standalone rewiring. %techniques. 

\vspace{-0.25cm}

\subsection{Empirical validation of claims}\label{sec:over-squashing_results}
\vspace{-0.3cm}

We focus our empirical validation on answering the following questions: (i) What is the result of combining an effective rewiring scheme with vanishing gradient mitigation? (ii) Will this result in similar state-of-the-art results? 
To investigate this, we construct a minimal model that combines high connectivity with non-dissipativity. In particular, we make of the GNN-SSM model and employ a k-hop aggregation scheme for the coupling function $\mathbf{F}_{\boldsymbol{\theta}}$, which we term \texttt{kGNN-SSM} (more details are provided in Appendix~\ref{app:kgnn_ssm}).

\begin{wrapfigure}{r}{0.43\linewidth}  % ‘r’ for “right” wrap; width ≈ 45% of text width
  \vspace{-10pt}                       % tweak vertical placement if needed
\centering

    \includegraphics[width=0.49\linewidth]{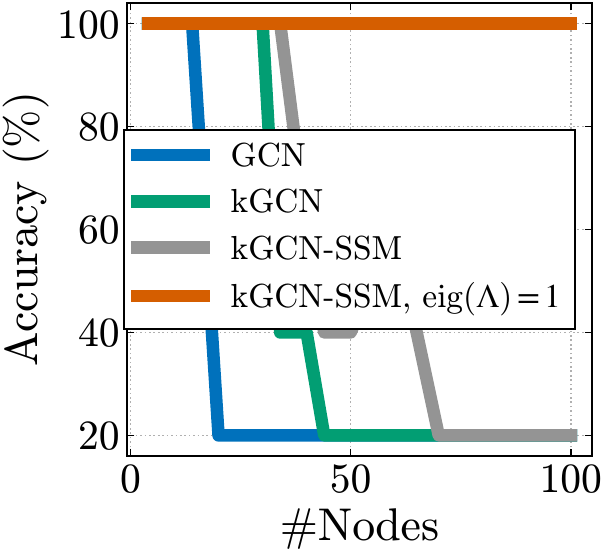}
    \includegraphics[width=0.44\linewidth]{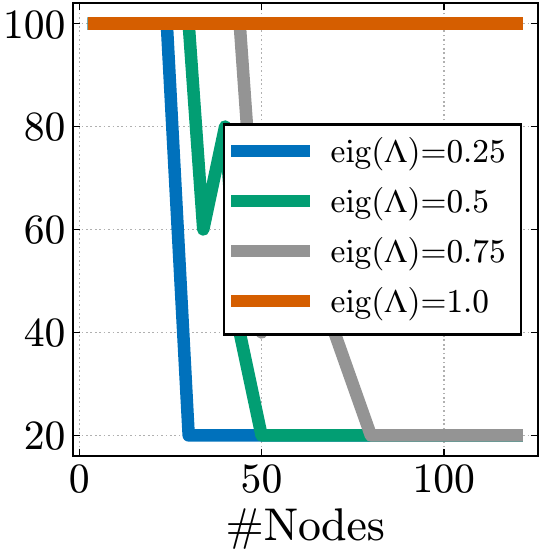}
    \caption{ \footnotesize\textbf{Left:} Performance on the RingTransfer task. \textbf{Right: }Effect of dissipativity.}
    \label{fig:ringtransfer}
  \vspace{-10pt}                       % more fine‐tuning of space below
\end{wrapfigure}

We start by testing the performance on the RingTransfer task introduced in \citep{di2023over}, as it is a task where we certifiably know that long-range dependencies exist. We modify the $\text{eig}(\Lambda)$ in the \texttt{kGNN-SSM} to move the Jacobian from the edge of stability to a progressively more dissipative state. The results are shown in Figure \ref{fig:ringtransfer}. From the figure, we see that (i) \texttt{kGNN-SSM} achieves state-of-the art performance only when coupling strong connectivity and an edge of chaos Jacobian (ii) making the model more dissipative directly results in worse long-range modeling capabilities. We believe the latter point demonstrates the importance of the model term in \Cref{theo:sensitivity_digiovanni}.

\begin{wraptable}{l}{0.52\textwidth}
%\begin{wraptable}{r}{9cm}%\begin{table}[h]
%\setlength{\tabcolsep}{1pt}
\vspace{-15pt}   
\centering
\caption{\footnotesize Mean and std. for test {\small$log_{10}(\mathrm{MSE})$} averaged over 4 random weight initializations on the GPP tasks%. Lower is better. 
%\one{First}, \two{second}, and \three{third} best results for each task are color-coded.  
}
\label{tab:results_GraphProp}
\scriptsize
\setlength{\tabcolsep}{4pt}
%\vspace{1mm}
\begin{tabular}{lccc}
\toprule
\textbf{Model} &\texttt{Diam.} & \texttt{SSSP} & \texttt{Ecc.} \\\midrule
GCN                 & 0.742$_{\pm0.047}$ & 0.950$_{\pm9.18\cdot10^{-5}}$ & 0.847$_{\pm0.003}$  \\ 
 $\,$ + SSM           & -2.431$_{\pm0.033}$ &    -2.821$_{\pm0.565}$  & -2.245$_{\pm0.003}$\\
 $\,$ + $\text{eig}(\Lambda)\approx 1$ & \underline{-2.444}$_{\pm0.098}$ & \underline{-3.593}$_{\pm0.103}$ & \underline{-2.258}$_{\pm0.009}$ \\
 $\,$ + k-hop         & \textbf{-3.075$_{\pm0.055}$} & \textbf{-3.604$_{\pm0.029}$} & \textbf{-4.265$_{\pm0.178}$} \\
\midrule
DRew-GCN  & -2.369$_{\pm0.105}$ & -1.591$_{\pm0.003}$ & -2.100$_{\pm0.026}$\\
$\,$ + delay &  -2.402$_{\pm0.110}$ & -1.602$_{\pm0.008}$ & -2.029$_{\pm0.024}$\\
\bottomrule      
\end{tabular}
\vspace{-0.4cm}
\end{wraptable}

Next, we ablate each component of the model on three graph property prediction tasks introduced in \citep{gravina2022anti} alongside the real-world long-range graph benchmark (LRGB) from \citep{dwivedi2022LRGB}, focusing on the \texttt{peptides-func} and \texttt{peptides-struct} tasks. Additional details regarding the datasets and the experimental setting are reported in Appendix~\ref{app:experimental_details}. Here, we focus on ablating the effect of rewiring, adding an SSM layer, and placing the model at the edge of chaos through $\mathbf{\Lambda}$. In the LRBG tasks, we additionally ablate the effect of increasing the hidden memory size, as we consider forty layers in the \texttt{peptides-func} dataset, which requires more long-range capabilities. Here, we also ablate \texttt{DRew} \cite{gutteridge2023drew} under the same settings. We also provide a more detailed comparison with other models in Appendix \ref{app:additional_gpp}, and provide additional comments around the LRGB tasks in Appendix \ref{app:additional_LRGB}.

The results are shown in Tables \ref{tab:LRGB} and \ref{tab:results_GraphProp}. Across the board, we observe that \texttt{kGNN-SSM} not only matches \texttt{DRew-Delay}, but also outperforms it by a large amount in all cases, showcasing the strength of our state-space approach. In particular, we generally observe significant decreases in performance when removing both the high connectivity and non-dissipativity components of the model, highlighting their individual importance. Finally, we see that increasing memory size plays a big role in the \texttt{peptides-func} task, which is in line with observations made the in sequence modeling literature \cite{gu2021}.

\begin{tcolorbox}[boxsep=0mm,left=2.5mm,right=2.5mm]
\textbf{Message of the Section:} {\em } The inability of GNN models to retrieve information from distant nodes arises from both graph connectivity and the model’s capability to avoid vanishing and exploding gradients. While most studies focus on connectivity, we argue that preserving signal strength through non-dissipative model dynamics is equally important. High connectivity allows nodes of interest to be reached in fewer message-passing steps while model dynamics ensure information is preserved. 
\end{tcolorbox}

\vspace{-0.25cm}
\section{Conclusion}
\vspace{-0.35cm}
\label{sec: Conclusion}
In this work, we revisit the well-known problems of representational collapse and over-squashing in GNNs from the lens of \textit{vanishing gradients}, by studying GNNs from the perspective of recurrent and state-space models. In particular, we show that GNNs are prone to a phenomenon we term \textit{extreme gradient vanishing}, which results in ill-conditioned signal propagation with few layers. As such, we argue that it is important to control the layerwise Jacobian and propose a state-space-inspired GNN model, termed \texttt{GNN-SSM}, to do so. We then uncover that vanishing gradients result in a \textit{specific} form of over-smoothing in which all signals converge exactly to a unique fixed point, and support this claim empirically.  Finally, we theoretically argue and empirically show that the mitigation of over-squashing is best achieved through a combination of strong graph connectivity and non-dissipative dynamics. \textbf{Impact Statement.} We aim to advance the field of machine learning. There are many potential societal consequences of our work, none of which we feel must be specifically highlighted here.

\begin{acksection}
AA and XD thank the Oxford-Man Institute for financial support. XD also acknowledges support from EPSRC No. EP/T023333/1. AA thanks T. Anderson Keller for engaging comments on early versions of the manuscript, Max Welling for valuable pointers to the dynamical isometry literature. %, and T. Konstantin Rusch for sharing the code used for the Dirichlet Energy experiments. 
AG and CG acknowledge funding from EU-EIC EMERGE (Grant No. 101070918). CG acknowledges support from NEURONE, a project funded by the European Union - Next Generation EU, M4C1 CUP I53D23003600006, under program PRIN 2022 (prj. code 20229JRTZA).
\end{acksection}

%\newpage
\bibliographystyle{plain}
\bibliography{references}

\newpage
\onecolumn
\appendix

\section{Theoretical Results}

\subsection{Proofs of Jacobian Theorems}
\label{app:proofs_jac}

\begin{definition}[Vectorization and Kronecker product]
\label{def:vectorization}

Let $\mathbf{X} \in \mathbb{R}^{m \times n}$ be a real matrix. The \emph{vectorization} of $\mathbf{X}$, denoted $\mathrm{vec}(\mathbf{X})$, is the $(mn)$-dimensional column vector obtained by stacking the columns of $\mathbf{X}$:
\[
  \mathrm{vec}(\mathbf{X}) 
  \;=\; 
  \begin{bmatrix}
    \mathbf{X}_{:,1} \\[6pt]
    \mathbf{X}_{:,2} \\
    \vdots \\
    \mathbf{X}_{:,n}
  \end{bmatrix}
  \;\in\;\mathbb{R}^{mn}.
\]
One key property of the vectorization operator is its relationship to the Kronecker product. In particular, for compatible matrices $\mathbf{A}, \mathbf{B}, \mathbf{C}$, we have
\[
  \mathrm{vec}\bigl(\mathbf{A}\,\mathbf{B}\,\mathbf{C}\bigr)
  \;=\;
  (\mathbf{C}^T \otimes \mathbf{A}) \,\mathrm{vec}\bigl(\mathbf{B}\bigr).
\]
Here, $\otimes$ denotes the Kronecker product.
\end{definition}

\begin{definition}[Wishart matrix]
\label{def:wishart}
    Let $\mathbf{X}\in \mathbb{R}^{n\times p}$ be a matrix with i.i.d.\ entries 
      $X_{ij}\sim \mathcal{N}(0,\sigma^2)$. The random matrix 
      $\mathbf{X}^T \mathbf{X}\in\mathbb{R}^{p\times p}$ is called a \emph{Wishart matrix} 
      (up to a scaling factor). In particular, such a matrix follows the Wishart distribution 
      $\mathcal{W}_p(n,\sigma^2)$ in certain parametrizations.
\end{definition}

\begin{definition}[Marchenko--Pastur distribution. \cite{marchenko1967distribution}]
\label{def:mp}

      In the high-dimensional limit ($n,p \to \infty$ at a fixed ratio $p/n \to c$), 
      the empirical eigenvalue distribution of the (properly normalized) Wishart matrix
      $\mathbf{X}^T \mathbf{X}$ converges to the \emph{Marchenko--Pastur distribution}. 
      Concretely, if $\mathbf{X}\in\mathbb{R}^{n\times p}$ has entries 
      $\mathcal{N}(0,1)$, then the eigenvalues of $\mathbf{X}^T \mathbf{X}$ lie within 
      $[(1-\sqrt{c})^2,\,(1+\sqrt{c})^2]$ for large $n,p$, and their density 
      converges to
      \[
        f_{\mathrm{MP}}(x) \;=\; \frac{1}{2\pi c\,x}\,\sqrt{(x - a_{\min})(a_{\max} - x)},
        \quad
        x \in [a_{\min}, a_{\max}],
      \]
      with $a_{\min} = (1-\sqrt{c})^2$ and $a_{\max} = (1+\sqrt{c})^2$. 
      If the entries of $\mathbf{X}$ have variance $\sigma^2 \neq 1$, then 
      the support is rescaled by $\sigma^2$.
\end{definition}

\begin{lemma}[Spectrum of the Jacobian's singular values]
Let $\mathbf{H}^{(k)} \;=\; \tilde{\mathbf{A}}\;\mathbf{H}^{(k-1)}\;\mathbf{W}$  be a linear GCN layer, where $\tilde{\mathbf{A}}$ has eigenvalues $\{\lambda_1,\ldots,\lambda_n\}$ and $\mathbf{W}\,\mathbf{W}^T$ has eigenvalues $\{\mu_1,\ldots,\mu_{d_k}\}$. Consider the layer-wise Jacobian $\mathbf{J} = \partial\,\mathrm{vec}\bigl(\mathbf{H}^{(k)}\bigr) / \partial\,\mathrm{vec}\bigl(\mathbf{H}^{(k-1)}\bigr)$, Then the squared singular values of $\mathbf{J}$ are given by the set 
\[
  \bigl\{\,\lambda_i^2 \,\mu_j \;\bigm|\;
    i=1,\ldots,n,\;\; j=1,\ldots,d_k\bigr\}.
\]

\begin{proof}
By the property of vectorization (Definition~\ref{def:vectorization}), we have
\[
  \mathrm{vec}\bigl(\tilde{\mathbf{A}}\,\mathbf{H}^{(k-1)}\,\mathbf{W}\bigr)
  \;=\;
  (\mathbf{W}^T \otimes \tilde{\mathbf{A}})
  \,\mathrm{vec}\bigl(\mathbf{H}^{(k-1)}\bigr).
\]
Hence 
\[
  \mathbf{J}
  \;=\;
  \mathbf{W}^T\otimes\tilde{\mathbf{A}}.
\]
By properties of the Kronecker product, the eigenvalues of $\mathbf{J}\,\mathbf{J}^T$ are the products of the eigenvalues of 
$\mathbf{W}^T \mathbf{W}$ and $\tilde{\mathbf{A}}^2$. Equivalently,
\[
  \mathrm{spec}\bigl(\mathbf{J}\,\mathbf{J}^T\bigr)
  \;=\;
  \mathrm{spec}\bigl(\mathbf{W}^T\mathbf{W}\bigr)
    \;\;\otimes\;\;
  \mathrm{spec}\bigl(\tilde{\mathbf{A}}^2\bigr),
\]
where $\mathrm{spec}$ is the vectorized version of the set of eigenvalues of a matrix. If $\mathbf{W}^T \mathbf{W}$ has eigenvalues $\{\mu_j\}_{j=1}^{d_k}$ and
$\tilde{\mathbf{A}}^2$ has eigenvalues $\{\lambda_i^2\}_{i=1}^n$, 
then the squared singular values of $\mathbf{J}$ are precisely 
$\lambda_i^2 \,\mu_j$ for $i\in\{1,\ldots,n\}$, $j\in\{1,\ldots,d_k\}$. 
\end{proof}
\end{lemma}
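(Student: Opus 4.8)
The plan is to exploit the fact that the layer map is \emph{linear} once features are vectorized, so that its Jacobian is the map itself, and then to read off the singular values via the Kronecker structure. First I would apply the identity $\mathrm{vec}(\mathbf{P}\mathbf{Q}\mathbf{R}) = (\mathbf{R}^T \otimes \mathbf{P})\,\mathrm{vec}(\mathbf{Q})$ from Definition~\ref{def:vectorization} with $\mathbf{P}=\tilde{\mathbf{A}}$, $\mathbf{Q}=\mathbf{H}^{(k-1)}$, $\mathbf{R}=\mathbf{W}$, obtaining $\mathrm{vec}(\mathbf{H}^{(k)}) = (\mathbf{W}^T \otimes \tilde{\mathbf{A}})\,\mathrm{vec}(\mathbf{H}^{(k-1)})$. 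Since this is linear in $\mathrm{vec}(\mathbf{H}^{(k-1)})$, the Jacobian is constant and equals $\mathbf{J} = \mathbf{W}^T \otimes \tilde{\mathbf{A}}$.

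Next I would form the Gram matrix $\mathbf{J}\mathbf{J}^T$, whose eigenvalues are by definition the squared singular values of $\mathbf{J}$. Using $(\mathbf{W}^T \otimes \tilde{\mathbf{A}})^T = \mathbf{W}\otimes\tilde{\mathbf{A}}^T$, the mixed-product property $(\mathbf{P}\otimes\mathbf{Q})(\mathbf{R}\otimes\mathbf{S}) = (\mathbf{P}\mathbf{R})\otimes(\mathbf{Q}\mathbf{S})$, and the symmetry $\tilde{\mathbf{A}}=\tilde{\mathbf{A}}^T$ of the normalized adjacency, this collapses to $\mathbf{J}\mathbf{J}^T = (\mathbf{W}^T\mathbf{W})\otimes\tilde{\mathbf{A}}^2$. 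I would then invoke the standard fact that $\mathrm{spec}(\mathbf{M}\otimes\mathbf{N})$ is the multiset of pairwise products $\{\alpha\beta : \alpha\in\mathrm{spec}(\mathbf{M}),\ \beta\in\mathrm{spec}(\mathbf{N})\}$, applied with $\mathbf{M}=\mathbf{W}^T\mathbf{W}$ (eigenvalues $\mu_j$) and $\mathbf{N}=\tilde{\mathbf{A}}^2$ (eigenvalues $\lambda_i^2$, since $\tilde{\mathbf{A}}$ has eigenvalues $\lambda_i$). This gives precisely $\{\lambda_i^2\mu_j : i=1,\dots,n,\ j=1,\dots,d_k\}$ as the squared singular values, which is the claim.

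There is no real analytic difficulty here; the whole argument is a short chain of elementary matrix identities. The only point requiring care is the bookkeeping induced by the rectangular shape $\mathbf{W}\in\mathbb{R}^{d_{k-1}\times d_k}$: the Gram matrix naturally produces $\mathbf{W}^T\mathbf{W}$, whereas the statement is phrased through $\mathbf{W}\mathbf{W}^T$, so I would explicitly remark that $\mathbf{W}^T\mathbf{W}$ and $\mathbf{W}\mathbf{W}^T$ have the same nonzero eigenvalues (differing only by trailing zeros), and keep track of multiplicities so that the count $n d_k$ of listed values is consistent with the dimensions of $\mathbf{J}\mathbf{J}^T$. This symmetry of $\tilde{\mathbf{A}}$ is the place where the argument genuinely uses that we are working with the normalized (rather than an arbitrary) adjacency.
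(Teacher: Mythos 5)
Your proposal is correct and follows essentially the same route as the paper's proof: vectorization to identify $\mathbf{J} = \mathbf{W}^T \otimes \tilde{\mathbf{A}}$, the mixed-product property together with the symmetry of $\tilde{\mathbf{A}}$ to collapse the Gram matrix to $(\mathbf{W}^T\mathbf{W})\otimes\tilde{\mathbf{A}}^2$, and the standard Kronecker spectrum fact. Your closing remark on the $\mathbf{W}^T\mathbf{W}$ versus $\mathbf{W}\mathbf{W}^T$ bookkeeping is in fact more careful than the paper itself, whose statement indexes the eigenvalues of $\mathbf{W}\mathbf{W}^T$ by $j=1,\ldots,d_k$ even though that matrix is $d_{k-1}\times d_{k-1}$.
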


\begin{theorem}[Jacobian singular-value distribution]
Assume the setting of Lemma~\ref{lem:JacobianSpectrum}, and let 
$\mathbf{W}\in\mathbb{R}^{d_{k-1}\times d_k}$ be initialized with i.i.d.\ 
$\mathcal{N}(0,\sigma^2)$ entries. Denote the squared singular values of the 
Jacobian by $\gamma_{i,j}$. Then, for sufficiently large $d_k$ the empirical eigenvalue distribution $\mathbf{W}\mathbf{W}^T$ converges to the 
Marchenko-Pastur distribution. Then, 
the mean and variance of each $\gamma_{i,j}$ are
\begin{align}
  \mathbb{E}\bigl[\gamma_{i,j}\bigr]
  &= 
  \lambda_i^2 \,\sigma^2,
   \\[6pt]
  \mathrm{Var}\bigl[\gamma_{i,j}\bigr]
  &=
  \lambda_i^4 \,\sigma^4\,\frac{d_k}{d_{k-1}}.
  \label{eq:var_app}
\end{align}

\begin{proof}
In this setting, $\mathbf{W}\mathbf{W}^T$ is  Wishart if $\mathbf{W}$ has i.i.d.\ Gaussian entries. Its eigenvalues 
$\mu_j$ thus converge to the Marchenko--Pastur distribution for large $d_k$. From standard results on the moments of Wishart 
eigenvalues, 
\[
  \mathbb{E}(\mu_j)
  \;=\;
  \sigma^2, 
  \quad
  \mathrm{Var}(\mu_j)
  \;=\;
  \sigma^4 \,\frac{d_k}{d_{k-1}}.
\]
Since $\gamma_{i,j} = \lambda_i^2 \,\mu_j$, we obtain 
\[
  \mathbb{E}[\gamma_{i,j}]
  \;=\; 
  \lambda_i^2\,\mathbb{E}[\mu_j]
  \;=\;
  \lambda_i^2\,\sigma^2,
\]
\[
  \mathrm{Var}[\gamma_{i,j}]
  \;=\;
  \lambda_i^4\,\mathrm{Var}(\mu_j)
  \;=\;
  \lambda_i^4 \,\sigma^4 \,\frac{d_k}{d_{k-1}}.
\]
This completes the proof.
\end{proof}
\end{theorem}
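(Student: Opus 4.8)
The plan is to reduce the claim to the well-known first two spectral moments of a Wishart matrix and then transport them through the deterministic rescaling by $\lambda_i^2$. First I would invoke Lemma~\ref{lem:JacobianSpectrum}, which already gives that the squared singular values of the layer-wise Jacobian are exactly $\gamma_{i,j} = \lambda_i^2\,\mu_j$, where the $\mu_j$ are the (nonzero) eigenvalues of $\mathbf{W}\mathbf{W}^T$, equivalently of $\mathbf{W}^T\mathbf{W}$. Since $\tilde{\mathbf{A}}$ is symmetric, each $\lambda_i$ is real and $\lambda_i^2$ is a fixed non-negative constant independent of $\mathbf{W}$, so the only randomness in $\gamma_{i,j}$ is carried by $\mu_j$; it therefore suffices to compute $\mathbb{E}[\mu_j]$ and $\mathrm{Var}[\mu_j]$ and then multiply by $\lambda_i^2$ and $\lambda_i^4$ respectively.

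For the spectral moments, I would use that $\mathbf{W}\in\mathbb{R}^{d_{k-1}\times d_k}$ with i.i.d.\ $\mathcal{N}(0,\sigma^2)$ entries makes $\mathbf{W}^T\mathbf{W}$ a Wishart matrix in the sense of Definition~\ref{def:wishart}. Fixing the normalization so that $\mu_j$ denotes a typical eigenvalue of the sample second-moment matrix $\tfrac{1}{d_{k-1}}\mathbf{W}^T\mathbf{W}$, its empirical spectral distribution converges, as $d_{k-1},d_k\to\infty$ with $d_k/d_{k-1}\to c$, to the Marchenko--Pastur law of Definition~\ref{def:mp} with shape parameter $c$ and scale $\sigma^2$. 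Reading off the first two moments of $f_{\mathrm{MP}}$ gives $\mathbb{E}[\mu_j] = \int x\, f_{\mathrm{MP}}(x)\,dx = \sigma^2$ and $\int x^2\, f_{\mathrm{MP}}(x)\,dx = \sigma^4(1+c)$, hence $\mathrm{Var}[\mu_j] = \sigma^4(1+c) - \sigma^4 = \sigma^4 c = \sigma^4\, d_k/d_{k-1}$. A self-contained alternative avoiding any asymptotics is to compute the Gaussian moments $\mathbb{E}\,\mathrm{tr}(\mathbf{W}\mathbf{W}^T)$ and $\mathbb{E}\,\mathrm{tr}\big((\mathbf{W}\mathbf{W}^T)^2\big)$ via Wick's theorem, from which the per-eigenvalue first and second moments of the appropriately normalized matrix follow by collecting powers of $d_k$ and $d_{k-1}$; the factor $d_k/d_{k-1}$ surfaces the same way.

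Combining the two parts via linearity of expectation and $\mathrm{Var}[cX]=c^2\,\mathrm{Var}[X]$ for a deterministic $c$, I obtain $\mathbb{E}[\gamma_{i,j}] = \lambda_i^2\,\mathbb{E}[\mu_j] = \lambda_i^2\sigma^2$ and $\mathrm{Var}[\gamma_{i,j}] = \lambda_i^4\,\mathrm{Var}[\mu_j] = \lambda_i^4\sigma^4\, d_k/d_{k-1}$, which are exactly the two displayed identities.

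The hard part — indeed the only delicate point — is bookkeeping the normalization and pinning down the precise meaning of ``the mean and variance of $\mu_j$''. The cleanest reading is as the first two moments of the limiting empirical spectral distribution (equivalently, the per-eigenvalue moments of the appropriately scaled Wishart), and one must be consistent about whether $\mu_j$ refers to an eigenvalue of $\mathbf{W}\mathbf{W}^T$, of $\mathbf{W}^T\mathbf{W}$, or of a $1/d_{k-1}$-scaled version, since these differ by dimension-dependent factors that would otherwise shift the apparent constants; the choice $c = d_k/d_{k-1}$ is precisely what makes $\mathbb{E}[\mu_j]=\sigma^2$ and $\mathrm{Var}[\mu_j] = \sigma^4 c$. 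Once that convention is fixed, the remaining steps are routine.
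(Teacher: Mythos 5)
Your proposal follows essentially the same route as the paper's proof: reduce $\gamma_{i,j}=\lambda_i^2\,\mu_j$ to the first two spectral moments of the Wishart matrix (via Marchenko--Pastur or, equivalently, Wick-type trace computations) and transport them through the deterministic factor $\lambda_i^2$. Your explicit handling of the normalization convention --- working with $\tfrac{1}{d_{k-1}}\mathbf{W}^T\mathbf{W}$ so that $\mathbb{E}[\mu_j]=\sigma^2$ and $\mathrm{Var}[\mu_j]=\sigma^4\, d_k/d_{k-1}$ --- is in fact more careful than the paper, which simply cites ``standard results on the moments of Wishart eigenvalues'' without pinning down the scaling under which those identities hold.
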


\begin{proposition}[Effect of state-space matrices]
Consider the setting in \eqref{eq:ssm} and $\Gamma = \partial\ \mathrm{vec}(\mathbf{F}_{\boldsymbol{\theta}}(\mathbf{H}^{(k)}))/\partial\ \mathrm{vec}(\mathbf{H}^{(k)})$. Let $\otimes$ denote the Kronecker product.  Then, the norm of the vectorized Jacobian $\mathbf{J}$ is bounded as:
\begin{align}
\|\mathbf{J}\|_2 &\leq \|I_{d_k} \otimes \mathbf{\Lambda}\|_2 + \|I_{d_k} \otimes \mathbf{B}\|_2 \|\Gamma\|_2 \nonumber \\
&= \|\mathbf{\Lambda}\|_2 + \|\mathbf{B}\|_2 \|\Gamma\|_2,
\end{align}
\end{proposition}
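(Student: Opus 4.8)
The plan is to obtain an explicit closed form for the vectorized Jacobian $\mathbf{J} = \partial\,\mathrm{vec}(\mathbf{H}^{(k+1)})/\partial\,\mathrm{vec}(\mathbf{H}^{(k)})$ by differentiating \eqref{eq:ssm} term by term, and then to bound its spectral norm with the triangle inequality together with the Kronecker-product identities already used to prove Lemma~\ref{lem:JacobianSpectrum}.

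First I would vectorize the update. Since $\mathbf{\Lambda}$ and $\mathbf{B}$ act on $\mathbf{H}^{(k)}$ and $\mathbf{F}_{\boldsymbol{\theta}}(\mathbf{H}^{(k)},k)$ respectively by left multiplication, the identity $\mathrm{vec}(\mathbf{M}\mathbf{X}) = (\mathbf{I} \otimes \mathbf{M})\,\mathrm{vec}(\mathbf{X})$ from Definition~\ref{def:vectorization} (taking the right factor to be $\mathbf{I}_{d_k}$) gives $\mathrm{vec}(\mathbf{H}^{(k+1)}) = (\mathbf{I}_{d_k} \otimes \mathbf{\Lambda})\,\mathrm{vec}(\mathbf{H}^{(k)}) + (\mathbf{I}_{d_k} \otimes \mathbf{B})\,\mathrm{vec}(\mathbf{F}_{\boldsymbol{\theta}}(\mathbf{H}^{(k)},k))$. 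Applying the chain rule, and using that $\mathbf{\Lambda}$ and $\mathbf{B}$ are constant (fixed, not functions of $\mathbf{H}^{(k)}$), yields $\mathbf{J} = (\mathbf{I}_{d_k} \otimes \mathbf{\Lambda}) + (\mathbf{I}_{d_k} \otimes \mathbf{B})\,\Gamma$, where $\Gamma = \partial\,\mathrm{vec}(\mathbf{F}_{\boldsymbol{\theta}}(\mathbf{H}^{(k)}))/\partial\,\mathrm{vec}(\mathbf{H}^{(k)})$ is precisely the matrix named in the statement.

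Next I would apply the triangle inequality for the operator $2$-norm to split $\|\mathbf{J}\|_2 \le \|\mathbf{I}_{d_k} \otimes \mathbf{\Lambda}\|_2 + \|(\mathbf{I}_{d_k} \otimes \mathbf{B})\,\Gamma\|_2$, then submultiplicativity to peel off $\Gamma$, giving $\|(\mathbf{I}_{d_k} \otimes \mathbf{B})\,\Gamma\|_2 \le \|\mathbf{I}_{d_k} \otimes \mathbf{B}\|_2\,\|\Gamma\|_2$. The last ingredient is the fact that the spectral norm of a Kronecker product factorizes, $\|\mathbf{P} \otimes \mathbf{Q}\|_2 = \|\mathbf{P}\|_2\,\|\mathbf{Q}\|_2$, because the singular values of $\mathbf{P}\otimes\mathbf{Q}$ are the pairwise products of those of $\mathbf{P}$ and $\mathbf{Q}$; taking $\mathbf{P} = \mathbf{I}_{d_k}$, whose singular values are all $1$, gives $\|\mathbf{I}_{d_k}\otimes\mathbf{\Lambda}\|_2 = \|\mathbf{\Lambda}\|_2$ and $\|\mathbf{I}_{d_k}\otimes\mathbf{B}\|_2 = \|\mathbf{B}\|_2$. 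Combining the three inequalities yields the claimed bound.

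No step here is genuinely difficult; the only points requiring care are (i) fixing the vectorization convention (column-stacking, as in Definition~\ref{def:vectorization}) so that $\mathbf{\Lambda}$ and $\mathbf{B}$ land in the \emph{second} Kronecker factor and the identity matrix in the first, and (ii) the implicit smoothness assumption that $\mathbf{F}_{\boldsymbol{\theta}}(\cdot,k)$ is differentiable at the point of evaluation so that $\Gamma$ exists — for ReLU-type couplings one would pass to the Clarke generalized Jacobian, exactly as in the over-smoothing analysis. If a bound uniform over a region is wanted rather than a pointwise one, one replaces $\|\Gamma\|_2$ by its supremum over that region and the argument is otherwise unchanged.
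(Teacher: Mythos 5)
Your proposal is correct and follows essentially the same route as the paper's proof: write $\mathbf{J} = (I_{d_k}\otimes\mathbf{\Lambda}) + (I_{d_k}\otimes\mathbf{B})\,\Gamma$, apply the triangle inequality and submultiplicativity, then use $\|I_{d_k}\otimes\mathbf{M}\|_2 = \|\mathbf{M}\|_2$. The only difference is that you explicitly derive the expression for $\mathbf{J}$ from the vectorization identity and the chain rule, whereas the paper takes it as the starting point; that is a harmless (indeed slightly more complete) addition.
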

\begin{proof}
We start by writing
\[
  \mathbf{J} 
  \;=\; (I_{d_k} \otimes \mathbf{\Lambda}) \;+\; (I_{d_k} \otimes \mathbf{B})\,\Gamma.
\]
Using the triangle inequality for the spectral norm,
\[
  \|\mathbf{J}\|_2
  \;=\;
  \bigl\|
      (I_{d_k} \otimes \mathbf{\Lambda})
      \;+\;
      (I_{d_k} \otimes \mathbf{B})\,\Gamma
  \bigr\|_2
  \;\le\;
  \|I_{d_k} \otimes \mathbf{\Lambda}\|_2
  \;+\;
  \|(I_{d_k} \otimes \mathbf{B})\,\Gamma\|_2.
\]
By the submultiplicative property of the spectral norm,
\[
  \|(I_{d_k} \otimes \mathbf{B})\,\Gamma\|_2
  \;\le\;
  \|I_{d_k} \otimes \mathbf{B}\|_2 \,\|\Gamma\|_2.
\]
Since \(\|I_{d_k} \otimes \mathbf{M}\|_2 = \|\mathbf{M}\|_2\) for any matrix \(\mathbf{M}\), we obtain
\[
  \|I_{d_k} \otimes \mathbf{\Lambda}\|_2
  \;=\;
  \|\mathbf{\Lambda}\|_2
  \quad \text{and} \quad
  \|I_{d_k} \otimes \mathbf{B}\|_2
  \;=\;
  \|\mathbf{B}\|_2.
\]
Hence,
\[
  \|\mathbf{J}\|_2
  \;\le\;
  \|\mathbf{\Lambda}\|_2
  \;+\;
  \|\mathbf{B}\|_2\,\|\Gamma\|_2.
\]
\end{proof}

\subsection{Proofs to Smoothing Theorems}
\label{app:smoothing-results}
\begin{definition}[Lipschitz continuity]
    A function $f: \mathbb{R}^n \to \mathbb{R}^m$ is Lipschitz continuous if there exists an $L \geq 0$ such that for all $\mathbf{x}, \mathbf{y} \in \mathbb{R}^n$, we have that: 

    \begin{equation*}
        \left \lVert f(\mathbf{x}) - f(\mathbf{y}) \right \rVert \leq L \left \lVert \mathbf{x} - \mathbf{y} \right \rVert, 
    \end{equation*}

where we equip $\mathbb{R}^n$ and $\mathbb{R}^m$ with their respective norms. The minimal such $L$ is called the Lipschitz constant of $f$.
\end{definition}

The notion of Lipschitz continuity is effectively a bound on the rate of change of a function. It is therefore not surprising that one can relate the Lipschitz constant to the Jacobian of $f$. In particular, we state a useful and well-known result \cite{hassan2002nonlinear} that relates the (continuous) Jacobian map $\mathbf{J}_f$ of a continuous function $f: \mathbb{R}^n \to \mathbb{R}^m$ to its Lipschitz constant $L \geq 0$. In particular, the Lipschitz constant is is the supremum of the (induced) norm of the Jacobian taken over its domain.
\begin{lemma}[\citep{hassan2002nonlinear}]
    Let $f: \mathbb{R}^n \to \mathbb{R}^m$ be continuous, with continuous Jacobian $\mathbf{J}_f$. Consider a convex set $U \subseteq \mathcal{R}^n$ If there exists $L \geq 0$ such that $\left \lVert \mathbf{J}_f(\mathbf{x}) \right \rVert \leq L$ for all $\mathbf{x} \in U$, then $\left\lVert f(\mathbf{x}) - f(\mathbf{y}) \right \rVert \leq L \left\lVert \mathbf{x} - \mathbf{y} \right \rVert$. In particular, we have that the Lipschitz constant of $f$ $L$ is: \begin{equation*}
        L = \sup_{\mathbf{x} \in U} \left \lVert \mathbf{J}_f(\mathbf{x}) \right \rVert.
    \end{equation*}
\end{lemma}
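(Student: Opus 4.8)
The plan is to reduce the multivariate statement to a one-dimensional one by integrating the Jacobian along the straight-line segment joining the two points, which is legitimate precisely because $U$ is convex. Concretely, fix $\mathbf{x},\mathbf{y}\in U$ and set $\boldsymbol{\gamma}(t) = \mathbf{x} + t(\mathbf{y}-\mathbf{x})$ for $t\in[0,1]$; convexity guarantees $\boldsymbol{\gamma}(t)\in U$ for all such $t$. I would then define $\mathbf{g}(t) = f(\boldsymbol{\gamma}(t))$, which is $C^1$ on $[0,1]$ since $f$ has a continuous Jacobian, and apply the chain rule to obtain $\mathbf{g}'(t) = \mathbf{J}_f(\boldsymbol{\gamma}(t))(\mathbf{y}-\mathbf{x})$.

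Next I would invoke the fundamental theorem of calculus componentwise to write $f(\mathbf{y}) - f(\mathbf{x}) = \mathbf{g}(1) - \mathbf{g}(0) = \int_0^1 \mathbf{J}_f(\boldsymbol{\gamma}(t))(\mathbf{y}-\mathbf{x})\,dt$, the vector-valued integral being taken entrywise. Taking norms and using $\norm{\int_0^1 \mathbf{v}(t)\,dt} \le \int_0^1 \norm{\mathbf{v}(t)}\,dt$ (valid for continuous $\mathbf{v}$, e.g.\ by applying the scalar triangle inequality coordinatewise, or by a duality argument), followed by the submultiplicativity bound $\norm{\mathbf{J}_f(\boldsymbol{\gamma}(t))(\mathbf{y}-\mathbf{x})} \le \norm{\mathbf{J}_f(\boldsymbol{\gamma}(t))}\,\norm{\mathbf{y}-\mathbf{x}}$ and the hypothesis $\norm{\mathbf{J}_f(\boldsymbol{\gamma}(t))}\le L$, yields $\norm{f(\mathbf{y}) - f(\mathbf{x})} \le L\,\norm{\mathbf{y}-\mathbf{x}}$. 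This establishes that every uniform bound $L$ on $\norm{\mathbf{J}_f}$ over $U$ is a Lipschitz constant of $f$ on $U$.

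For the second assertion — that the minimal Lipschitz constant equals $\sup_{\mathbf{x}\in U}\norm{\mathbf{J}_f(\mathbf{x})}$ — I would prove the reverse inequality: any Lipschitz constant $L$ is at least this supremum. For an interior point $\mathbf{x}\in U$ and a unit vector $\mathbf{v}$, the point $\mathbf{x}+h\mathbf{v}$ lies in $U$ for $\lvert h\rvert$ small, so $\norm{\,(f(\mathbf{x}+h\mathbf{v}) - f(\mathbf{x}))/h\,} \le L$; letting $h\to 0$ gives $\norm{\mathbf{J}_f(\mathbf{x})\mathbf{v}} \le L$, and taking the supremum over unit $\mathbf{v}$ gives $\norm{\mathbf{J}_f(\mathbf{x})} \le L$. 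Continuity of $\mathbf{J}_f$ then extends this bound to all of $U$, hence $\sup_{\mathbf{x}\in U}\norm{\mathbf{J}_f(\mathbf{x})} \le L$; combined with the first part, the smallest Lipschitz constant is exactly that supremum.

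I expect the only real difficulties to be bookkeeping: justifying the vector-valued fundamental theorem of calculus and the ``norm of an integral'' inequality (both immediate once one works coordinatewise using continuity of $t\mapsto \mathbf{J}_f(\boldsymbol{\gamma}(t))$), and handling boundary points of $U$ in the reverse direction — which is precisely why continuity of $\mathbf{J}_f$, rather than mere differentiability, is assumed. Since this is a classical fact, I would ultimately just cite \citet{hassan2002nonlinear} for the statement and include the short segment-integration argument above for completeness.
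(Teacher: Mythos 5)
Your proposal is correct and is the standard textbook argument; note that the paper itself gives no proof of this lemma at all --- it is stated as a known result and attributed to \citet{hassan2002nonlinear}, so your segment-integration derivation is exactly the proof the citation stands in for. Two small caveats worth being aware of if you include the argument for completeness: (i) both the submultiplicativity step $\lVert \mathbf{J}_f(\boldsymbol{\gamma}(t))(\mathbf{y}-\mathbf{x})\rVert \le \lVert \mathbf{J}_f(\boldsymbol{\gamma}(t))\rVert\,\lVert\mathbf{y}-\mathbf{x}\rVert$ and your reverse-direction difference-quotient argument implicitly require the matrix norm to be the operator norm induced by the chosen vector norms, which is how the paper uses the result; and (ii) your reverse inequality only varies $\mathbf{x}+h\mathbf{v}$ inside $U$, so if $U$ has empty interior in $\mathbb{R}^n$ (a lower-dimensional convex set) you only control $\mathbf{J}_f(\mathbf{x})$ on the directions of the affine hull of $U$, and the stated equality should be read as the Lipschitz constant of $f$ restricted to $U$. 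Neither point affects the forward implication, which is the only part the paper actually relies on in Propositions \ref{prop:convergence-to-fixed-point} and \ref{prop:dir-energy-to-0}.
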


The condition of $U$ being convex is a technicality that is easily achieved in practice with the assumption that input features are bounded and that therefore they live in a convex hull $U$. In particular, at each layer $k$ one can also find a convex hull $U_k$ such that the image of the layer $k-1$ is contained within $U_k$. We highlight that for non-linearities such as ReLU, there are technical difficulties when taking this supremum as there is a non-differentiable point at $0$. This can be circumvented by considering instead a supremum of the (Clarke) generalized Jacobian \citep{jordan2020exactly}. We ignore this small detail in this work for simplicity as for ReLU this is equivalent to considering the supremum over $U/\mathbf{0}$, i.e. simply ignoring the problematic point $\mathbf{0}$.

\begin{lemma}
    Consider a GNN layer $f_\ell$ as in Equation \ref{eq:gcn}, with non-linearity $\sigma$ such that $\sigma(0) = 0$ (e.g. ReLU or $\tanh$). Then, $f(\mathbf{0}) = \mathbf{0}$, i.e. $\mathbf{0}$ is a fixed point of $f$.
\end{lemma}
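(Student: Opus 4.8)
The plan is to verify directly that substituting the all-zeros vector into a single GCN layer returns the all-zeros vector, and then extend this to the composition $f = f_K \circ \dots \circ f_1$ by induction. The key observation is that the update map in Equation \ref{eq:gcn} is of the form $\mathbf{H} \mapsto \sigma(\hat{\mathbf{A}}\mathbf{H}\mathbf{W})$ (acting entrywise through $\sigma$ on the matrix version, or equivalently on the vectorized version via the Kronecker structure $\mathbf{W}^\top \otimes \hat{\mathbf{A}}$ as in Lemma \ref{lem:JacobianSpectrum}), and every ingredient in this composition annihilates zero.

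First I would note that $\hat{\mathbf{A}} \mathbf{0} \mathbf{W} = \mathbf{0}$, simply because matrix multiplication is linear: multiplying the zero matrix by anything on either side yields the zero matrix. Hence the pre-activation at a single layer evaluated at $\mathbf{H} = \mathbf{0}$ is the zero matrix. Then I would apply the hypothesis on the non-linearity: since $\sigma(0) = 0$ and $\sigma$ is applied entrywise, we get $\sigma(\mathbf{0}) = \mathbf{0}$. This shows $f_k(\mathbf{0}) = \mathbf{0}$ for each individual layer $k$ (the weight matrices $\mathbf{W}^{(k-1)}$ may differ across layers, but the argument is identical for each). Finally, a trivial induction on the number of layers gives $f(\mathbf{0}) = f_K(f_{K-1}(\cdots f_1(\mathbf{0}) \cdots)) = f_K(\cdots f_2(\mathbf{0}) \cdots) = \dots = \mathbf{0}$, so $\mathbf{0}$ is a fixed point of the full GNN $f$.

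There is essentially no obstacle here: the statement is a straightforward consequence of the linearity of the aggregation and combination steps together with the assumption $\sigma(0) = 0$. The only point requiring a word of care is the ReLU case, where $\sigma$ is not differentiable at $0$; but this is irrelevant for the fixed-point claim itself, which only uses the function value $\sigma(0)=0$ and not differentiability, so I would simply remark that the non-smoothness plays no role in this particular lemma (it matters only later, when Jacobians of the contraction are discussed, and is handled there via the Clarke generalized Jacobian as already noted in the excerpt). If one wishes, one can phrase the whole argument in the vectorized picture: the layer map is $\mathrm{vec}(\mathbf{H}) \mapsto \sigma\bigl((\mathbf{W}^\top \otimes \hat{\mathbf{A}})\,\mathrm{vec}(\mathbf{H})\bigr)$, and since the linear operator sends $\mathbf{0}$ to $\mathbf{0}$ and $\sigma(\mathbf{0}) = \mathbf{0}$, the conclusion is immediate and consistent with the matrix-form argument.
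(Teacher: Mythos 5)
Your proof is correct and matches the paper's: the paper likewise computes $f_\ell(\mathbf{0}) = \sigma(\hat{\mathbf{A}}\mathbf{0}\mathbf{W}) = \sigma(\mathbf{0}) = \mathbf{0}$ and leaves the extension to the composition implicit. Your explicit induction over layers and the remark on ReLU's non-differentiability being irrelevant here are harmless elaborations of the same argument.
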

\begin{proof}
     $f_\ell(\mathbf{0}) = \sigma\left(\hat{\mathbf{A}}\mathbf{0}\mathbf{W}\right) = \sigma\left(\mathbf{0}\right) = \mathbf{0}$.
\end{proof}

\begin{proposition}[Convergence to unique fixed point.]
    Let $\lip{f_\ell} \leq 1 - \epsilon$ for some $\epsilon > 0$ for all $\ell=1\dots L$. Then, for $\mathbf{H} \in U \subseteq \mathbb{R}^{nd}$, we have that:

    \begin{equation}
        \left \lVert f(\mathbf{H}) \right \rVert \leq (1 - \epsilon)^L \norm{\mathbf{H}} < \norm{\mathbf{H}}.
    \end{equation}

    In particular, as $L \to \infty$, $f(\mathbf{H}) \to \mathbf{0}$.
\end{proposition}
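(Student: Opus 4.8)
The plan is to use the fact that $\mathbf{0}$ is a fixed point of every layer (Lemma \ref{lemma:fixed-point-gcn}) together with the Lipschitz bound on each layer to obtain a telescoping inequality. First I would observe that since $\sigma(0) = 0$, each layer map $f_k$ satisfies $f_k(\mathbf{0}) = \mathbf{0}$, and hence by the definition of the Lipschitz constant (applied to the pair $\mathbf{H}^{(k-1)}, \mathbf{0}$) we have $\norm{f_k(\mathbf{H}^{(k-1)})} = \norm{f_k(\mathbf{H}^{(k-1)}) - f_k(\mathbf{0})} \leq \lip{f_k}\,\norm{\mathbf{H}^{(k-1)} - \mathbf{0}} = \lip{f_k}\,\norm{\mathbf{H}^{(k-1)}}$. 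This is the single key estimate: contractivity of a layer with a fixed point at the origin forces norm decay.

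Next I would iterate this across all $L$ layers. Writing $\mathbf{H}^{(0)} = \mathbf{H}$ and $\mathbf{H}^{(k)} = f_k(\mathbf{H}^{(k-1)})$, a straightforward induction gives $\norm{f(\mathbf{H})} = \norm{\mathbf{H}^{(L)}} \leq \left(\prod_{k=1}^{L} \lip{f_k}\right) \norm{\mathbf{H}}$. Then I would substitute the hypothesis $\lip{f_k} \leq 1 - \epsilon$ for each $k$, yielding $\norm{f(\mathbf{H})} \leq (1-\epsilon)^L \norm{\mathbf{H}}$, and since $0 < 1 - \epsilon < 1$ this is strictly less than $\norm{\mathbf{H}}$ (assuming $\mathbf{H} \neq \mathbf{0}$; the case $\mathbf{H} = \mathbf{0}$ is trivial). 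One technical caveat to mention is that the Lipschitz bound from the preceding lemma requires the relevant domain to be convex and to contain the line segment between $\mathbf{H}^{(k-1)}$ and $\mathbf{0}$; I would invoke the remark already made in the excerpt that at each layer one can take a convex hull $U_k$ containing the image of layer $k-1$ (and one may assume $\mathbf{0} \in U_k$, which holds since $\mathbf{0}$ is in the image), so the segment lies inside $U_k$.

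Finally, for the limiting statement, I would note that $(1-\epsilon)^L \to 0$ as $L \to \infty$ because $|1 - \epsilon| < 1$, so $\norm{f(\mathbf{H})} \to 0$, i.e. $f(\mathbf{H}) \to \mathbf{0}$ — and uniqueness of this fixed point follows from the Banach fixed point theorem applied to the contraction $f = f_L \circ \cdots \circ f_1$ (whose Lipschitz constant is at most $(1-\epsilon)^L < 1$ by submultiplicativity). I do not anticipate a genuine obstacle here; the only thing requiring a little care is the convexity/domain bookkeeping needed to legitimately apply the Jacobian-to-Lipschitz lemma on the segment to the origin, and the non-differentiability of ReLU at $\mathbf{0}$, both of which are already addressed by the surrounding discussion (the Clarke generalized Jacobian remark) and can be handled in a sentence.
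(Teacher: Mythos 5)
Your argument is correct and matches the paper's proof in all essentials: both rest on $f_k(\mathbf{0})=\mathbf{0}$ from the fixed-point lemma, the Lipschitz bound applied against the origin, and submultiplicativity across layers (your layer-by-layer induction is just the unrolled form of the paper's one-shot use of $\lip{f} \leq \prod_k \lip{f_k}$), followed by the Banach fixed point theorem for the limit. Your extra care about the convexity of the domains $U_k$ and the ReLU non-differentiability is consistent with, and slightly more explicit than, the paper's own remarks.
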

 \begin{proof}
     By Lipschitz regularity of $f$ over $U$, we have that $\norm{f(\mathbf{x}) - f(\mathbf{y})} \leq \lip{f} \norm{\mathbf{x} - \mathbf{y}}$. Recall that by Lemma \ref{lemma:fixed-point-gcn}, we have that $f(\mathbf{0}) = \mathbf{0}$. This implies:

     \begin{align*}
         \norm{f(\mathbf{H}) - f(\mathbf{0})} &= \norm{f(\mathbf{H})} \\ 
         &\leq \lip{f} \norm{\mathbf{H}} \\
         &\leq \prod_{\ell=1}^L \lip{f_\ell} \norm{\mathbf{H}} \\
         &< \norm{\mathbf{H}},
     \end{align*}

     where in the last step we use the fact that Lipschitz constants are submultiplicative and that for all $\ell$ we have that $\lip{f_\ell} < 1$ by assumption. The final statement is immediate by the Banach fixed point theorem and by noting that $f_\ell$ all share the same fixed point $\mathbf{0}$ by Lemma \ref{lemma:fixed-point-gcn}. \end{proof}

\begin{proposition}[Contractions decrease Dirichlet energy.]
    Let $f$ be a GNN, $|E|$ be the number of edges in $G$, and $\mathbf{H} \in \mathbb{R}^{nd}$. We have the following bound:
    \begin{equation}
        \mathcal{E}(f(\mathbf{H})) \leq 2 \lvert E \rvert \prod_{\ell=1}^L \lip{f_\ell}^2 \norm{\mathbf{H}}^2.
    \end{equation}

    In particular, if $\lip{f_\ell} \leq 1 - \epsilon$ for some $\epsilon > 0$ for all $\ell=1\dots L$, then as $L \to \infty$,  $\mathcal{E}(f(\mathbf{H})) \to 0$.
\end{proposition}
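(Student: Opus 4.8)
The plan is to reduce the statement to a single scalar estimate on $\norm{f(\mathbf{H})}$, and then to absorb the graph structure into a constant via a spectral property of the Laplacian. Concretely, the Dirichlet energy is the quadratic form $\mathcal{E}(\mathbf{H}) = \mathrm{tr}(\mathbf{H}^\top \boldsymbol{\Delta}\mathbf{H})$, so for \emph{any} signal one has $\mathcal{E}(\mathbf{H}) \le \lambda_{\max}(\boldsymbol{\Delta})\,\norm{\mathbf{H}}^2$, where $\norm{\mathbf{H}}^2$ is unambiguous since the Euclidean norm of $\mathrm{vec}(\mathbf{H})$ equals the Frobenius norm of the matrix $\mathbf{H}$. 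The first thing I would do is bound $\lambda_{\max}(\boldsymbol{\Delta})$: because $\boldsymbol{\Delta}$ is positive semidefinite, its largest eigenvalue is at most its trace, and $\mathrm{tr}(\boldsymbol{\Delta}) = \sum_{u} \deg(u) = 2\lvert E\rvert$. This yields the clean inequality $\mathcal{E}(\mathbf{H}) \le 2\lvert E\rvert\,\norm{\mathbf{H}}^2$, valid for every $\mathbf{H} \in \mathbb{R}^{nd}$.

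Next I would apply this bound with $\mathbf{H}$ replaced by $f(\mathbf{H})$ and control $\norm{f(\mathbf{H})}$ using the ingredients already in hand. By Lemma~\ref{lemma:fixed-point-gcn}, $f(\mathbf{0}) = \mathbf{0}$; by submultiplicativity of Lipschitz constants, $\lip{f} \le \prod_{\ell=1}^L \lip{f_\ell}$; hence $\norm{f(\mathbf{H})} = \norm{f(\mathbf{H}) - f(\mathbf{0})} \le \lip{f}\,\norm{\mathbf{H}} \le \prod_{\ell=1}^L \lip{f_\ell}\,\norm{\mathbf{H}}$ (this is exactly the chain appearing in the proof of Proposition~\ref{prop:convergence-to-fixed-point}). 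Squaring and substituting into the energy bound gives $\mathcal{E}(f(\mathbf{H})) \le 2\lvert E\rvert \prod_{\ell=1}^L \lip{f_\ell}^2\,\norm{\mathbf{H}}^2$, which is the claimed inequality.

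Finally, for the limiting statement I would simply note that if $\lip{f_\ell} \le 1-\epsilon$ for all $\ell$, then $\prod_{\ell=1}^L \lip{f_\ell}^2 \le (1-\epsilon)^{2L} \to 0$ as $L \to \infty$, so the right-hand side vanishes and therefore $\mathcal{E}(f(\mathbf{H})) \to 0$, consistent with the convergence $f(\mathbf{H}) \to \mathbf{0}$ from Proposition~\ref{prop:convergence-to-fixed-point}.

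The only mildly delicate point is the first step: obtaining the constant $2\lvert E\rvert$ cleanly. A naive edge-by-edge estimate ($\norm{\mathbf{h}_u - \mathbf{h}_v}^2 \le 2\norm{\mathbf{h}_u}^2 + 2\norm{\mathbf{h}_v}^2$, summed over edges) loses a factor of two relative to the trace bound on $\lambda_{\max}(\boldsymbol{\Delta})$, so I would route through the Laplacian's spectrum rather than through a combinatorial sum, and be careful to keep the matrix/vector norm conventions aligned. Everything after that is routine bookkeeping on top of results already established.
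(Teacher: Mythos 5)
Your proof is correct, but it takes a genuinely different route from the paper's. The paper argues edge by edge: it expands $\mathcal{E}(f(\mathbf{H})) = \sum_{i\sim j}\norm{f(\mathbf{H})|_i - f(\mathbf{H})|_j}^2$, bounds each summand by $\norm{f(\mathbf{H})|_i}^2 + \norm{f(\mathbf{H})|_j}^2$, then by $2\norm{f(\mathbf{H})}^2$, and finally invokes the same Lipschitz/fixed-point chain you use to pull out $\prod_\ell \lip{f_\ell}^2\norm{\mathbf{H}}^2$ with the factor $2\lvert E\rvert$ coming from the edge count. You instead absorb the graph structure spectrally, via $\mathcal{E}(\mathbf{X}) = \mathrm{tr}(\mathbf{X}^\top\boldsymbol{\Delta}\mathbf{X}) \le \lambda_{\max}(\boldsymbol{\Delta})\norm{\mathbf{X}}^2 \le \mathrm{tr}(\boldsymbol{\Delta})\norm{\mathbf{X}}^2 = 2\lvert E\rvert\norm{\mathbf{X}}^2$, and only then apply the Lipschitz bound to $\norm{f(\mathbf{H})}$. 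Your instinct about the ``mildly delicate point'' is well placed: the paper's first inequality, $\norm{a-b}^2 \le \norm{a}^2 + \norm{b}^2$, is not true in general (it fails whenever $\langle a,b\rangle < 0$, e.g. $b = -a$); the correct pointwise estimate is $\norm{a-b}^2 \le 2\norm{a}^2 + 2\norm{b}^2$, which, combined with the crude bound $\norm{f(\mathbf{H})|_i} \le \norm{f(\mathbf{H})}$, yields $4\lvert E\rvert$ rather than $2\lvert E\rvert$. Your spectral route recovers the constant $2\lvert E\rvert$ cleanly and rigorously (the trace bound on $\lambda_{\max}$ is exactly the sum of degrees), so it is arguably the preferable proof of the stated inequality; the paper's combinatorial route is more elementary and makes the per-edge contraction visible, but as written it needs the repair just described. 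The limiting statement is handled identically in both.
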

\begin{proof}
We denote by $f(\mathbf{H})|_i \in \mathbb{R}^d$, the $d$-dimensional evaluation of f(H) at node $i$. We make use of the inequality $\norm{f(\mathbf{H})|_i} \leq \norm{\mathbf{H}}$.

     \begin{align*}
         \mathcal{E}(f(\mathbf{H})) &= \sum_{i \sim j}\norm{f(\mathbf{H})|_i - f(\mathbf{H})|_j}^2 \\
         &\leq \sum_{i \sim j}\norm{f(\mathbf{H})|_i}^2+ \norm{f(\mathbf{H})|_j}^2 \\
         &\leq 2\sum_{i \sim j} \norm{f(\mathbf{H})}^2 \\
         &\leq 2\lip{f}^2\sum_{i \sim j} \norm{\mathbf{H}}^2 \\
         &= 2\lip{f}^2\lvert E \rvert \norm{\mathbf{H}}^2 \\
         &\leq 2\prod_{\ell=1}^L \lip{f_\ell}^2 \lvert E \rvert \norm{\mathbf{H}}^2. 
     \end{align*}

     It is then clear that, if $\lip{f_\ell} \leq 1 - \epsilon$ for some $\epsilon > 0$ for all $\ell=1\dots L$, $\prod_{\ell=1}^L \lip{f_\ell}^2 \leq (1-\epsilon)^{2L} \to 0$ as $L \to \infty$. \\

     We note that a similar procedure was used in \cite{oono2020graph,cai2020note} for the specific case of GCNs. Our procedure is more general, as we use the Lipschitz constant of the network, which only requires knowledge of the input-output Jacobian of each layer of the network. In the case of GCN, this would encapsulate the dynamics of the adjacency and weight matrix, and also allows us to understand how any GNN (no matter how complex its internal structure) affects the Dirichlet energy, without requiring the use of heavy assumptions or simplifications for mathematical tractability.
 \end{proof}

\section{kGNN-SSM: A simple method to combine high connectivity and non-dissipativity.}\label{app:kgnn_ssm}

To test our assumption on more complex downstream tasks, we construct a minimal model that combines high connectivity with non-dissipativity. To guarantee high connectivity, we employ a k-hop aggregation scheme. In particular, each node $i$ at layer $k$ will aggregate information as
\begin{align}
\label{Eq:dynamic_mpnn}
a_{i,k}^{(k)} &= \psi^{k}\Big(\{h_j^{(k)} : j \in \mathcal{N}_{k}(i)\}\Big),
\end{align}
where
\begin{equation}\label{eq:k-hop}
    \mathcal{N}_{k}(i) := \{ j \in V : d_{G}(i,j) = k\}\notag
\end{equation}
and $d_G : V\times V \rightarrow \mathbb{R}_{\geq 0}$ is the length of the minimal walk connecting nodes $i$ and $j$. This approach avoids a large amount of information being squashed into a single vector, and is more in line with the recurrent paradigm. We note that this scheme is similar to \cite{ding2024recurrent}, but in this case we do not consider different block or parameter sharing, and our recurrent mechanism is based on an untrained SSM layer.

We denote a GNN endowed with this rewiring scheme and wrapped with our SSM layer as \texttt{kGNN-SSM}.

\section{Experimental Details}\label{app:experimental_details}
In this section, we provide additional experimental details, including dataset and experimental setting description and employed hyperparameters. 

\paragraph{Over-smoothing task.} In this task, we aim to analyze the dynamics of the Dirichlet energy across three different graph topologies: Cora \cite{cora}, Texas \cite{Pei2020GeomGCNGG}, and a grid graph. The Cora dataset is a citation network consisting of 2,708 nodes (papers) and 10,556 edges (citations). The Texas dataset represents a webpage graph with 183 nodes (web pages) and 499 edges (hyperlinks). Lastly, the grid graph is a two-dimensional $10\times10$ regular grid with 4-neighbor connectivity. For all three graphs, node features are randomly initialized from a normal distribution with a mean of 0 and variance of 1. These node features are then propagated over 80 layers (or iterations%, in the case of ADGN, SWAN, adnd PHDGN
) using untrained GNNs to observe the energy dynamics.

\paragraph{Graph Property Prediction.} This experiment consists of predicting two node-level (\ie eccentricity and single source shortest path) and one graph-level (\ie graph diameter) properties on synthetic graphs sampled from different distribution, \ie Erd\H{o}s–R\'{e}nyi, Barabasi-Albert, grid, caveman, tree, ladder, line, star, caterpillar, and lobster. Each graph contains between 25 and 35 nodes, with nodes assigned with input features sampled from a uniform distribution in the interval $[0,1)$. The target values correspond to the predicted graph property. The dataset consists of 5,120 graphs for training, 640 for validation and 1,280 for testing. 

We employ the same experimental setting and data outlined in \cite{gravina2022anti}. Each model is designed as three components: the encoder, the graph convolution, and the readout. We perform hyperparameter tuning via grid search, optimizing the Mean Square Error (MSE). The models are trained using the Adam optimizer for a maximum of 1500 epochs, with early stopping based on the validation error, applying a 100 epochs patience. For each model configuration, we perform 4 training runs with different weight initializations and report the average results. We report in Table~\ref{tab:hyperparams} the employed grid of hyperparameters. 

\paragraph{Long-Range Graph Benchmark.} We consider the \texttt{peptides-func} and \texttt{peptides-struct} datasets from \cite{dwivedi2022LRGB}. Both datasets consist of 15,535 graphs, where each graph corresponds to 1D amino acid chain (\ie peptide), where nodes are the heavy atoms of the peptide and edges are the bonds between them. \texttt{peptides-func} is a multi-label graph classification dataset whose objective is to predict the peptide function, such as antibacterial and antiviral function. \texttt{peptides-struct} is a multi-label graph regression dataset focused on predicting the 3D structural properties of peptides, such as the inertia of the molecule and maximum atom-pair distance. 

We use the same experimental setting and splits from \cite{dwivedi2022LRGB}. We perform hyperparameter tuning via grid search, optimizing the Average Precision (AP) in the Peptides-func and Mean Absolute Error (MAE) in the Peptide-struct. The models are trained using the AdamW optimizer for a maximum of 300 epochs. For each model configuration, we perform four training runs with different weight initializations and report the average results. We report in Table~\ref{tab:hyperparams} the employed grid of hyperparameters.

\paragraph{Tested Hyperparameters.} In Table~\ref{tab:hyperparams} we report the grid of hyperparameters employed in our experiments by our method. 

\begin{table}
\centering
\caption{The grid of hyperparameters employed during model selection for the graph property prediction tasks (\emph{GraphProp}), and \texttt{peptides-func} and \texttt{peptides-struct}.}
\vspace{1mm}
\label{tab:hyperparams}
\begin{tabular}{lll}
\toprule
\multirow{2}{*}{\textbf{Hyperparameters}}  & \multicolumn{2}{c}{\textbf{Values}}\\\cmidrule{2-3}
                & {\bf \emph{GraphProp}}   & \texttt{peptides-} (\texttt{func}, \texttt{struct})\\\midrule
Optimizer       & $\;$ Adam                     & \hspace{1cm}AdamW\\
Learning rate   & $\;$ 0.003                    & \hspace{1cm}0.001\\
Weight decay    & $\;$ $10^{-6}$                & \hspace{1cm}- \\
N. Layers   & $\;$ 10                & \hspace{1cm}40,17\\
embedding dim   & $\;$ 20, 30               & \hspace{1cm}105 \\
$\sigma$        & $\;$ tanh                     & \hspace{1cm}ReLU\\
$\text{eig}(\Lambda)$ & $\;$  0.5, 0.75, 1.0     & \hspace{1cm}
1.0 \\
%\revise{$\gamma$ (conv)} & 0.5, 0.75, 1.0, 1.25     & \revise{ALVARO}\\
%\revise{$\gamma$  (inp)} & 0.5, 0.75, 1.0, 1.25     & \revise{ALVARO}\\
%weight sharing  & True              & \revise{ALVARO}\\
\bottomrule
\end{tabular}
\end{table}

All experiments were run on a single NVIDIA RTX4090 GPU.

\section{Additional empirical results}\label{app:additional}
In this section, we propose additional empirical results on over-smoothing and over-squashing, as well as the eigendistribution of the layerwise Jacobians of various standard GNNs.

%\subsection{Additional MPNN Jacobians}\label{app:additional_mpnn_jacobian}

%Here, we present in Figure~\ref{fig:test_full_synthetic} the eigendistribution of the layerwise Jacobians of GCN, GIN \cite{xu2018powerful} and Gated-GCN \cite{bresson2017residual}. Across the board, we observe similar contraction effects in the Jacobian as those presented in the main paper, with a long number of eigenvalues accumulating at zero, with no significant changes in the distribution during training. However, the maximum eigenvalues for both GIN and Gated-GCN are much larger than those of GCN. We also compare a nonlinear feedforward network and a nonlinear GCN in Figure \ref{fig:mlp-v-gcn}.

%\begin{figure}[h]
%	\centering
%	\includegraphics[width=1\linewidth]{Figures/hist_eig.pdf}
%	\caption{Eigenvalues of layer-to-layer Jacobian of different GNN models.}
%\label{fig:test_full_synthetic}
%\end{figure}

%\begin{figure}[h]
	%\centering
    
    %\includegraphics[ width=0.35\linewidth]{}
    %\caption{Histogram of eigenvalue modulus of the layerwise Jacobian for a nonlinear convolutional and a nonlinear feedforward layer.}  
    %\label{fig:mlp-v-gcn}
%\end{figure}

\subsection{Additional Over-Smoothing Results}\label{app:additional_over-smoothing}
Here, we include additional results related to over-smoothing experiments. Figure \ref{fig:oversmoothing_lin} shows the effect of $||\Lambda||_2$ in GCN-SSM on different graph structures (similarly \Cref{fig:oversmoothing_gat} for GAT-SSM), showing that lower Jacobian norms leads to a rapid decay of the Dirichlet energy, whereas values closer to one result in a more stable energy evolution. This result is also confirmed by Figure~\ref{fig:oversmoot_cora_adgn_swan_phdgn} and Figure~\ref{fig:oversmoot_multimodel}. The former presents the vectorized Jacobian for ADGN \cite{gravina2022anti}, SWAN \cite{gravina_swan}, and PHDGN \cite{heilig2024injecting} on Cora, while the latter the Dirichlet energy evolution of different models on different topologies. Notably, in Figure~\ref{fig:oversmoot_multimodel}, ADGN, SWAN, and PHDGN exhibit stable Dirichlet energy across layers, and Figure~\ref{fig:oversmoot_cora_adgn_swan_phdgn} reveals that these Jacobian norms are close to one.
These results confirm that stable dynamics also ensure a non-decaying Dirichlet energy, effectively preventing over-smoothing.

\begin{figure}[h]
	\centering
		\includegraphics[width=0.32\linewidth, trim={0 0 0 7mm},clip]{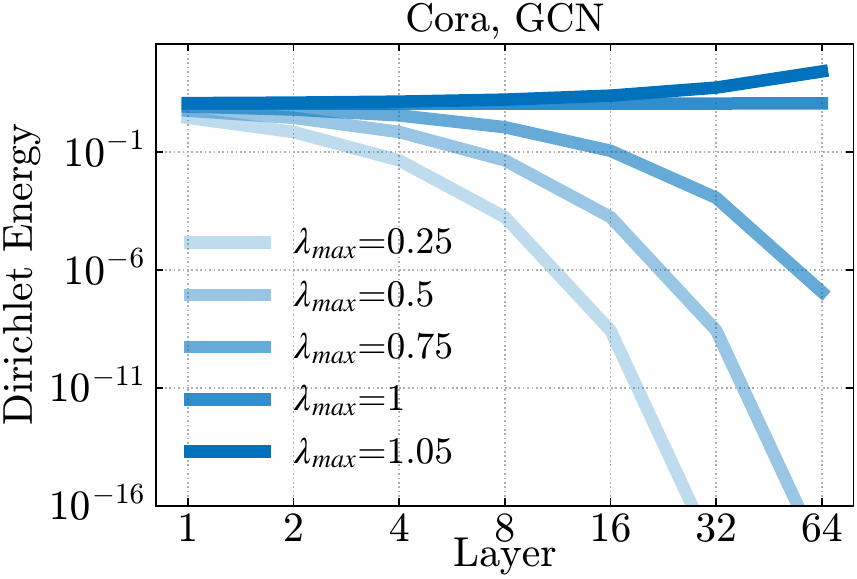}
	   \includegraphics[width=0.32\linewidth, trim={0 0 0 7mm},clip]{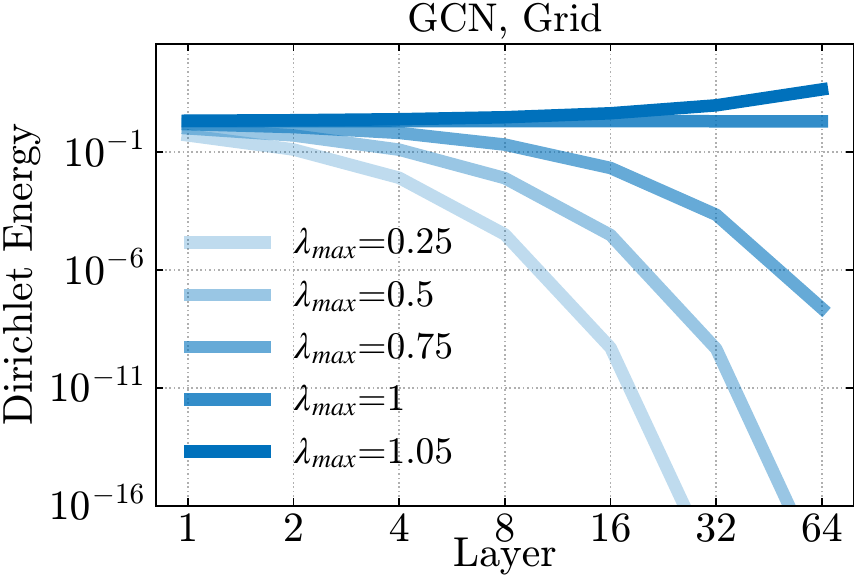}
		\includegraphics[width=0.32\linewidth, trim={0 0 0 7mm},clip]{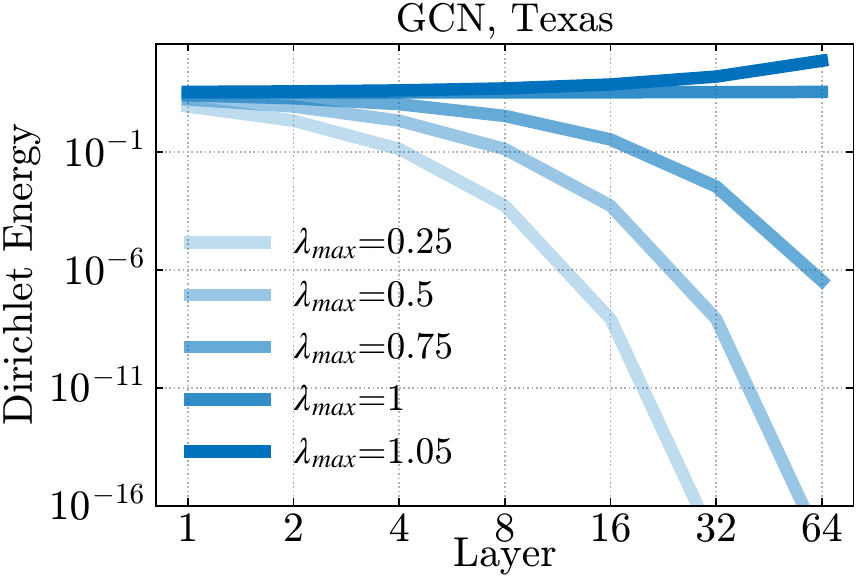}
	\caption{Dirichlet Energy evolution of GCN-SSM for different $||\Lambda||_2$ on different graph topologies. \textbf{Left:} Cora. \textbf{Middle:} Grid graph. \textbf{Right:} Texas. }
    \label{fig:oversmoothing_lin}
\end{figure}

\begin{figure}[h]
	\centering
		\includegraphics[width=0.32\linewidth, trim={0 0 0 7mm},clip]{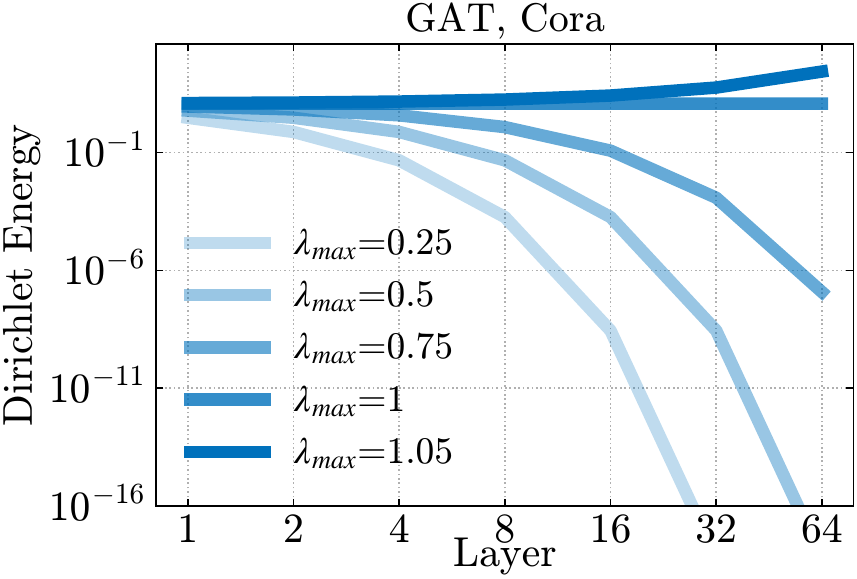}
	   \includegraphics[width=0.32\linewidth, trim={0 0 0 7mm},clip]{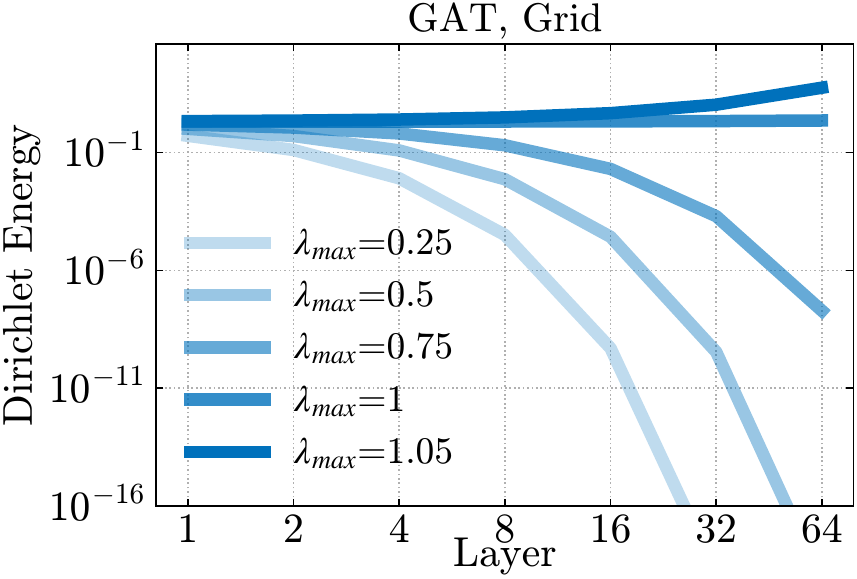}
		\includegraphics[width=0.32\linewidth, trim={0 0 0 7mm},clip]{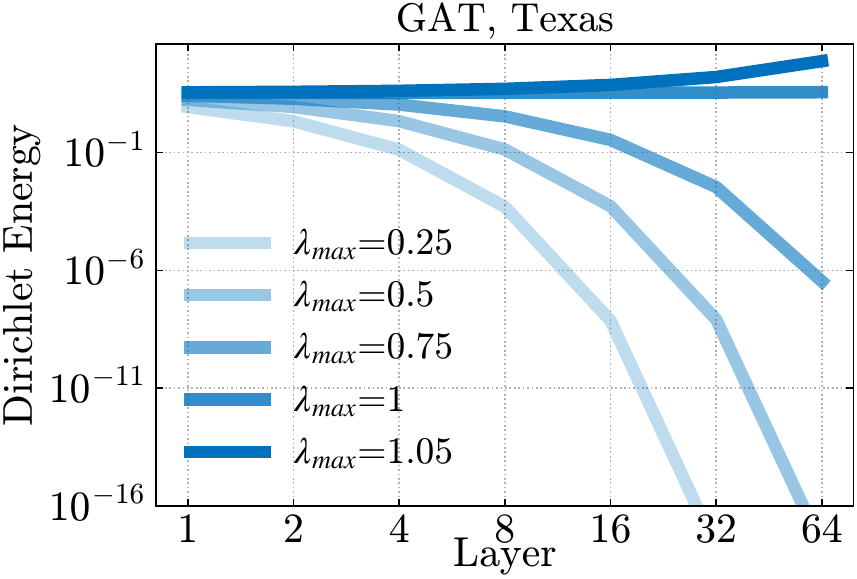}
	\caption{Dirichlet Energy evolution of GAT-SSM for different $||\Lambda||_2$ on different graph topologies. \textbf{Left:} Cora. \textbf{Middle:} Grid graph. \textbf{Right:} Texas. }
    \label{fig:oversmoothing_gat}
\end{figure}

\begin{figure}[h]
    \centering
    \includegraphics[width=0.32\linewidth]{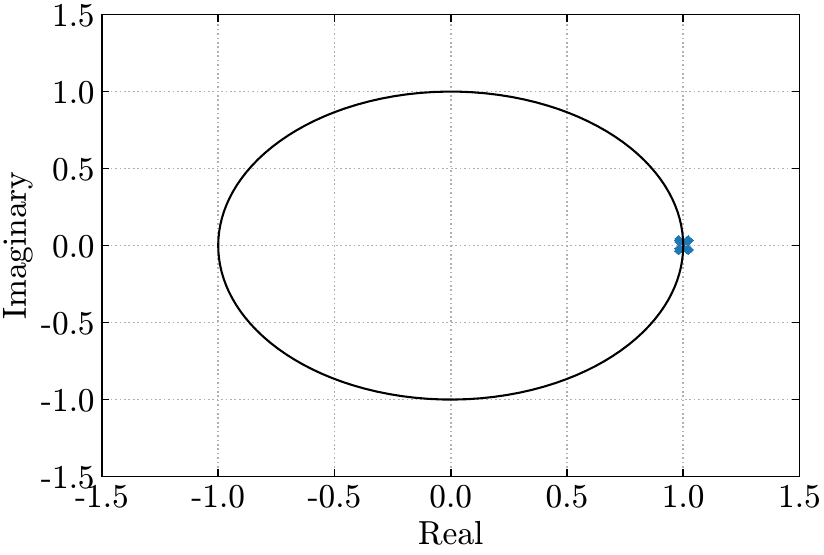}
    \includegraphics[width=0.32\linewidth]{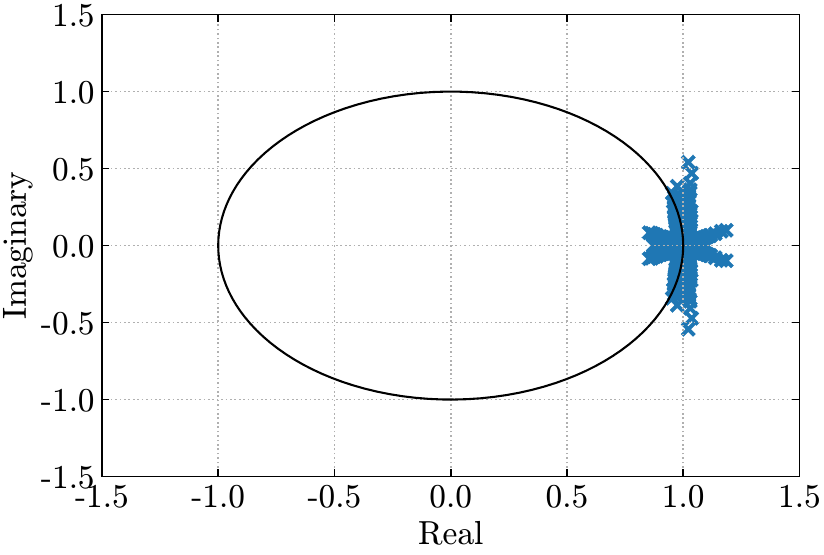}
    \includegraphics[width=0.32\linewidth]{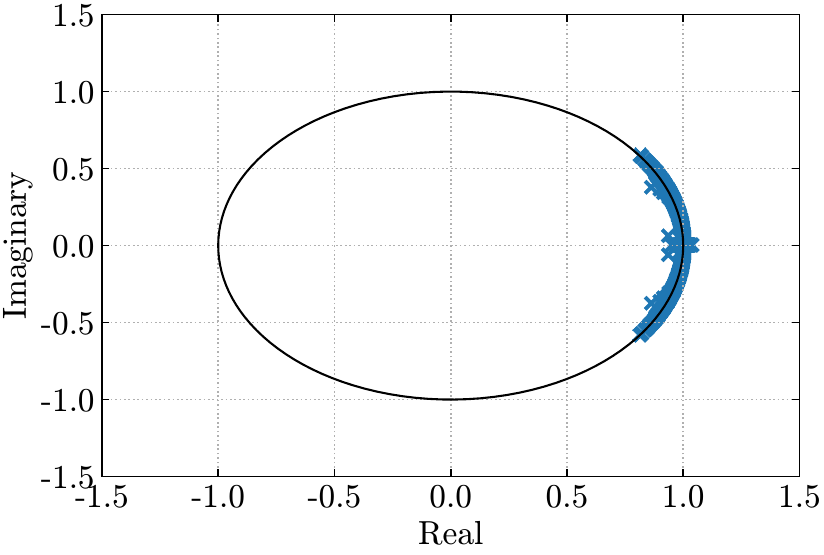}
    \caption{Vectorized Jacobian for ADGN \cite{gravina2022anti}, SWAN \cite{gravina_swan}, and PHDGN \cite{heilig2024injecting} on Cora. \textbf{Left:} ADGN. \textbf{Middle:} SWAN. \textbf{Right:} PHDGN.}
    \label{fig:oversmoot_cora_adgn_swan_phdgn}
\end{figure}

\begin{figure}[h]
    \centering
    \includegraphics[width=0.32\linewidth]{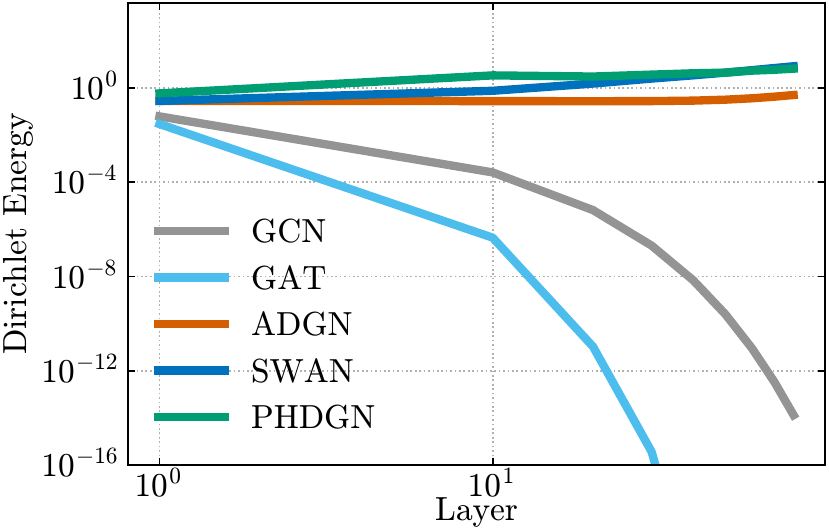}
    \includegraphics[width=0.32\linewidth]{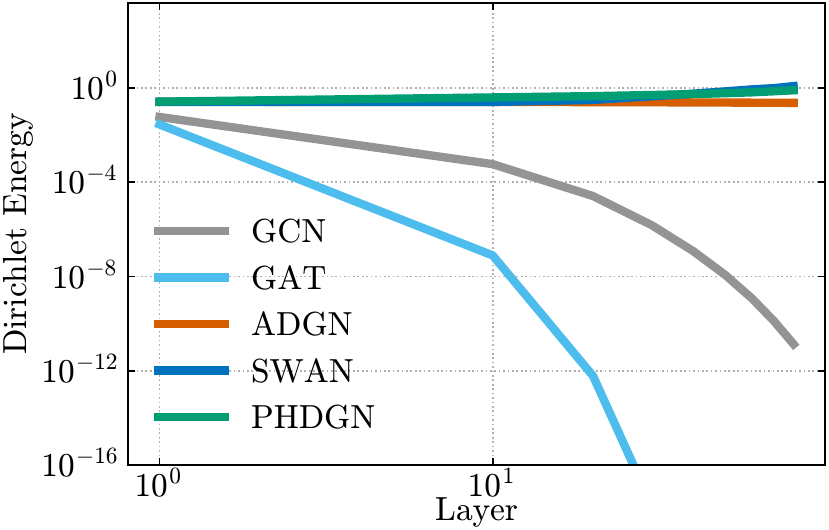}
    \includegraphics[width=0.32\linewidth]{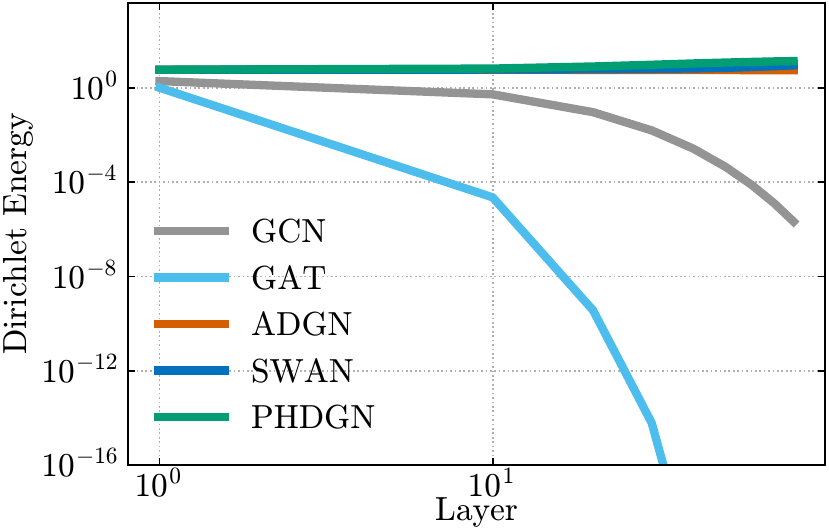}
    \caption{Dirichlet Energy evolution of different models on different topologies. \textbf{Left:} Cora. \textbf{Middle:} Grid graph. \textbf{Right:} Texas.}
    \label{fig:oversmoot_multimodel}
\end{figure}

\subsection{Link between delay and vanishing gradients}
\label{app:drew}

Here, we show how the delay term in \cite{gutteridge2023drew} is directly related to preventing vanishing gradients. We do so by showing that adding the delay term to a GCN is effective at preventing over-smoothing, see Figure \ref{fig:oversmoot_gcn}, as well as by checking the histogram of eigenvalues of the Jacobian, see Figure  \ref{fig:drew-hist}. \Cref{fig:ringtransfer_drew} shows the effect of $||\Lambda||_2$ in DRew-SSM on the performance on the RingTransfer task, showing that lower Jacobian norms leads to a rapid performance decay, \ie poor long-range propagation.

\begin{figure}[H]
	\centering
		\includegraphics[width=0.32\linewidth]{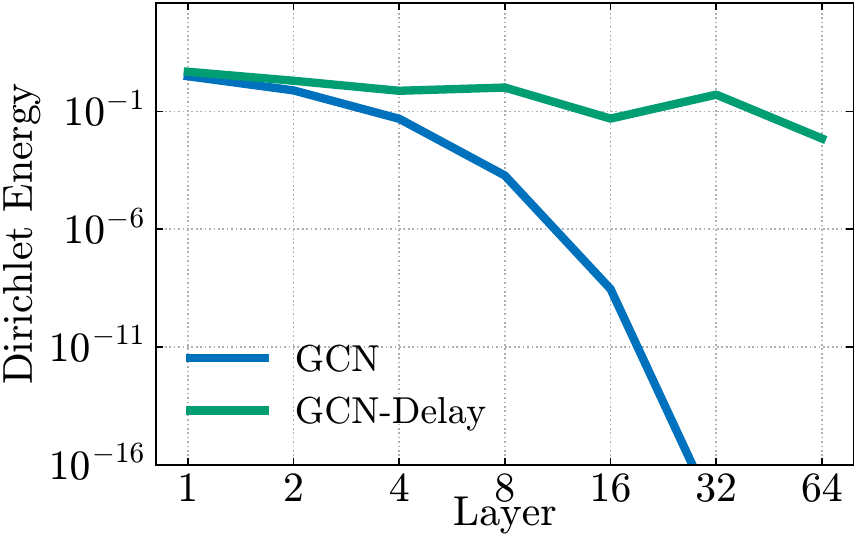}
	   \includegraphics[width=0.32\linewidth]{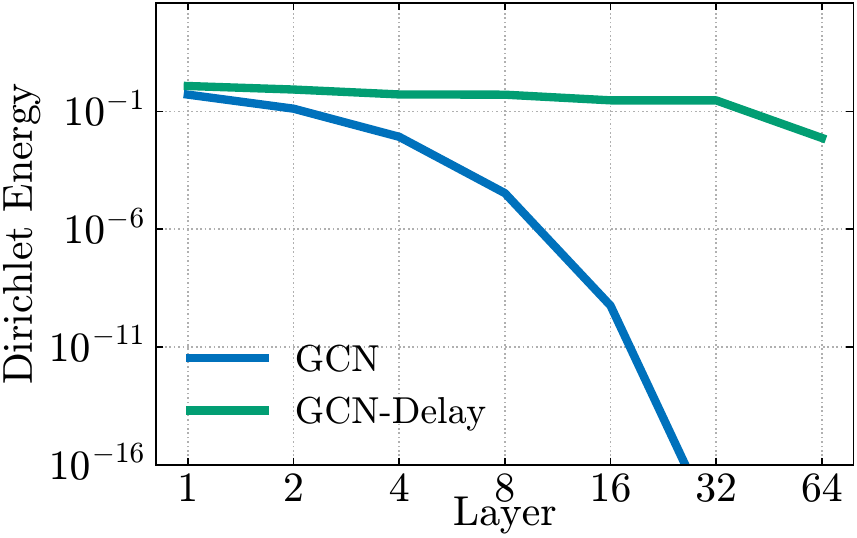}
		\includegraphics[width=0.32\linewidth]{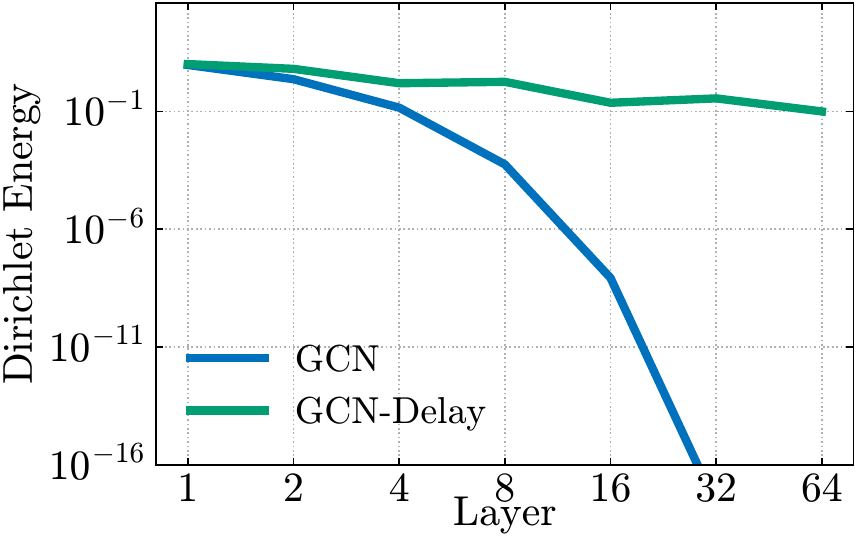}
	\caption{Dirichlet Energy evolution of GCN (+delay mechanism) on different topologies. \textbf{Left:} Cora. \textbf{Middle:} Grid graph. \textbf{Right:} Texas.}
    \label{fig:oversmoot_gcn}
\end{figure}

\begin{figure}[H]
	\centering
        \includegraphics[ width=0.225\linewidth]{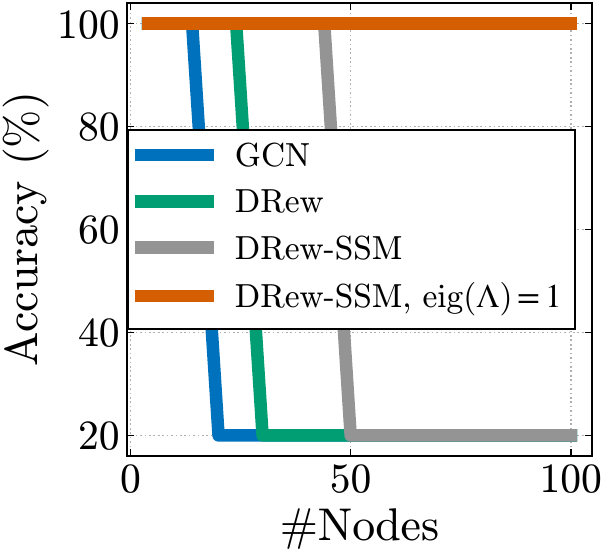}\includegraphics[width=0.2\linewidth]{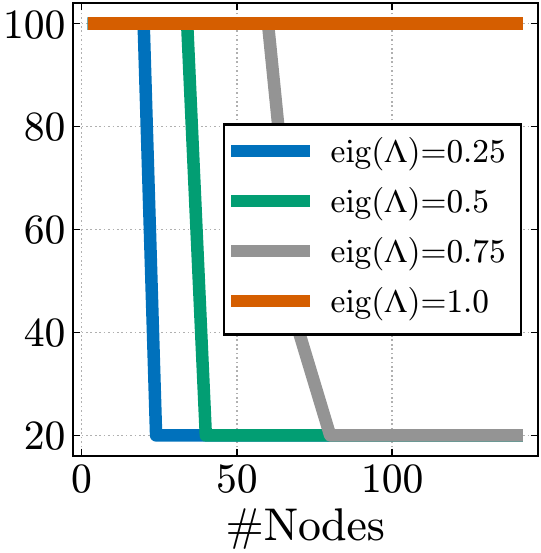}        \caption{\textbf{Left:} Performance on the RingTransfer task for DRew \cite{gutteridge2023drew}. \textbf{Right: }Effect of dissipativity on performance.}
        \label{fig:ringtransfer_drew}
        \vspace{-0.5cm}
\end{figure}

\begin{figure}[H]
	\centering
    \includegraphics[width=0.4\linewidth]{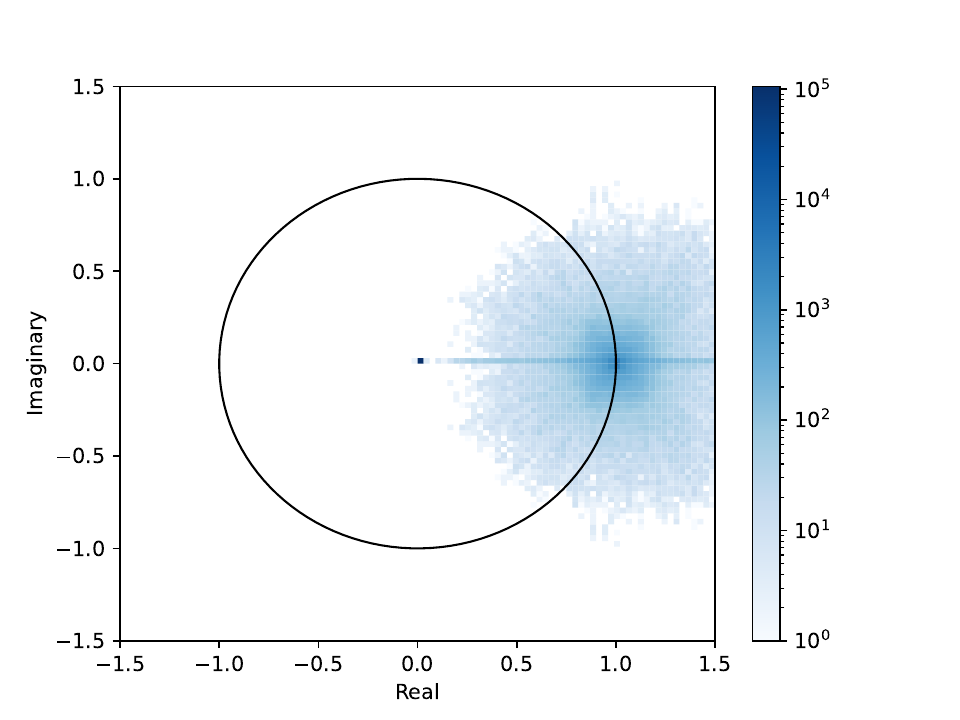}
	\caption{Eigenvalue distribution of DRew-GCN+delay on the ring transfer task.}
    \label{fig:drew-hist}
\end{figure}

\subsection{Graph Property Prediction}\label{app:additional_gpp}

\textbf{Edge-of-chaos behavior and long-range propagation.}  
To further support our claim that mitigating gradient vanishing is key to strong long-range performance, Figure~\ref{fig:gpp} shows each method’s average Jacobian eigenvalue distance to the edge-of-chaos (EoC) region. The figure demonstrates that methods such as ADGN \cite{gravina2022anti} and SWAN \cite{gravina_swan}, which remain closer to EoC, effectively propagate information over large graph radii, resulting in superior performance across all three tasks. Figure~\ref{fig:gpp_ablation} presents an ablation study on multiple ADGN variants, controlled by the hyperparameter \(\gamma\), which governs the positioning of the Jacobian eigenvalues (\(\gamma < 0\) places them outside the stability region, \(\gamma > 0\) inside, and \(\gamma = 0\) on the unit circle). Notably, regardless of the initial value of \(\gamma\), ADGN consistently converges towards the EoC region as performance improves.

\begin{figure}[h]
	\centering
	\includegraphics[width
    =0.84\linewidth]{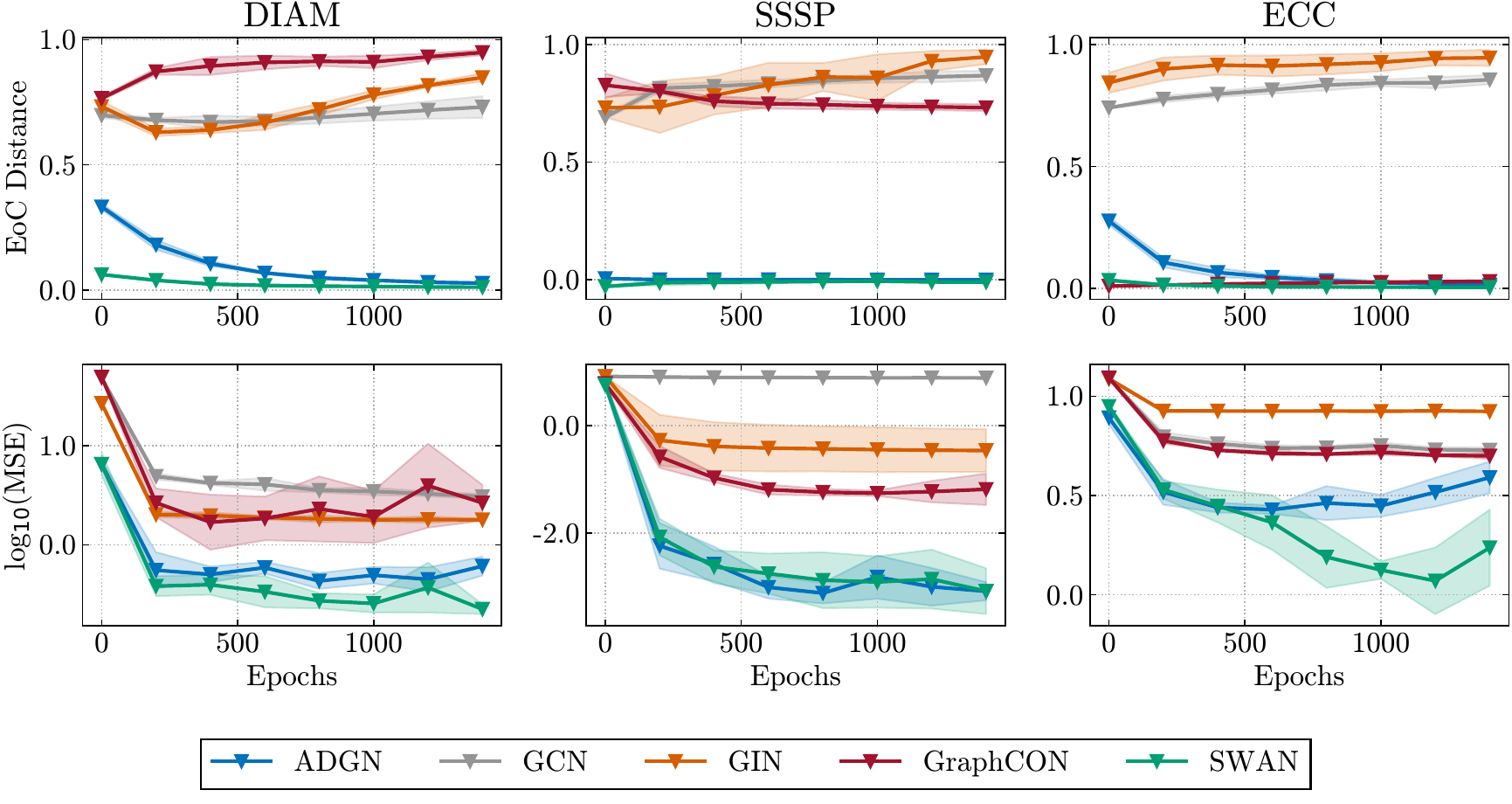}
	\caption{Performance on Graph Property Prediction tasks and average Jacobian eigenvalue distance to the edge of chaos (EoC) region for different GNN models.}
	\label{fig:gpp}
\end{figure}

\begin{figure}[h]
	\centering
	\includegraphics[width=0.92\linewidth]{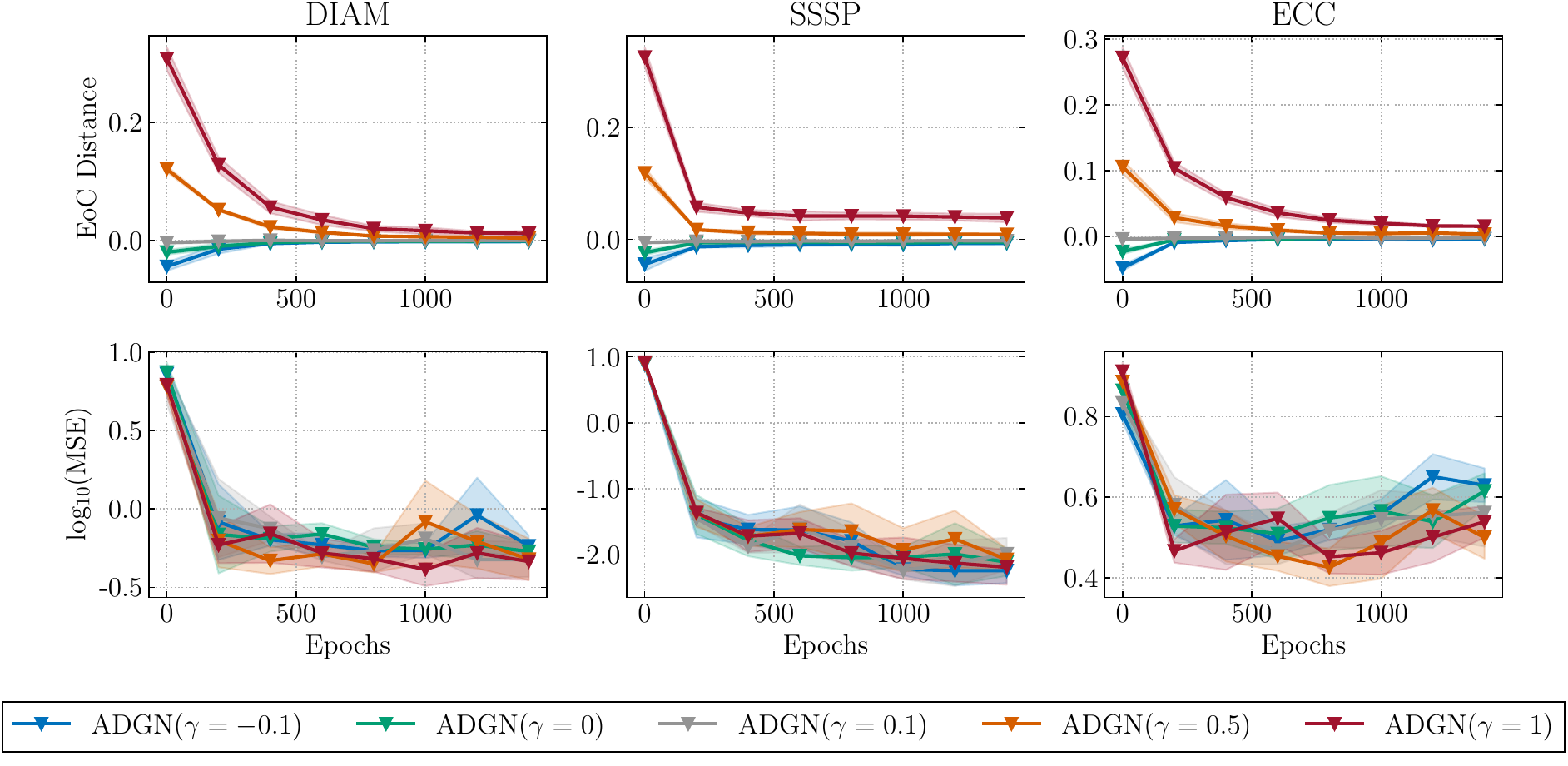}
	\caption{Performance on Graph Property Prediction tasks and average Jacobian eigenvalue distance to the edge of chaos (EoC) region for different ADGN dynamics, \ie $\gamma\in[-0.1, 1]$. Negative values of $\gamma$ places the eigenvalues of the ADGN Jacobian outside the stability region, otherwise for positive values.}
	\label{fig:gpp_ablation}
\end{figure}

\textbf{Complete results.} Table~\ref{tab:complete_results_gpp} compares our method on graph property prediction tasks against a range of state-of-the-art approaches, including GCN~\cite{kipf2017semisupervised}, GAT~\cite{Velickovic2018GraphAN}, GraphSAGE~\cite{hamilton2017inductive}, GIN~\cite{xu2018powerful}, GCNII~\cite{gcnii}, DGC~\cite{DGC}, GRAND~\cite{chamberlain2021grand}, GraphCON~\cite{rusch2022graph}, ADGN~\cite{gravina2022anti}, SWAN~\cite{gravina_swan}, PH-DGN~\cite{heilig2024injecting}, and DRew~\cite{gutteridge2023drew}. Our method achieves exceptional results across all three tasks, consistently surpassing MPNN baselines, differential equation-inspired GNNs, and multi-hop GNNs. These findings underscore how combining powerful model dynamics with improved connectivity provides substantial benefits in tasks that require long-range information propagation.

\begin{table}[H]
\centering
\caption{Mean test set {\small$log_{10}(\mathrm{MSE})$}($\downarrow$) and std averaged on 4 random weight initializations on Graph Property Prediction tasks. The lower, the better. Baseline results are reported from \cite{gravina2022anti, gravina_swan, heilig2024injecting}.
}
\label{tab:complete_results_gpp}
\vspace{1mm}
\footnotesize
\begin{tabular}{lccc}
\toprule
\textbf{Model} &\texttt{Diameter} & \texttt{SSSP} & \texttt{Eccentricity} \\\midrule
\textbf{MPNNs} \\
$\,$ GCN            & 0.7424$_{\pm0.0466}$ & 0.9499$_{\pm0.0001}$ & 0.8468$_{\pm0.0028}$ \\
$\,$ GAT            & 0.8221$_{\pm0.0752}$ & 0.6951$_{\pm0.1499}$           & 0.7909$_{\pm0.0222}$  \\
$\,$ GraphSAGE      & 0.8645$_{\pm0.0401}$ & 0.2863$_{\pm0.1843}$           &  0.7863$_{\pm0.0207}$\\
$\,$ GIN            & 0.6131$_{\pm0.0990}$ & -0.5408$_{\pm0.4193}$          & 0.9504$_{\pm0.0007}$\\
$\,$  GCNII          & 0.5287$_{\pm0.0570}$ & -1.1329$_{\pm0.0135}$          & 0.7640$_{\pm0.0355}$\\
\midrule
\multicolumn{4}{l}{\textbf{Differential Equation inspired GNNs}} \\
$\,$ DGC            & 0.6028$_{\pm0.0050}$ & -0.1483$_{\pm0.0231}$          & 0.8261$_{\pm0.0032}$\\
$\,$ GRAND          & 0.6715$_{\pm0.0490}$ & -0.0942$_{\pm0.3897}$          & 0.6602$_{\pm0.1393}$ \\
$\,$ GraphCON       & 0.0964$_{\pm0.0620}$ & -1.3836$_{\pm0.0092}$ & 0.6833$_{\pm0.0074}$\\
$\,$ ADGN & -0.5188$_{\pm0.1812}$ & -3.2417$_{\pm0.0751}$ & 0.4296$_{\pm0.1003}$  \\
$\,$ SWAN & -0.5981$_{\pm0.1145}$  & -3.5425$_{\pm0.0830}$  & -0.0739$_{\pm0.2190}$ \\
$\,$ PH-DGN   & -0.5473$_{\pm0.1074}$ & \textbf{-4.2993$_{\pm0.0721 }$} & -0.9348$_{\pm0.2097}$\\
\midrule
 \multicolumn{4}{l}{\textbf{Graph Transformers}} \\
 $\,$ GPS & -0.5121$_{\pm0.0426}$ &  -3.5990$_{\pm0.1949}$  & 0.6077$_{\pm0.0282}$\\
\midrule
\textbf{Multi-hop GNNs} \\
$\,$ DRew-GCN %(khop=10)  
& -2.3692$_{\pm0.1054}$ & -1.5905$_{\pm0.0034}$ & -2.1004$_{\pm0.0256}$\\
$\quad$ %DRew-GCN (khop=10)  
+ delay &  -2.4018$_{\pm0.1097}$ & -1.6023$_{\pm0.0078}$ & -2.0291$_{\pm0.0240}$\\
\midrule
\textbf{Our} \\
$\,$ GCN-SSM   & -2.4312$_{\pm0.0329}$ &    -2.8206$_{\pm0.5654}$  & -2.2446$_{\pm0.0027}$\\
$\quad$ + $\text{eig}(\Lambda)\approx 1$ & \underline{-2.4442}$_{\pm0.0984}$ & -3.5928$_{\pm0.1026}$ & \underline{-2.2583}$_{\pm0.0085}$ \\
$\quad$ + k-hop         & \textbf{-3.0748}$_{\pm0.0545}$ & \underline{-3.6044}$_{\pm0.0291}$ & \textbf{-4.2652}$_{\pm0.1776}$ \\

% \midrule
% \textbf{Our}\\
% $\,$ kGCN-SSM & \underline{-3.0748}$_{\pm0.0545}$ & {-3.6044}$_{\pm0.0291}$ & \underline{-4.2652}$_{\pm0.1776}$ \\
% $\quad$ $+d$ & -2.4748$_{\pm0.0894}$ & \underline{-3.7419}$_{\pm0.0121}$  &  \textbf{-4.4130}$_{\pm0.2730}$\\
% $\quad$ $-\text{eig}(\Lambda)\approx 1$ & \textbf{-3.0814}$_{\pm0.0589}$ & -3.4692$_{\pm0.0553}$ & -4.0711$_{\pm0.1458}$\\
% $\quad$ $-\text{SSM}$ &  -2.8768$_{\pm0.1235}$ & -1.5843$_{\pm0.0015}$ & -2.5179$_{\pm0.0149}$\\
% $\quad$ $-\text{k-hop}$         & 0.7424$_{\pm0.0466}$ & 0.9499$_{\pm0.0001}$ & 0.8468$_{\pm0.0028}$ \\
\bottomrule      
\end{tabular}
\end{table}

\subsection{Additional comments on LRGB tasks}\label{app:additional_LRGB}

In our experiments with the LRGB tasks, we observe that the \texttt{peptides-func} task exhibits significantly longer-range dependencies than the \texttt{peptides-struct} task. Notably, the \texttt{peptides-struct} task performs best when the model is not initialized at the edge of chaos and requires fewer layers. Conversely, on \texttt{peptides-func} the model performs best when it is set to be at the edge of chaos, and shows a monotonic performance increase with additional layers, with optimal results achieved when using forty layers.

Furthermore, we highlight that while our experiments with a small hidden dimension adhere to the parameter budget established in \cite{dwivedi2022LRGB}, increasing the hidden dimension ($d \uparrow$) to 256 causes us to exceed the 500k parameter budget limit, even though our model maintains the same number of parameters as a regular GCN. While this budget is a useful tool to benchmark different models, we highlight that this restriction results in models running with fewer layers and small hidden dimensions. However, a large number of layers is crucial for effective long-range learning in graphs that are not highly connected, while increasing the hidden dimension also directly affects the bound in Theorem \ref{theo:sensitivity_digiovanni}. As such, we believe that this parameter budget indirectly benefits models with higher connectivity graphs, inadvertently hindering models that do not perform edge addition.

To further strengthen the comparison on real-world tasks, in \Cref{tab:gred_peptide} we report results against GRED \cite{ding2024recurrent}, a method inspired by state-space models (SSMs). We note that our model achieves superior performance on peptides-func, while GRED performs better on peptide-struct.
Although GRED shares a few conceptual similarities with our approach, we emphasize several key differences: (i) GRED aggregates information from multiple neighborhoods at each layer, whereas our model aggregates from a single-hop neighborhood per layer. (ii) GRED’s SSM update operates inward, beginning with distant nodes and moving toward the root node, whereas in our model each additional layer aggregates from progressively more distant neighbors, moving outward. This propagation pattern is more consistent with the standard MPNN framework. (iii) Our model employs a fixed and non-learned state matrix, removing the need for additional constraints to guarantee stable learning dynamics. (iv) Unlike GRED, we do not perform weight sharing when applying k-hop aggregation.
Finally, we stress that the primary goal of our model design is not to achieve state-of-the-art performance (as is the case with GRED) but rather to construct a minimal and controllable framework that allows us to isolate and study specific phenomena of interest.

\begin{table}[h]
\centering
\caption{Comparison with GRED on the LRGB dataset. $d\uparrow$ adds latent dims.}
\begin{tabular}{lcc}
\toprule
\multirow{2}{*}{\textbf{Model}} & \texttt{Pept-func} & \texttt{Pept-struct} \\
                           & {\scriptsize AP$\uparrow$} & {\scriptsize MAE$\downarrow$ ($\times10^{-2}$)} \\
\midrule
kGCN-SSM       & 69.02$_{\pm0.22}$ & 28.98$_{\pm0.32}$\\
$\,+d\uparrow$ & \textbf{72.12$_{\pm0.27}$} & 27.01$_{\pm0.07}$\\
\midrule
GRED & 70.85$_{\pm0.27}$ & \textbf{25.03$_{\pm0.19}$} \\
\bottomrule
\end{tabular}
\label{tab:gred_peptide}
\end{table}

\subsection{Scalability Results}\label{app:scalability-results}

In terms of runtime, GNN-SSM was deliberately designed to retain the simplicity and efficiency of its backbone (e.g., GCN). Our formulation introduces only %GNN-SSM has 
two additional (fixed) matrices to store w.r.t.\ its backbone, and involves %only an 
one 
element-wise addition and two extra matrix multiplications per layer. %This has a negligible effect on training time, thus our model retains the complexity of its backbone see Table \ref{tab:runtime_comparison} below.
These additions have a negligible impact on memory and runtime. Consequently, our model retains the complexity of its backbone while substantially improving performance, see \Cref{tab:runtime_comparison,tab:arxiv} below.

\begin{table}[h]
\centering
\caption{Epoch Time (sec.) for GCN and GCN-SSM when performing node classification of the Cora dataset.}
\begin{tabular}{rcc}
\toprule
\textbf{Layers} & \textbf{GCN} & \textbf{GCN-SSM} \\
\midrule
5  & 0.009 & 0.009 \\
10 & 0.015 & 0.017 \\
20 & 0.025 & 0.031 \\
30 & 0.041 & 0.046 \\
40 & 0.053 & 0.051 \\
50 & 0.066 & 0.075 \\
60 & 0.078 & 0.089 \\
\bottomrule
\end{tabular}
\label{tab:runtime_comparison}
\end{table}

\begin{table}[h]
\centering
\caption{Accuracy on \texttt{ogbn-arxiv}\cite{Hu2020OpenGB}.}
\begin{tabular}{lcc}
\toprule
\textbf{Model} & \texttt{ogbn-arxiv} \\
\midrule
\textbf{MPNNs} \\
$\,$ GAT         & 72.01$_{\pm0.20}$ \\
$\,$ GCN         & 70.84$_{\pm0.23}$ \\
$\,$ GraphSAGE   & 71.49$_{\pm0.27}$ \\
\midrule
\textbf{Graph Transformers} \\
$\,$ NodeFormer          & 59.90$_{\pm0.42}$ \\
$\,$ GraphGPS            & 70.92$_{\pm0.04}$ \\
$\,$ GOAT                & 72.41$_{\pm0.40}$ \\
$\,$ EXPHORMER + GCN	    & 72.44$_{\pm0.28}$ \\
$\,$ SPEXPHORMER         & 70.82$_{\pm0.24}$\\
\midrule
\textbf{Our} \\
$\,$ GCN-SSM & \textbf{72.49$_{\pm0.16}$}\\
\bottomrule
\end{tabular}
\label{tab:arxiv}
\end{table}

\subsection{State-Space Matrices Sensitivity}\label{app:sensitivity-ssm-matrices}

Empirically, we have that sharing $\Lambda$ across layers did not alter performance: using a single fixed $\Lambda$ yielded the same accuracy as training each $\Lambda_i$ with identical dynamics. Fixing $\Lambda$ ensures the system remains at the edge of chaos during training. Preserving this prior under a trainable $\Lambda$ would require additional constraints, in line with stabilization techniques used in RNN architectures, see Table \ref{tab:accuracy_comparison} below.

\begin{table}[h]
\centering
\caption{Performance comparison for different design choices when performing node classification of the Cora dataset.}
\label{tab:accuracy_comparison}
\begin{tabular}{rccc}
\toprule
\textbf{Layers} & \textbf{GCN-SSM (shared)} & \textbf{GCN-SSM (no sharing)} & \textbf{GCN-SSM (trained $\boldsymbol{\Lambda}$)} \\
\midrule
5  & 76.3 & 78.3 & 71.3 \\
10 & 78.5 & 78.5 & 71.1 \\
20 & 81.2 & 77.8 & 49.9 \\
30 & 78.1 & 79.6 & 33.8 \\
40 & 77.5 & 78.5 & 31.9 \\
50 & 76.4 & 74.6 & 31.9 \\
60 & 77.8 & 77.4 & 31.9 \\
\bottomrule
\end{tabular}\end{table}

\subsection{Preliminary Results on Heterophilic benchmarks}
To further complement the empirical evaluation of our method, we conducted experiments on three heterophilic tasks from \cite{platonov2023}: Amazon Ratings, Roman Empire, and Minesweeper. We compared GCN-SSM against the original GCN results reported in \cite{platonov2023}, as well as a GCN with the same depth as our model (denoted GCN (Optimal L)). The depth for each task was tuned based on validation performance, resulting in optimal layer counts of 4 for Amazon Ratings, 9 for Roman Empire, and 12 for Minesweeper.
Results, averaged over three random seeds, are reported in \Cref{tab:heterophilic} and show that GCN-SSM outperforms the original GCN by approximately 8 accuracy points on average, and exceeds GCN (Optimal L) by 31 accuracy points.

\begin{table}[h]
\centering
\caption{Performance comparison of our GCN-SSM with respect to the original GCN results reported in \cite{platonov2023}, as well as a GCN with the same depth as our model (denoted GCN (Optimal L)) on the heterophilic datasets from \cite{platonov2023}.}
\label{tab:heterophilic}
\begin{tabular}{lccc}
\toprule
\multirow{2}{*}{\textbf{Model}} & \textbf{Amazon Ratings} & \textbf{Roman Empire} & \textbf{Minesweeper} \\
& {\scriptsize Acc$\uparrow$} & {\scriptsize Acc$\uparrow$} & {\scriptsize AUC$\uparrow$}  \\
\midrule
GCN (\cite{platonov2023})            & 47.70$_{\pm0.63}$ & 73.69$_{\pm0.74}$ & 89.75$_{\pm0.52}$ \\
GCN (Optimal L)                      & 47.57$_{\pm0.67}$ & 33.83$_{\pm0.41}$ & 60.84$_{\pm0.89}$ \\
\midrule
GCN-SSM (Optimal L)   & \textbf{51.72$_{\pm0.33}$} & \textbf{88.37$_{\pm0.60}$} & \textbf{96.02$_{\pm0.52}$} \\
\bottomrule
\end{tabular}\end{table}

\section{Additional Details and Comments on Over-Smoothing}\label{app:oversmoothing_explanation}

\subsection{Choice of Feature Distance Measure}

Throughout the paper we adopt the \emph{unnormalized Dirichlet energy}
\[
\mathcal{E}(\mathbf{H})
\;=\;
\operatorname{tr}\, \bigl(\mathbf{H}^\top\mathbf{L}\,\mathbf{H}\bigr)
\;=\;
\sum_{(u,v)\in \mathcal{E}}
\|\mathbf{h}_u - \mathbf{h}_v\|_2^2,
\]
This choice aligns with several well cited papers in the over‐smoothing literature \cite{rusch2022graph,rusch2023survey}.  Moreover, the connection we unravel between vanishing gradients and over‐smoothing also explains why techniques borrowed from recurrent architectures \cite{rusch2022graph,gravina2022anti} are expected to mitigate feature collapse in GNNs.

While our theoretical analysis focuses on \(\mathcal{E}(\mathbf{H})\) for mathematical simplicity, we will now also evaluate an alternative smoothness measure to ensure our insights generalize beyond this choice. In particular, we use the smoothness measure employed in other over-smoothing works \cite{wu2023demystifying,scholkemper2024residual}
\[
\mu(\mathbf{H})
=\bigl\|\mathbf{H}-\mathbf{1}\,\boldsymbol{\gamma}_{\mathbf{H}}\bigr\|_F,
\quad
\boldsymbol{\gamma}_{\mathbf{H}}
=\frac{\mathbf{1}^\top\mathbf{H}}{N},
\]

%Next, w
We report in \Cref{fig:oversmoot_newmeasure} empirical experiments on this measure, which empirically shows that the qualitative trends predicted by our unnormalized‐energy theory also manifest under this alternative metric.  Although formal equivalence between these energies and our collapse proofs is not explored, this empirical alignment provides strong justification for the broader applicability of our analysis to the broader literature on oversmoothing.

\begin{figure}[H]
	\centering
	\includegraphics[width=0.32\linewidth, trim={0 0 0 7mm},clip]{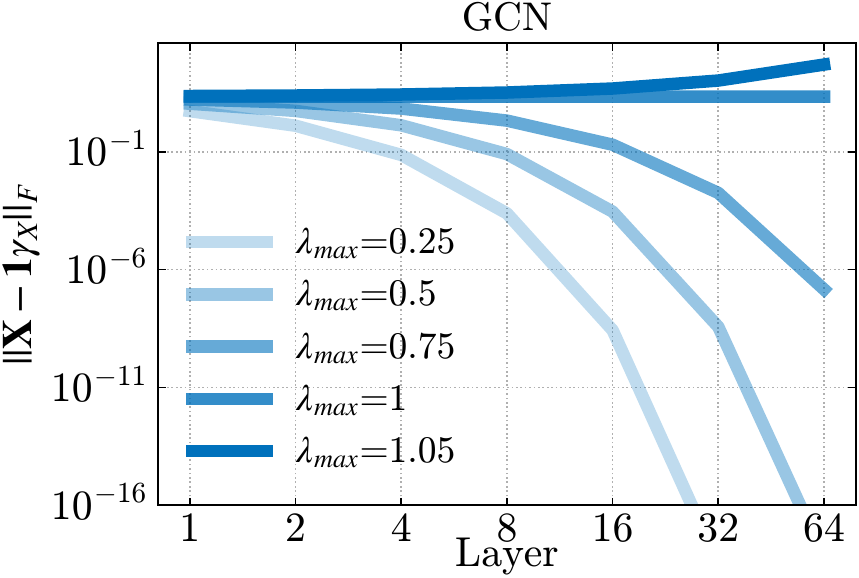}	
	\caption{GCN-SSM on Cora with $\mu(\mathbf{H})$ smoothing measure.}
    \label{fig:oversmoot_newmeasure}
\end{figure}

\subsection{The effect of the Jacobian spectrum on node classification performance}

In order to assess how the spectral properties of the layer‐wise Jacobian influence node‐classification performance, we carried out the following two experiments on the Cora citation network.

First, we systematically varied the spectrum of the diagonal $\Lambda$ matrix in our GCN-SSM backbone.  For each number of layers $n\in\{5,10,20,30,40,50,60\}$, we set the maximal eigenvalue of $\Lambda$ to one of $\{1,\,0.66,\,0.33\}$, retrained the model, and recorded the best test accuracy.  As shown in Table~\ref{tab:lambda-spectrum}, moving the spectrum of $\Lambda$ away from unity leads to a pronounced degradation in accuracy, indicating that keeping the Jacobian eigenvalues near one is crucial for stable and effective information propagation across many layers.

Second, to isolate the contribution of the SSM backbone itself, we performed an analogous spectrum‐shaping experiment directly on the weight matrix $\mathbf{W}$ of a vanilla GCN (i.e.\ without any SSM components).  We scaled $\mathbf{W}$ so that its spectral radius lay near the edge of stability (spectral norm $\approx 1$), but otherwise left the model architecture and training unchanged.  Despite matching the Jacobian stability regime, these “spectrally tuned” vanilla GCNs failed to achieve the accuracy improvements seen with the full GCN-SSM backbone (first column of Table~\ref{tab:lambda-spectrum}).  This confirms that merely tuning $\mathbf{W}$’s spectrum is insufficient: the structured state-space dynamics provided by the SSM backbone are essential for the observed performance gains.

\begin{table}[ht]
  \centering
  \caption{Node‐classification accuracy on Cora when varying the spectrum of the backbone Jacobian ($\Lambda$) and comparing to vanilla GCN models whose weight matrix $\mathbf{W}$ is spectrally tuned near the edge of stability.}
  \begin{tabular}{ccccc}
    \toprule
    $n_{\mathrm{layers}}$ & $\text{eig}(\Lambda)=1$ & $\text{eig}(\Lambda)=0.66$ & $\text{eig}(\Lambda)=0.33$ & $\text{eig}(W)=1$ (GCN) \\
    \midrule
    5  & 81.30 & 78.10 & 74.00 & 71.90 \\
    10 & 78.70 & 61.90 & 56.00 & 33.80 \\
    20 & 78.90 & 48.00 & 30.20 & 31.80 \\
    30 & 80.00 & 39.70 & 18.00 & 22.30 \\
    40 & 77.90 & 34.60 & 20.30 & 25.60 \\
    50 & 77.70 & 29.10 & 24.90 & 23.80 \\
    60 & 77.70 & 20.50 & 20.40 & 29.10 \\
    \bottomrule
  \end{tabular}
  
  \label{tab:lambda-spectrum}
\end{table}

\subsection{On Residual Connections and Gating}

Several prior works have employed residual connections in GNNs to counteract over-smoothing, for example, see \cite{li2019deepgcns}. In fact, these residual‐GNN designs can be viewed as a special case of our approach, corresponding to the choice $\boldsymbol{\Lambda} = \mathbf{I}$. Under this constraint, the model outperforms a standard, memoryless GCN (see Fig.~\ref{fig:over-smoothing-results}), but only by accumulating node features in an unstructured way.  

In order to guarantee stable propagation dynamics, the spectral properties of the propagation matrix \(\mathbf{B}\) play an important role: by appropriately “damping” incoming signals, one can stabilize the system’s behavior and prevent feature collapse or explosion. 

To highlight the role of \(\mathbf{B}\), we conducted an ablation study on the Cora dataset in which the propagation matrix was removed from our method. \Cref{tab:ablation_on_B} reports the results, showing that for deeper networks, the inclusion of \(\mathbf{B}\) consistently improves performance. This behavior can be understood theoretically from the perspective of the Jacobian. While the eigenvalue structure induced by 
 places the system near the edge of stability (with eigenvalues close to 1 in modulus), it is B that controls how much these dynamics "disperse" around the critical point (1,0) shown in \Cref{fig:eigenvalues-jac}. In this setting, too much dispersion can lead to instability.

\begin{table}[ht]
  \centering
  \caption{Node‐classification accuracy on Cora with and without the propagation matrix \(\mathbf{B}\) across different number of layers.}
  \begin{tabular}{ccc}
    \toprule
    $n_{\mathrm{layers}}$ & \textbf{Without \(\mathbf{B}\)} & \textbf{With \(\mathbf{B}\)}\\
    \midrule
2	& 80.2 & 	79.6\\
4	& 77.8 & 	80.4\\
8	& 80.0 & 	79.8\\
16	& 74.1 & 	79.8\\
32	& 31.9 & 	79.9\\
64	& 31.9 & 	79.9\\
128	& 14.9 & 	77.9\\
256	& 13.0 & 	80.2\\
    \bottomrule
  \end{tabular}
  
  \label{tab:ablation_on_B}
\end{table}

\section{Supplementary Related Work and Limitations}\label{app:supplementary_related_work}
\paragraph{Long-range propagation and depth GNNs.} Learning long-range dependencies on graphs involves effectively propagating and preserving information across distant nodes% while mitigating issues such as over-smoothing and over-squashing
. Despite recent advancements, ensuring effective long-range communication between nodes remains an open problem \cite{shi2023expositionoversquashingproblemgnns}. Several techniques have been proposed to address this issue, including graph rewiring methods, such as \cite{diffusion_improves, topping2021understanding, karhadkar2022fosr, barbero2023locality, gutteridge2023drew, 10.5555/3618408.3618515}, which modify the graph topology to enhance connectivity and facilitate information flow. Similarly, Graph Transformers enhance the connectivity to capture both local and global interactions, as demonstrated by \cite{ying2021transformers, dwivedi2021generalization, graphtransformer, san, graphgps, wu2023difformer}. Other approaches incorporate non-local dynamics by using a fractional power of the graph shift operator \cite{maskey2024fractional}, leverage quantum diffusion kernels \cite{markovich2023qdc}, regularize the model's weight space \cite{gravina2022anti, gravina_swan,gravina_randomized_adgn}, exploit port-hamiltonian dynamics \cite{heilig2024injecting}, or use a graph adaptive method based on a learnable ARMA framework \cite{grama}. Some methods which have increased the depth of GNNs include \cite{li2019deepgcns,liu2021eignn}. Alternative methods combine the spatial and spectral perspective of graph propagation to enable long-range communication between nodes, either by using spatially and spectrally parametrized graph filters \cite{s2gnn} or spectral filters based on Chebyshev polynomials \cite{hariri2025returnchebnetunderstandingimproving}. Recent work has also proposed new datasets to test long range dependencies \cite{liang2025towards}, as well as ways to theoretically measure them \cite{bambergermeasuring}.

Despite the effectiveness of these methods in learning long-range dependencies on graphs , they primarily introduce solutions to mitigate the problem rather than establishing a unified theoretical framework that defines its underlying cause.% We note that a recent attempt towards explaining linear GCNs through vanishing gradients 

\paragraph{Vanishing gradients in sequence modelling and deep learning.} One of the primary challenges in training recurrent neural networks lies in the vanishing (and sometimes exploding) gradient problem, which can hinder the model’s ability to learn and retain information over long sequences. In response, researchers have proposed numerous architectures aimed at preserving or enhancing gradients through time. Examples include Unitary RNNs \cite{arjovsky2016unitary}, Orthogonal RNNs \cite{henaff2016}, Linear Recurrent Units \cite{orvieto2023resurrecting}, and Structured State Space Models \cite{gu2021, gu2023mamba}. By leveraging properties such as orthogonality, carefully designed parameterizations, or alternative update mechanisms, these models seek to alleviate gradient decay and capture longer-range temporal relationships more effectively. We highlight recent work that also extends these insights to transformer based architectures \cite{barbero2025llms}.

\paragraph{Dynamical systems and physics inspired neural networks.} Since the introduction of Neural ODEs in \citep{chen2018neural}, there have been various methods that employ ideas of dynamical systems within neural networks, including continuous-time methods \cite{rubanova2019latent, norcliffe2020second, calvo2023beyond, calvo2024partially, bergna2024uncertainty, moreno2024rough, calvo2025observation} or state-space approaches \cite{chang2023low,duran2024outlier,duran2024bone,grama}. Within graph neural networks, we highlight PDE-GCN \cite{eliasof2021pde}, GRAND \cite{chamberlain2021grand}, BLEND \cite{chamberlain2021beltrami} and Neural Sheaf Diffusion \cite{bodnar2022neural} in the static graph domain, while CTAN \cite{gravina_ctan} and TG-ODE \cite{gravina_tgode} in the temporal graph domain \cite{gravina_dynamic_survey}. Other approaches which leverage other type of physics-inspired inductive biases such as topological latent space modelling include \cite{HGCN,Gu2019LearningMR,borde2023projections, saez2024neural}.

\paragraph{Broader Impact, Limitations and Future Work.} We believe our work opens up a number of interesting directions that aim to bridge the gap between graph and sequence modeling. In particular, we hope that this work will encourage researchers to adapt vanishing gradient mitigation methods from the sequence modeling community to GNNs, and conversely explore how graph learning ideas can be brought to recurrent models. In our work, we mostly focused on GCN and GAT type updates, but we believe that our analysis can be extended to understand how different choices of updates and non-linearities affect training dynamics, which we leave for future work.

\end{document}